\newcommand\bSI[1]{{\small[\SI{}{#1}]}}
\newlength\unitwdth
\newlength\numwdth
\newlength\tdima
\newcommand\SIdescr[2]{%
    \setlength\tdima{\linewidth}%
    \addtolength\tdima{\@totalleftmargin}%
    \addtolength\tdima{-\dimen\@curtab}%
    \addtolength\tdima{-\unitwdth}%
    \addtolength\tdima{-\numwdth}%
    \parbox[t]{\tdima}{%
        #1
        \leaders\hbox{$\m@th\mkern \@dotsep mu\hbox{\tiny.}\mkern \@dotsep mu$}%
        \hfill
        \ifhmode\strut\fi
        \makebox[0pt][l]{%
            \makebox[\unitwdth][l]{}%
            \makebox[\numwdth][r]{#2}}}}
\newcommand{\R}{\mathbb{R}}
\newcommand{\N}{\mathbb{N}}
\newcommand{\Z}{\mathbb{Z}}
\newcommand{\bmat}[2]{\left[ \begin{array}{#1} #2 \end{array} \right]}
\renewcommand{\epsilon}{\varepsilon}
\newcommand{\eps}{\varepsilon}
\renewcommand{\rho}{\varrho}
\newcommand{\Arch}{{\mathcal{A}}}
\newcommand{\RNcal}{\mathcal{RN}}
\newcommand{\Ycal}{\mathcal{Y}}
\newcommand{\Ccal}{\mathcal{C}}
\newcommand{\act}[1]{R_\sigma\left(#1\right)}
\DeclareMathOperator{\argmax}{arg\,max}
\DeclareMathOperator{\vcdim}{VCdim}
\DeclarePairedDelimiter{\ceil}{\lceil}{\rceil}
\DeclarePairedDelimiter{\floor}{\lfloor}{\rfloor}
\DeclarePairedDelimiterX{\norm}[1]{\lVert}{\rVert}{#1}
\DeclarePairedDelimiterX{\pabs}[1]{\lvert}{\rvert}{#1}
\newcommand{\sNN}[3]{\RNcal_{(#1,#2,#3), \sigma}}
\pgfplotsset{compat=1.5}
\renewcommand{\todo}[2][]{\tikzexternaldisable\@todo[#1]{#2}\tikzexternalenable}
\tikzset{
    regular/.style = {shape=circle, draw, align=center, color=black, fill=blue,font=\tiny},
}
\newtheorem{theorem}{Theorem}[section]
\newtheorem*{theorem*}{Theorem}
\newtheorem{remark}[theorem]{Remark}
\newtheorem{definition}[theorem]{Definition}
\newtheorem{corollary}[theorem]{Corollary}
\newtheorem*{remark*}{Remark}
\newtheorem*{proposition*}{Proposition}
\newtheorem{example}[theorem]{Example}
\numberwithin{equation}{section}
\definecolor{darkcandyapplered}{rgb}{0.64, 0.0, 0.0}
\title{Expressivity of Deep Neural Networks\footnote{This review paper will appear as a book chapter in the book ``Theory of Deep Learning'' by Cambridge University Press. 
}}
\author{Ingo G\"uhring \thanks{These authors contributed equally.} \thanks{Institute of Mathematics, Technical University of Berlin,  Stra\ss{}e des 17.~Juni 136, 10623 Berlin, Germany; E-Mail: {$\{$\texttt{guehring, raslan, kutyniok}$\}$\texttt{@math.tu-berlin.de}}} 
\and Mones Raslan \footnotemark[2] \footnotemark[3]  
\and Gitta Kutyniok\footnotemark[3] \thanks{Faculty - Electrical Engineering and Computer Science, Technical University of Berlin
 / Department of Physics and Technology, University of Troms\o}}
\begin{document}
\maketitle

\noindent

\begin{abstract}
In this review paper, we give a comprehensive overview of the large variety of approximation results for neural networks. Approximation rates for classical function spaces as well as benefits of deep neural networks over shallow ones for specifically structured function classes are discussed. While the main body of existing results is for general feedforward architectures, we also review approximation results for convolutional, residual and recurrent neural networks. 
\end{abstract}

\textbf{Keywords:} {approximation, expressivity, deep vs. shallow, function classes}

\def\quality{60}
\def\b{2}

\section{Introduction }\label{sec:Intro}
While many aspects of the success of deep learning still lack a comprehensive mathematical explanation, the approximation properties\footnote{Throughout the paper, we will interchangeably use the term \emph{approximation} theory and \emph{expressivity} theory.} of neural networks have been studied since around 1960 and are relatively well understood. \emph{Statistical learning theory} formalizes the problem of approximating - in this context also called \emph{learning} - a function from a finite set of samples. Next to statistical and algorithmic considerations, approximation theory plays a major role for the analysis of statistical learning problems. We will clarify this in the following by introducing some fundamental notions.\footnote{\cite{cucker2007learning} provides a concise introduction to statistical learning theory from the point of view of approximation theory.}

Assume that $\mathcal{X}$ is an \emph{input space} and $\mathcal{Y}$ a \emph{target space}, $\mathcal{L}:\mathcal{Y}\times \mathcal{Y} \to [0,\infty]$ is a \emph{loss function} and $\mathbb{P}_{(\mathcal{X},\mathcal{Y})}$ a (usually unknown) \emph{probability distribution} on some $\sigma$-algebra of $\mathcal{X}\times \mathcal{Y}$. We then aim at finding a minimizer of the risk functional\footnote{With the convention that $\mathcal{R}(f)=\infty$ if the integral is not well-defined.}
$$
    \mathcal{R}:\mathcal{Y}^{\mathcal{X}}\to [0,\infty],~f\mapsto  \int_{\mathcal{X}\times \mathcal{Y}} \mathcal{L}\left(f(x),y\right) \mathrm{d}\mathbb{P}_{(\mathcal{X},\mathcal{Y})}(x,y),
$$ 
induced by $\mathcal{L}$ and $\mathbb{P}_{(\mathcal{X},\mathcal{Y})}$ (where $\mathcal{Y}^{\mathcal{X}}$ denotes the set of all functions from $\mathcal{X}$ to $\mathcal{Y}$). That means we are looking for a function $\hat f$ with
$$ 
    \hat{f} = \mathrm{argmin} \left\{\mathcal{R}(f):~f\in \mathcal{Y}^\mathcal{X}  \right\}.
$$ 
In the overwhelming majority of practical applications, however, this optimization problem turns out to be infeasible due to three reasons:

\begin{enumerate}
    \item[(i)] The set $\mathcal{Y}^{\mathcal{X}}$ is simply too large, such that one usually fixes a priori some \emph{hypothesis class} $\mathcal{H}\subset \mathcal{Y}^{\mathcal{X}}$ and instead searches for
$$
    \hat{f}_{\mathcal{H}} = \mathrm{argmin} \left\{ \mathcal{R}(f) : ~f\in \mathcal{H} \right\}.
$$
In the context of deep learning, the set $\mathcal{H}$ consists of \emph{deep neural networks}, which we will introduce later. 
\item[(ii)] Since $\mathbb{P}_{(\mathcal{X},\mathcal{Y})}$ is unknown, one can not compute the risk of a given function $f$. Instead, we are given a \emph{training set} $\mathcal{S}=((x_i,y_i))_{i=1}^m$, which consists of $m\in \N$ i.i.d. samples drawn from $\mathcal{X}\times\mathcal{Y}$ with respect to $\mathbb{P}_{(\mathcal{X},\mathcal{Y})}$. Thus, we can only hope to find the  minimizer of the \emph{empirical risk} $\mathcal{R}_\mathcal{S}(f) = \frac{1}{m}\sum_{i=1}^m \mathcal{L}(f(x_i),y_i))$ given by
$$
    \hat{f}_{\mathcal{H},\mathcal{S}} = \mathrm{argmin} \left\{ \mathcal{R}_{\mathcal{S}}(f):~f\in \mathcal{H} \right\}.
$$
\item[(iii)] In the case of deep learning one needs to solve a complicated non-convex optimization problem to find $\hat{f}_{\mathcal{H},\mathcal{S}}$, which is called \emph{training} and can only be done approximately. 
\end{enumerate}

Denoting by $\hat{f}^*_{\mathcal{H},\mathcal{S}}\in \mathcal{H}$ the approximative solution, the overall error 
\begin{align*}
 \big|\mathcal{R}\big(\hat{f}\big)-\mathcal{R}\big(\hat{f}^*_{\mathcal{H},\mathcal{S}}\big)\big|\leq &\underbrace{\big|\mathcal{R}\big(\hat{f}^*_{\mathcal{H},\mathcal{S}}\big)- \mathcal{R}\big(\hat{f}_{\mathcal{H},\mathcal{S}} \big)\big|}_{training~ error}\\
 +&\underbrace{\big| \mathcal{R}\big(\hat{f}_{\mathcal{H},\mathcal{S}} \big) - \mathcal{R}\big(\hat{f}_{\mathcal{H}} \big)\big|}_{estimation~ error}\\
 +&\underbrace{\big|  \mathcal{R}\big(\hat{f}_{\mathcal{H}} \big)-\mathcal{R}\big(\hat{f} \big)\big|}_{approximation~ error}.
\end{align*}
\centering
\begin{figure}

\centering
\begin{tikzpicture}[scale=.8, transform shape]
\tikzmath{ %big rectangle environment
    \unit = .7;
    \bx1 = 18*\unit;
    \by1 = 8*\unit;
    \tx1 = 0*\unit;
    \ty1 = 0*\unit;
}
\tikzmath{ %small rectangle environment
    \bmargx = 0.5*\unit;
    \bmargy = 0.5*\unit;
    \tmargy = 2*\unit;
    \bx2 = \bx1*2/3;
    \by2 = \by1-\tmargy;
    \tx2 = \tx1+\bmargx;
    \ty2 = \ty1+\bmargy;
    \t = 12*\unit;
}
\newcommand{\anklrec}[6]{
    \pgfsetcornersarced{\pgfpoint{40*\unit}{40*\unit}}
    \draw [draw=black,fill=#6] (#1,#2) rectangle (#3,#4);
    \node at (#3/2,#4-.5) {#5};
}
\newcommand{\setfont}[1]{
\fontsize{#1pt}{#1pt}\selectfont
}
    \anklrec{\tx1}{\ty1}{\bx1}{\by1}{$\mathcal{Y}^{\mathcal{X}}$}{white} 
    \anklrec{\tx2}{\ty2}{\bx2}{\by2}{$\mathcal{H}$}{blue!20}
%    \node[circle, fill=black] at (\tx+\bmargx,\ty+\bmargy){};
%    \noderel{\tx+\bmargx}{\ty+\bmargy}{}{}
    \node[circle, fill=black,label=$\hat{f}^*_{\mathcal{H},\mathcal{S}}$] at (\tx2+\bx2/10,\ty2+\by2*2/5) (true)  {} ;
    \node[circle, fill=black,label=$\hat{f}_{\mathcal{H},\mathcal{S}}$] at (\tx2+\bx2/2,\ty2+\by2*9/20)   (truetrain){} ;
    \node[circle, fill=black,label={[xshift=-8*\unit,label distance=1*\unit]93:$\hat{f}_{\mathcal{H}}$}] at (\tx2+\bx2-.6*\unit,\ty2+\by2*5/8)  (trueestimate) {} ;
    \draw[<->,line width = 3*\unit] (true.east) -- (truetrain.west) node[midway, below, align=left,label={[xshift=0*\unit, yshift=-50*\unit,align=left]{\setfont{\t}\textbf{Training Error}\\ (Optimization)}}]{};
    \draw[<->,line width = 3*\unit] (truetrain.east) -- (trueestimate.west) node[midway, below, align=left,label={[xshift=10*\unit, yshift=-50*\unit,align=left]{\setfont{\t}\textbf{Estimation Error}\\ (Generalization)}}]{};
    
    \node[circle, fill=black,label=$\hat{f}$] at (\tx1+\bx1*9/10,\ty1+\by1*14/20)   (trueapprox){} ;
   \draw[<->,line width = 3*\unit](trueestimate)--(trueapprox) node[midway, below, align=left,label={[xshift=30*\unit, yshift=-50*\unit,align=left]{\setfont{\t}\textbf{Approximation Error}\\ (Expressivity)}}]{};
\end{tikzpicture}

\caption{Decomposition of the overall error into training error, estimation error and approximation error}
\label{fig:ApproxError}

\end{figure}
\flushleft

The results discussed in this paper deal with estimating the approximation error if the set $\mathcal{H}$ consists of deep neural networks. However, the observant reader will notice that practically all of the results presented below ignore the dependence on the unknown probability distribution $\mathbb{P}_{(\mathcal{X},\mathcal{Y})}$. This can be justified by different strategies (see also \cite{cucker2007learning}) from which we will depict one here. 

Under suitable conditions it is possible to bound the approximation error by 
$$
    \big|\mathcal{R}\big(\hat{f}_{\mathcal{H}} \big)-\mathcal{R}\big(\hat{f} \big) \big|  \leq  \mathrm{error}\big(\hat{f}_\mathcal{H}-\hat{f}\big),
$$
where $\mathrm{error}(\cdot)$ is an expression (e.g. the $\|\cdot\|_\infty$ norm) that is independent of $\mathbb{P}_{(\mathcal{X},\mathcal{Y})}$. As an example, assume that $\mathcal{Y}\subset \R$, and the loss function $\mathcal{L}(\cdot,y)$ is Lipschitz continuous for all $y\in \Ycal$ with uniform Lipschitz constant $\mathrm{Lip}(\mathcal{L})$. We then get
\begin{equation}\label{eq:approx_error_bound}
    \big|\mathcal{R}\big(\hat{f}_{\mathcal{H}} \big)-\mathcal{R}\big(\hat{f} \big) \big| \leq \mathrm{Lip}(\mathcal{L}) \cdot \big\|\hat{f}_{\mathcal{H}}- \hat{f}\big\|_{\infty},
\end{equation}
and hence an upper bound of $\|\hat{f}_{\mathcal{H}}- \hat{f}\|_{\infty}$ can be used to upper bound the approximation error. 

The \emph{universal approximation theorem} (see \cite{FUNAHASHI1989,Cybenko1989,Hornik1989universalApprox,hornik1991approximation}), which is the starting point of approximation theory of neural networks, states:
 
\vspace{.75em}
\begin{adjustwidth}{3em}{3em}
\emph{For every $\hat{f}\in C(K)$ with $K\subset \R^d$ compact and every $\epsilon>0$ there exists a neural network $\hat{f}_{\mathcal{H},\eps}$ such that $\|\hat{f}_{\mathcal{H}}-\hat{f} \|_{\infty}\leq \epsilon$.}
\end{adjustwidth}
\vspace{.75em}

\noindent Utilizing that neural networks are universal approximators, we can now see from Equation~\eqref{eq:approx_error_bound} that for $\mathcal{H}=C(K)$ the approximation error can be made arbitrarily small. In practice, we are faced with a finite memory and computation budget, which shows the importance of results similar to the theorem above that additionally quantify the complexity of $\hat{f}_{\mathcal{H}}$.

We now proceed by introducing the notion of neural networks considered throughout this paper in mathematical terms.

\subsection{Neural Networks}
 We now give a mathematical definition of feedforward neural networks, which were first introduced in \cite{MP43}. More refined architectures like convolutional, residual and recurrent neural networks are defined in the corresponding sections (see Subsection~\ref{sec:SpecialArchitecture}).

In most cases it makes the exposition simpler to differentiate between a neural network as a collection of weights and biases, and the corresponding function, referred to as its realization.\footnote{However, if it is clear from the context, in the following we denote by neural networks both the parameter collections as well as their corresponding realizations.} The following notion has been introduced in \cite{PetV2018OptApproxReLU}.

\begin{definition}\label{def:NeuralNetworks}
Let $d,s, L\in \N$.
A \emph{neural network $\Phi$ with input dimension $d,$ output dimension $s$ and $L$ layers}
is a sequence of matrix-vector tuples
\[
  \Phi = \big( (\mathbf{W}^{[1]},\mathbf{b}^{[1]}), (\mathbf{W}^{[2]},\mathbf{b}^{[2]}), \dots, (\mathbf{W}^{[L]},\mathbf{b}^{[L]}) \big),
\]
where $n_0 = d,~n_L=s$ and $n_1, \dots, n_{L-1} \in \N$, and where each
$\mathbf{W}^{[\ell]}$ is an $n_{\ell} \times n_{\ell-1}$ matrix,
and $\mathbf{b}^{[\ell]} \in \R^{n_\ell}$.

If $\Phi$ is a neural network as above, $K \subset \R^d$, and if
$\sigma\colon \R \to \R$ is arbitrary, then we define the associated
\emph{realization of $\Phi$ with activation function $\sigma$ over $K$}
(in short, the $\sigma$\emph{-realization of $\Phi$ over $K$})
as the map $R_{\sigma}(\Phi)\colon K \to \R^{s}$ such that
\[
  R_{\sigma}(\Phi)(x) = \mathbf{x}^{[L]} ,
\]
where $\mathbf{x}^{[L]}$ results from the following scheme:
\begin{equation*}
  \begin{split}
    \mathbf{x}^{[0]} &\coloneqq {x}, \\
    \mathbf{x}^{[\ell]} &\coloneqq \sigma(\mathbf{W}^{[\ell]} \, \mathbf{x}^{[\ell-1]} + \mathbf{b}^{[\ell]}),
    \qquad \text{ for } \ell = 1, \dots, L-1,\\
    \mathbf{x}^{[L]} &\coloneqq \mathbf{W}^{[L]} \, \mathbf{x}^{[L-1]} + \mathbf{b}^{[L]},
  \end{split}
\end{equation*}
and where $\sigma$ acts componentwise, that is,
$\sigma({v}) = (\sigma({v}_1), \dots, \sigma({v}_m))$
for every ${v} = ({v}_1, \dots, {v}_m) \in \R^m$.

We call $N(\Phi) \coloneqq d + \sum_{j = 1}^L n_j$ the
\emph{number of neurons of the neural network} $\Phi$ and $L = L(\Phi)$ the
\emph{number of layers}. For $\ell \leq L$ we call $M_\ell(\Phi) \coloneqq \|\mathbf{W}^{[\ell]}\|_0 + \|\mathbf{b}^{[\ell]}\|_0$ the \emph{number of weights in the $\ell$-th layer} and we define $M(\Phi)\coloneqq \sum_{\ell = 1}^L M_\ell(\Phi)$, which we call the \emph{number of weights of $\Phi.$} Finally, we denote by $\max\{N_1,\dots,N_{L-1}\}$ the \emph{width} of $\Phi.$
\end{definition}

\begin{figure}[ht!]
\centering
    \includegraphics[width=0.45\textwidth]{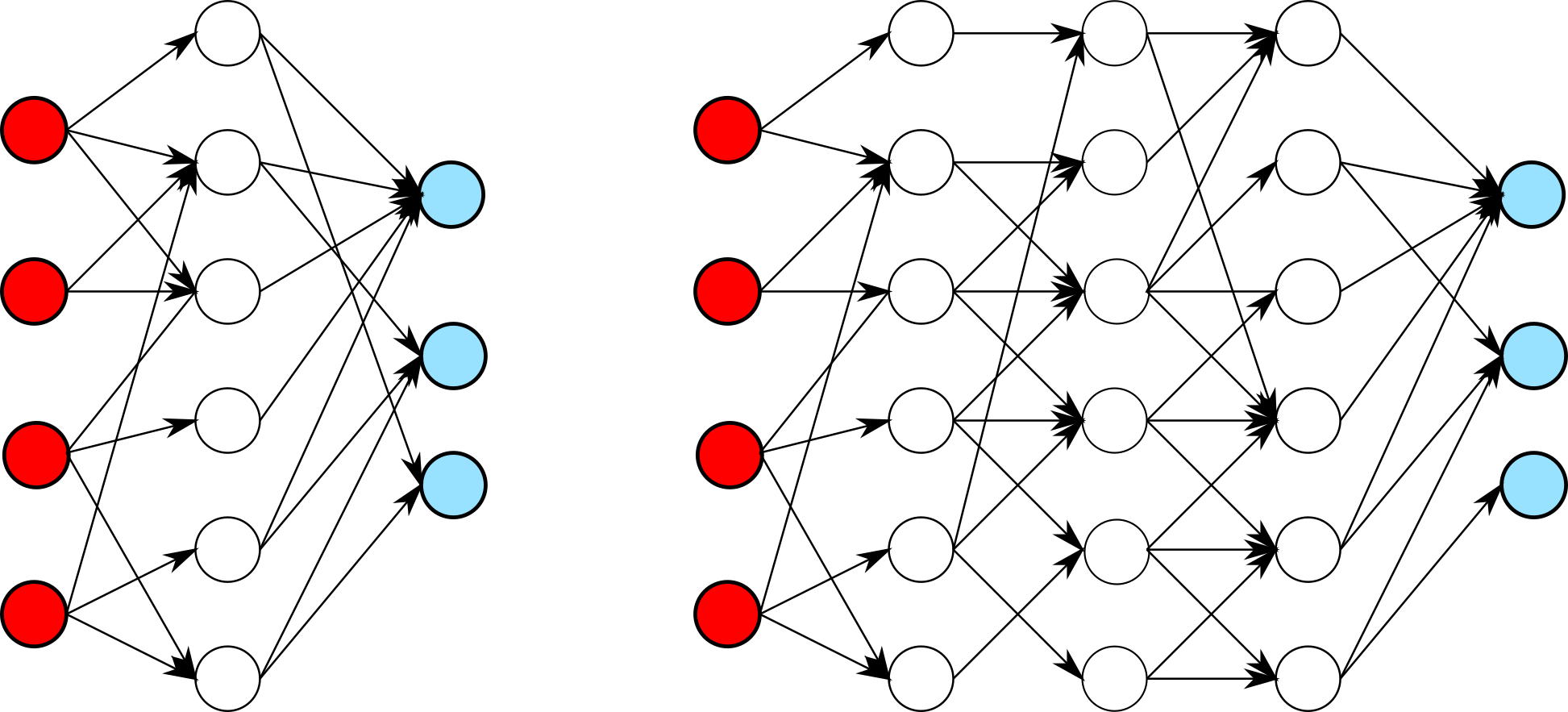}
    \caption{Visualization of a (\textbf{left}) shallow and a (\textbf{right}) deep (feedforward) neural network with input dimension $d=4$ and output dimension $s=3$.}
\end{figure}

Although the activation can be chosen arbitrarily, a variety of particularly useful activation functions has been used in the context of deep learning. We refer to Table \ref{tab:ActFunctions}, which is an adapted version of \cite[Table 1]{TopNet}, for an overview of frequently used activation functions.

\begin{centering}
\begin{footnotesize}
\renewcommand{\arraystretch}{1.5}
\begin{table}[ht!]
 \centering
\begin{tabular}{|c|c|}
\hline
  \textbf{Name}
& \textbf{Given by}
\\ \hline
  rectified linear unit (ReLU)
& $\max\{0,x\}$

\\ \hline
  $a$-leaky ReLU
& $\max\{ax,x\}$ for some $a \geq 0$, $a \neq 1$

\\ \hline
  exponential linear unit
& $x\cdot \chi_{x\geq 0}(x)+ (\exp(x)-1)\cdot \chi_{x<0}(x)$

\\ \hline
  softsign
& $\frac{x}{1+|x|}$

\\ \hline
  \vtop{\hbox{\strut $a$-inverse square root}\hbox{\strut linear unit }}
& $x\cdot \chi_{x\geq 0}(x) + \frac{x}{\sqrt{1+ax^2}}\cdot \chi_{x<0}(x)$
  for $a>0$

\\ \hline
  $a$-inverse square root unit
& $\frac{x}{\sqrt{1 + a x^2}}$ for $a>0$

\\ \hline 
sigmoidal (type) & $\lim_{x\to \infty} \sigma(x)=1$ and $\lim_{x\to -\infty} \sigma(x) =0.$

\\ \hline
  sigmoid / logistic
& $\frac{1}{1+\exp(-x)}$

\\ \hline
  tanh
& $\frac{\exp(x)-\exp(-x)}{\exp(x)+\exp(-x)}$

\\ \hline
  arctan
& $\arctan(x)$

\\ \hline
  softplus
& $\ln(1+\exp(x))$
\\ \hline

\end{tabular}
\caption{Commonly-used activation functions. }
 \label{tab:ActFunctions}
\end{table}
\end{footnotesize}
\end{centering}

Many results give a relation between the approximation accuracy and the \emph{complexity} of a neural network $\Phi$, which is measured in terms of the number of neurons $N(\Phi),$ the number of non-zero weights and biases $M(\Phi)$ and the number of layers $L(\Phi).$ 

Before we proceed, let us fix the following notions concerning the set of all  (realizations of) neural networks.

\begin{definition}
    Let $d=n_0,~n_1,\dots,n_{L-1},~s=n_L\in \N$ for some $L\in \N,$ and $\sigma:\R\to \R.$ Then we set 
    $$
        \mathcal{N}_{(d,n_1,\dots,n_{L-1},s)} \coloneqq \left\{\Phi \text{ neural network with } L \text{ layers, }  n_\ell \text{ neurons in layer } \ell \right\}
    $$
    as well as 
    $$
        \mathcal{RN}_{(d,n_1,\dots,n_{L-1},s),\sigma} \coloneqq \left\{R_{\sigma}(\Phi):\Phi\in \mathcal{N}_{(d,n_1,\dots,n_{L-1},s)} \right\}.
    $$

    In the following, if not stated otherwise, we fix an input dimension $d\in \N$ as well as an output dimension $s\in \N.$
     
\end{definition}

\subsection{Goal and Outline}

The aim of this paper is to provide a comprehensive overview of the area of approximation theory for neural networks. In Subsection~\ref{sec:Shallow}, we start with the universal approximation theorem for \emph{shallow} neural networks. These were the main types of neural networks studied until the 1990's. We then relate the approximation accuracy of neural networks to their complexity. It turns out that for many well-known classical function spaces one can derive upper and lower complexity bounds in terms of the number of weights, neurons and layers. 

Since 2012, \emph{deep} neural networks have shown remarkable success in various fields of applications. In Subsection~\ref{sec:Universal}, we transfer universality results for shallow neural networks to their deep counterparts. We then proceed with approximation rates of deep neural networks for classes of smooth functions in Subsection~\ref{sec:ApproxSmooth} and piecewise smooth functions in Subsection~\ref{sec:DeepPiecewiseRates}. Function classes for that the so-called \emph{curse of dimensionality} can be overcome will be discussed in Subsection~\ref{sec:Structure}. Another important line of research tries to explain the benefit of \emph{deep} neural networks over \emph{shallow} ones. This will be the focus of Subsection~\ref{sec:ComparisonDeepShallow}. 

Finally, there exists a large variety of different network architectures each adapted to specific applications. It remains an interesting question how architectural design choices influence expressivity. In Subsection~\ref{sec:SpecialArchitecture}, we will present results that cover this question for \emph{convolutional} neural networks, which have shown tremendous successes in computer vision, \emph{residual} neural networks, which allowed the use of much deeper models and, lastly, \emph{recurrent} neural networks, which can be viewed as dynamical systems.

\subsection{Notation}

We denote by $\N=\{1,2,\dots\}$ the set of all \emph{natural numbers} and define $\N_0\coloneqq \N\cup \{0\}.$ For $a\in \R$ we set $\lfloor a \rfloor\coloneqq \max\{b\in \Z\colon~b\leq a\} $ and $\lceil a\rceil\coloneqq \min\{b\in \Z\colon~b\geq a\}.$ For two real-valued functions $f,g$ we say that $f\lesssim g$ if there exists some constant $C>0$ such that $f\leq Cg.$ Conversely, we write $f\gtrsim g$ if $g\lesssim f$ and $f\sim g$ if $f\lesssim gb$ and $f\gtrsim g.$ \\

For two sets $A,B$ such that $A\subset B$ we denote by $\mathbf{1}_{A}$ be the \emph{indicator function of $A$ in $B$}. Moreover $|A|$ denotes the \emph{cardinality} of $A$. If $(B,\mathcal{T})$ is a topological space, we denote by $\partial A$ the \emph{boundary of $A$} and by $\overline{A}$ its closure.
 
For $x\in \R^n$ we denote by $|x|$ its \emph{Euclidean norm}, and by $\|x\|_p$ its $p$-norm, $p\in [1,\infty].$ If $(V,\|\cdot\|_V)$ is a normed vector space,
 we denote by $(V^*,\|\cdot\|_{V^*})$ the \emph{topological dual space of $V$,}
 i.e. the set of all scalar-valued, linear, continuous functions equipped with the \emph{operator norm}. 
 
 Let $d,s\in \N.$ For a measurable set $K\subset \R^d$ we denote by $C^n(K,\R^s),~n\in \N_0\cup\{\infty\},$ the spaces of \emph{$n$ times continuously differentiable functions with values in $\R^s$}. Equipped with $\|f\|_{C^n} =\max_{\|\alpha\|_1\leq n} \|D^{\alpha}f\|_{\infty}$ these spaces are Banach spaces if $K$ is compact. We denote by $C_0^{\infty}(K)$ the \emph{set of infinitely many times differentiable functions with compact support in $K$.} In the case that $s=1,$ we simply write $C(K)\coloneqq C(K,\R).$ Let $\beta=(n,\zeta)$ for some $n\in \N_0,~\zeta\in(0,1],~K\subset \R^d$ be compact. Then, for $f\in C^n(K)$, we denote 
 $$
    \|f\|_{C^{\beta}}\coloneqq \max\left\{ \max_{\|\alpha\|_1\leq n} \|D^{\alpha}f\|_\infty,\max_{\|\alpha\|_1=n}\mathrm{Lip}_{\zeta}(D^\alpha f) \right\} \in [0,\infty],
 $$
 where $\mathrm{Lip}_{\zeta}(f)= \sup_{x,y\in K,~x\neq y} \frac{|f(x)-f(y)|}{|x-y|^{\zeta}}.$
 
We denote by $C^{\beta}(K)\coloneqq \left\{ f\in C^{n}(K):\|f\|_{C^{\beta}}<\infty\right\}$ the \emph{space of all $\beta$-H\"older continuous functions.}
 For an $n-$times differentiable function $f:K\subset \R\to\R$ we denote by $f^{(n)}$ its $n$-th derivative.
 
 For a measure space $(K,\mathcal{G},\mu)$ we denote by $\mathcal{L}_p(K;\mu),~p\in [1,\infty]$ the spaces of equivalence classes of $\mathcal{G}$-measurable, real-valued functions $f:K\to \R$ which coincide $\mu$-almost everywhere and for which $\|f\|_p<\infty,$ where
$$
  \|f\|_p\coloneqq   \begin{cases} \left( \int_K |f(x)|^p d\mu(x) \right)^{1/p},~& \text { if } p<\infty, \\ \mathrm{ess~sup}_{x\in K} |f(x)|, & \text { if } p=\infty.\end{cases} 
$$      
If $\lambda$ is the Lebesgue measure on the Lebesgue $\sigma$-algebra of $K\subset \R^d,$ then we will simply write $\mathcal{L}_p(K) = \mathcal{L}_p(K;\lambda)$ as well as $dx = d\lambda(x).$

Let $W^{n,p}(K)$ be the \emph{Sobolev space} of order $n$ consisting of $f\in \mathcal{L}_p(K)$ satisfying $D^{\alpha}f \in \mathcal{L}_p(K)$ for all multi-indices $\|\alpha\|_1\leq n$, where $D^{\alpha}f \in \mathcal{L}_p(K)$ denots the weak derivative. Finally, we denote by $F^{p}_{n,d}$ the \emph{unit ball in $W^{n,p}([0,1]^d)$.}

\section{Shallow Neural Networks}\label{sec:Shallow}

In this section, we examine expressivity results for \emph{shallow} neural networks, which form the groundwork for a variety of results connected to deep neural networks. After reviewing their universality properties in Section \ref{subsec:ShallowUniversal}, we examine lower complexity  bounds in Section \ref{subsec:shallow_lower_bounds} and upper complexity bounds in Section \ref{subsec:shallow_upper_bounds}.

\subsection{Universality of Shallow Neural Networks} \label{subsec:ShallowUniversal}
%The most famous types of expressivity results for shallow neural networks are \emph{universal approximation} theorems. For a function class $\mathcal{C}$ of the following flavour.
The most famous types of expressivity results for neural networks state that shallow neural networks are \emph{universal approximators}. This means, that for a wide variety of relevant function classes $\mathcal{C}$, every function $f\in\mathcal{C}$ can be arbitrarily well approximated by a shallow neural network. In mathematical terms the statement reads as follows.
\vspace{.5em}
\begin{adjustwidth}{3em}{3em}
\emph{For every $f\in \Ccal$ and every $\epsilon>0$ as well as different types of activation functions $\sigma:\R\to \R,$ there exists some $N\in \N$ and some neural network $\Phi_{f,\epsilon} \in \mathcal{N}_{(d,N,s)}$ such that 
\begin{align*}
    \left\|f-\act{\Phi_{f,\epsilon}}\right\|_{\Ccal} \leq \epsilon.
\end{align*}}
\end{adjustwidth}
\noindent For all commonly used activation functions $\sigma$ and many relevant function classes $\Ccal,$ the widths $N$ of the approximating neural networks $\Phi_{f,\epsilon}$ do not remain uniformly bounded over $\Ccal$ and $\epsilon$, but grow with increasing approximation accuracy. 

Within a very short period of time three papers containing results in this direction appeared. The first one, \cite{FUNAHASHI1989}, establishes universality of shallow neural networks with non-constant, bounded, and monotonically increasing continuous activation function for the space $C(K),~K\subset \R^d$ compact. The idea of the proof is based on Fourier theory, Paley-Wiener theory, and an integral formula from~\cite{Irie1988}.

A slightly different set of activation functions (monotonically increasing, sigmoidal) was considered in~\cite{Hornik1989universalApprox}, where universality is established for $\Ccal=C(K)$ and $\Ccal = \mathcal{L}_p(\R^d;\mu),$. There, $\mu$ is a probability measure defined on the Borel $\sigma$-algebra of $\R^d$. The main idea behind the proof is based on the Stone-Weierstrass Theorem.

Shortly afterwards, universality of continuous sigmoidal functions has been proved in \cite{Cybenko1989} for the function space $\Ccal = C(K),$ where $K\subset \R^d$ is compact. The proof, whose main ideas we will sketch in Theorem \ref{thm:cyb}, is based on an elegant application of the Hahn-Banach Theorem combined with the measure theoretic version of the Riesz Representation Theorem. 

An extension of this result for discrete choices of scaling weights and biases has been given in \cite{CHUI1992131}. We note that the statements given in~\cite{FUNAHASHI1989,Hornik1989universalApprox,Cybenko1989,CHUI1992131} are all applicable to sigmoidal activation functions, which were commonly used at that time in practice. The result of Cybenko considers the more general activation function class of \emph{discriminatory functions} (see Definition \ref{def:Discriminatory}) with which he is able to establish universality of shallow neural networks for $\Ccal=\mathcal{L}_1(K).$ Moreover, universality of shallow neural networks with a sigmoidal activation function for $\Ccal = \mathcal{L}_2(K)$ based on so-called \emph{Fourier networks} has been shown in~\cite{Gallant1988} and, closely related to this result, in~\cite{hechtnielsen1989}. Another universal result for the space $\Ccal=\mathcal{L}_2(K)$ for continuous, sigmoidal activation functions employing the Radon Transform has been given in~\cite{carroll1989construction}. In~\cite{hornikStinchcombe1990derivative, hornik1991approximation} universality for functions with high-order derivatives is examined. In this case $\mathcal{C}$ is given by the Sobolev space $W^{n,p}(K)$ or the space $C^{n}(K)$ and $\sigma$ is a sufficiently smooth function. 

Further advances under milder conditions on the activation function were made in~\cite{PinkusUniversalApproximation}. Again, their result is based on an application of the Stone-Weierstra{\ss} Theorem. The precise statement along with the main proof idea are depicted in Theorem \ref{thm:PinkusUniversal}.

In the following we present a selection of elegant proof strategies for universal approximation theorems. We start by outlining a proof strategy utilizing the Riesz Representation Theorem for measures (see \cite[Theorem 6.19]{Rudin}).

\begin{definition}[\cite{Cybenko1989}] \label{def:Discriminatory}
Let $K \subset \R^d$ be compact.
A measurable function $f: \R \to \R$ is \emph{discriminatory with respect to} $K,$
if for every finite, signed, regular Borel measure $\mu$ on $K$ we have that
\[
  \Big(
    \int_K f(\mathbf{W} {x} + \mathbf{b}) d\mu({x}) = 0,
    \text{ for all } \mathbf{W} \in \R^{1 \times d} \text{ and } \mathbf{b} \in \R
  \Big)
  \quad \Longrightarrow \quad
  \mu = 0 \,.
\]
\end{definition}

In fact, in \cite[Lemma 1]{Cybenko1989} it has been demonstrated that every sigmoidal function is indeed discriminatory with respect to closed cubes in $\R^d$. The universal approximation result for discriminatory functions now reads as follows.

\begin{theorem}[\cite{Cybenko1989}] \label{thm:cyb}
 Let $\sigma \in C(\R)$ be discriminatory with respect to a compact set $K\subset \R^d$. Then $\mathcal{RN}_{(d,\infty,1),\sigma}\coloneqq \cup_{N\in\N} \mathcal{RN}_{(d,N,1),\sigma}$ is dense in $C(K).$ 
\end{theorem}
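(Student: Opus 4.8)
The plan is to argue by contradiction, using the Hahn--Banach Theorem together with the Riesz Representation Theorem for measures and then invoking the discriminatory hypothesis on $\sigma$ to derive a contradiction.

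First I would note that the set $S \coloneqq \mathcal{RN}_{(d,\infty,1),\sigma} = \bigcup_{N\in\N}\mathcal{RN}_{(d,N,1),\sigma}$ is a \emph{linear subspace} of the Banach space $C(K)$. Indeed, a realization of a network with a single hidden layer has the form $x\mapsto \sum_{j=1}^N \alpha_j\,\sigma(\mathbf{W}_j x + b_j) + c$ with $\mathbf{W}_j\in\R^{1\times d}$ and $\alpha_j,b_j,c\in\R$, and such finite sums are evidently closed under addition (simply concatenate the hidden neurons and add the output biases) and under scalar multiplication. In particular, taking $N=1$, $\alpha_1=1$, $c=0$ shows that the single-neuron function $x\mapsto \sigma(\mathbf{W}x+b)$ belongs to $S$ for every $\mathbf{W}\in\R^{1\times d}$ and $b\in\R$.

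Next, suppose toward a contradiction that $S$ is \emph{not} dense, so that its closure $\overline{S}$ is a proper closed subspace of $C(K)$. By the standard annihilator corollary of the Hahn--Banach Theorem, there exists a nonzero continuous linear functional $\Lambda\in C(K)^*$ that vanishes identically on $\overline{S}$, and hence on $S$. Applying the Riesz Representation Theorem to identify $C(K)^*$, I would represent $\Lambda$ as $\Lambda(f)=\int_K f\,d\mu$ for a finite, signed, regular Borel measure $\mu$ on $K$; since $\Lambda\neq 0$, necessarily $\mu\neq 0$. Combining this with the previous step, every single-neuron function lies in $S$ and is therefore annihilated by $\Lambda$, which yields $\int_K \sigma(\mathbf{W}x+b)\,d\mu(x)=0$ for all $\mathbf{W}\in\R^{1\times d}$ and all $b\in\R$. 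By Definition~\ref{def:Discriminatory} this forces $\mu=0$, contradicting $\mu\neq 0$. Hence $\overline{S}=C(K)$, i.e.\ $S$ is dense.

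The conceptual crux — and the step I expect to require the most care — is packaging Hahn--Banach in its correct ``there is a functional vanishing on a proper closed subspace'' form and identifying $C(K)^*$ precisely via the Riesz Representation Theorem (so that one genuinely obtains a finite signed regular measure, not merely an abstract functional). Verifying that $S$ is truly a linear subspace, including that the output bias lets one absorb constants, is what makes the separation argument legitimate; once $\mu$ is produced, plugging in the discriminatory property is immediate.
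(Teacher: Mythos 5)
Your proposal is correct and follows essentially the same argument as the paper: assume non-density, obtain a nonzero annihilating functional on the closure via Hahn--Banach, represent it by a finite signed regular Borel measure via the Riesz Representation Theorem, and conclude $\mu = 0$ from the discriminatory property of $\sigma$, a contradiction. Your additional verification that $\mathcal{RN}_{(d,\infty,1),\sigma}$ is a linear subspace (including absorbing the output bias) is a detail the paper takes for granted, but it is the same proof.
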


\begin{proof}
We restrict ourselves to the case $s=1.$ Towards a contradiction, assume that the linear subspace $\mathcal{RN}_{(d,\infty,1),\sigma}$ is not dense in $C(K).$ Set $\overline{\mathcal{RN}}\coloneqq \overline{\mathcal{RN}_{(d,\infty,1),\sigma}}.$ Then there exists some $f\in C(K)\setminus \overline{\mathcal{RN}}.$ By the Hahn-Banach-Theorem, there exists some $\kappa\in C(K)^{*}$ with $\kappa(f)\neq 0$ and $\kappa|_{\overline{\mathcal{RN}}}=0.$ Invoking the Riesz Representation Theorem (see \cite[Theorem 6.19]{Rudin}), there exists some finite, non-zero, signed Borel measure $\mu$ such that 
    \begin{align*}
        \kappa(f) = \int_K f({x}) d\mu({x}), \quad \text{ for all } f\in C(K).
    \end{align*}
    
Notice that, for all $\mathbf{W}\in \R^{1\times d}$ and all $\mathbf{b}\in \R,$ we have that $\sigma(\mathbf{W} (\cdot) + \mathbf{b}) \in \mathcal{RN}_{(d,\infty,1),\sigma}.$ This implies that 
    \begin{align*}
        0= \kappa\left(\sigma(\mathbf{W}(\cdot) +\mathbf{b})\right) = \int_K \sigma(\mathbf{W}x +\mathbf{b}) d\mu({x}), \quad \text{ for all } \mathbf{W}\in \R^{1\times d}, \mathbf{b}\in \R.
    \end{align*}
    But since $\sigma$ is a discriminatory function, $\mu=0,$ which is a contradiction.
\end{proof}

Now we proceed by depicting the universality result given in \cite{PinkusUniversalApproximation}, which is based on an application of the Stone-Weierstrass Theorem.

\begin{theorem}[\cite{PinkusUniversalApproximation}] \label{thm:PinkusUniversal}
Let $K\subset\R^d$ be a compact set and $\sigma:\R\to\R$ be continuous and not a polynomial. Then $\sNN{d}{\infty}{s}$ is dense in $C(K)$. 
\end{theorem}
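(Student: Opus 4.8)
The plan is to reduce the statement, via the Stone--Weierstrass theorem, to showing that every polynomial lies in the closure of the span of the ridge functions $x\mapsto\sigma(\mathbf{w}\cdot x+b)$, and then to establish this by a univariate mollification argument.

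First I would reduce to $s=1$: since convergence in $C(K,\R^s)$ is componentwise, it suffices to approximate each scalar component separately and stack the results, and scalar-valued shallow networks are exactly the elements of $\RNcal_{(d,N,1),\sigma}$. Write $\mathcal{M}\coloneqq\Span\{x\mapsto\sigma(\mathbf{w}\cdot x+b):\mathbf{w}\in\R^d,\ b\in\R\}$; this is precisely $\sNN{d}{\infty}{1}$, so the goal becomes $\overline{\mathcal{M}}=C(K)$. Because polynomials are dense in $C(K)$ for compact $K$ (Stone--Weierstrass), it suffices to show that $\overline{\mathcal{M}}$ contains every polynomial. By polarization, every homogeneous polynomial of degree $k$ is a finite linear combination of $k$-th powers of linear forms $(\mathbf{a}\cdot x)^k$, so it is enough to prove $(\mathbf{a}\cdot x)^k\in\overline{\mathcal{M}}$ for every $\mathbf{a}\in\R^d$ and every $k\in\N_0$.

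Next I would fix a direction $\mathbf{a}$ and pass to one dimension. Setting $t=\mathbf{a}\cdot x$, the functions $x\mapsto\sigma(\lambda\,\mathbf{a}\cdot x+b)$ with $\lambda,b\in\R$ lie in $\mathcal{M}$, so it is enough to show that the univariate span $\mathcal{M}_1\coloneqq\Span\{t\mapsto\sigma(\lambda t+b):\lambda,b\in\R\}$ is dense in $C(I)$ on a compact interval $I\supseteq\{\mathbf{a}\cdot x:x\in K\}$; in particular this yields $t^k\in\overline{\mathcal{M}_1}$, hence $(\mathbf{a}\cdot x)^k\in\overline{\mathcal{M}}$. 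For smooth $\sigma$ this is easy: the difference quotients of $\lambda\mapsto\sigma(\lambda t+b)$ at $\lambda=0$ show that $t^k\sigma^{(k)}(b)\in\overline{\mathcal{M}_1}$ for every $k$, and since $\sigma$ is not a polynomial, for each $k$ there is some $b_k$ with $\sigma^{(k)}(b_k)\neq0$, giving $t^k\in\overline{\mathcal{M}_1}$ and thus all polynomials. For merely continuous $\sigma$ I would mollify: for $\phi\in C_c^\infty(\R)$ the convolution $\sigma_\phi=\sigma*\phi$ is smooth, and approximating the convolution integral by Riemann sums exhibits $t\mapsto\sigma_\phi(\lambda t+b)$ as a uniform-on-compacta limit of elements of $\mathcal{M}_1$, each summand being a translate $\sigma(\lambda t+b')$. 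Hence $\overline{\mathcal{M}_1}$ contains the span built from $\sigma_\phi$, and applying the smooth argument to $\sigma_\phi$ closes the univariate claim.

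The crux, and the main obstacle, is the mollification step: one must choose $\phi$ so that $\sigma_\phi$ is itself not a polynomial. This rests on the fact that if $\sigma*\phi$ were a polynomial of degree at most $m$ for every $\phi\in C_c^\infty(\R)$, then $\sigma$ would already be a polynomial of degree at most $m$, which is an approximate-identity (distributional) argument. Establishing this non-degeneracy, together with the uniform convergence of the Riemann sums on compact sets (using uniform continuity of $\sigma$ on compacta and the compact support of $\phi$), is where the real work lies; the reductions via Stone--Weierstrass and polarization, and the smooth differentiation trick, are then routine.
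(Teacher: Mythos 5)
Your proposal is correct and follows essentially the same route as the paper's sketch: reduction to scalar output and then to the univariate case, the difference-quotient argument producing all monomials when $\sigma$ is smooth and not a polynomial, and mollification with Riemann-sum approximation (together with the key fact that some $\sigma \ast \phi$, $\phi \in C_c^\infty(\R)$, is not a polynomial) for merely continuous $\sigma$. The only presentational difference is that you unpack the multivariate-to-univariate reduction explicitly via Stone--Weierstrass and polarization into powers of linear forms $(\mathbf{a}\cdot x)^k$, whereas the paper cites this step from Pinkus's proof; the underlying argument is the same.
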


\emph{Sketch of Proof:} 
We only consider the case $s=1.$ Moreover, by \cite[Proof of Theorem 1, Step 2]{PinkusUniversalApproximation}, we can restrict ourselves to the case $d=1$. This follows from the fact that if $\mathcal{RN}_{(1,\infty,1),\sigma}$ is dense in $C(\tilde{K})$ for all compact $\tilde{K}\subset \R,$ then $\mathcal{RN}_{(d,\infty,1),\sigma}$ is dense in $C(K)$ for all compact $K\subset\R^d.$

In the following, we will write that $f\in \overline{M}^{C(\R)}$ for some $M\subset C(\R)$ if for every compact set $K\subset \R$ and every $\epsilon >0$ there exists some $g \in M$ such that $\left\| f|_K- g|_K\right\|_\infty \leq \epsilon.$ Hence, the claim follows if we can show that $\overline{ \mathcal{RN}_{(1,\infty,1),\sigma}}^{C(\R)}=C(\R).$

\textbf{Step 1 (Activation $\sigma \in C^{\infty}(\R)$):} Assume that $\sigma \in C^{\infty}(\R).$ Since for every $\mathbf{W},\mathbf{b}\in \R,~h\in \R\setminus\{0\},$
$$
    \frac{\sigma((\mathbf{W}+ h)\cdot+ \mathbf{b})-\sigma(\mathbf{W}\cdot+ \mathbf{b}) }{h} \in \mathcal{RN}_{(1,\infty,1)} ,
$$
 we obtain that $\frac{d}{d\mathbf{W}}\sigma(\mathbf{W}\cdot + \mathbf{b})\in \overline{ \mathcal{RN}_{(1,\infty,1)}}^{C(\R)}.$ By an inductive argument, we obtain $\frac{d^k}{d\mathbf{W}^k}\sigma(\mathbf{W}\cdot + \mathbf{b})\in \overline{ \mathcal{RN}_{(1,\infty,1)}}^{C(\R)}$ for every $k\in \N_0$. Moreover, $\frac{d^k}{d\mathbf{W}^k}\sigma(\mathbf{W}\cdot+\mathbf{b}) = (\cdot)^k \sigma^{(k)}(\mathbf{W}\cdot + \mathbf{b}).$ Since $\sigma$ is not a polynomial, for every $k\in \N_0,$ there exists some $\mathbf{b}_k\in \R$ such that $\sigma^{(k)}(\mathbf{b}_k)\neq 0.$  Hence, $(\cdot)^k \cdot \sigma^{(k)}(\mathbf{b}_k) \in \overline{ \mathcal{RN}_{(1,\infty,1)}}^{C(\R)} \setminus \{0\}$ for all $k\in \N_0$ and $\overline{ \mathcal{RN}_{(1,\infty,1),\sigma}}^{C(\R)}$ contains all monomials and hence also all polynomials. Since polynomials are dense in $C(\R)$ by the Weierstrass Theorem, $\overline{ \mathcal{RN}_{(1,\infty,1),\sigma}}^{C(\R)}=C(\R).$

\textbf{Step 2 (Activation $\sigma \in C(\R)$):} Now assume, that $\sigma \in C(\R)$ and $\sigma$ is not a polynomial. Step 1 is used via a mollification argument by showing that, for $z \in C^\infty_0(\R)$, 
$$\sigma \ast z = \int_{\R} \sigma(\cdot-y)z(y)~dy \in \overline{ \mathcal{RN}_{(1,\infty,1)}}^{C(\R)},$$
holds by an approximation of the integral by Riemann series. If $\sigma \ast z$ is not a polynomial, then by invoking Step 1 we conclude that 
$$
    \overline{\mathcal{RN}_{(1,\infty,1),\sigma}}^{C(\R)}=C(\R).
$$

Finally, using standard arguments from functional analysis, it is shown that for every $\sigma \in C(\R)$ with $\sigma$ not being a polynomial, there exists some $z \in C^\infty_0(\R)$ such that $\sigma \ast z$ is not a polynomial. This yields the claim.

\hfill $\square$

As mentioned before, these universality results do not yield an estimate of the necessary width of a neural network in order to achieve a certain approximation accuracy. However, due to hardware induced constraints on the network size, such an analysis is imperative. 

We will see in the following, that many of the subsequent results suffer from the infamous \emph{curse of dimensionality} \cite{bellman1952theory}, i.e. the number of parameters of the approximating networks grows exponentially in the input dimension.
To be more precise, for a variety of function classes $ \Ccal \subset \{f:\R^d\to \R^{s}\},$ in order to obtain 
\[\left\|f-\act{\Phi_{f,\epsilon}} \right\|_{\Ccal} \leq \epsilon,\] for an unspecified $f\in\Ccal,$ the width of $\Phi_{f,\epsilon}$ needs to scale asymptotically like $\epsilon^{-d/C}$ for a constant $C=C(\Ccal)>0$ as $\epsilon \searrow 0.$ In other words, the complexity of the involved networks grows exponentially in the input dimension with increasing approximation accuracy. 

\subsection{Lower Complexity Bounds}\label{subsec:shallow_lower_bounds}
In this subsection, we aim at answering the following question: Given a function space $\Ccal$ and an approximation accuracy $\eps$, how many (unspecified) weights and neurons are necessary for a neural network such that its realizations are potentially able to achieve approximation accuracy $\epsilon$ for an arbitrary function $f\in \Ccal$?
We start by presenting results for classical function spaces $\Ccal$ where the curse of dimensionality can in general not be avoided. 

The first lower bounds have been deduced by a combination of two arguments in~\cite{maiorov1999,Maiorov1999LowerBounds} for the case where no restrictions on the parameter selection process are imposed. It is also shown in~\cite{maiorov1999} that the set of functions for that this lower bound is attained is of large measure.

\begin{theorem}[\cite{maiorov1999,Maiorov1999LowerBounds}] \label{thm:lowShallow}
Let $\sigma\in C(\R)$, $d\geq 2,$ $\eps>0$ and $N\in\N$ such that, for each $f\in F_{n,d}^2$, there exists a neural network $\Phi_{f,\epsilon}\in\mathcal{N}_{(d,N,s)}$ satisfying
\[
\norm{\act{\Phi_{f,\eps}}- f}_{2}\leq \eps.
\]
Then $N\gtrsim \eps^{-\frac{d-1}{n}}.$ 
\end{theorem}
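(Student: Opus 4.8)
The plan is to discard the activation $\sigma$ almost entirely and exploit only the \emph{ridge structure} of a one–hidden–layer realization. Spelling out Definition~\ref{def:NeuralNetworks} for $\Phi\in\mathcal N_{(d,N,1)}$ (so $L=2$), we get
\[
  \act{\Phi}(x)=\sum_{i=1}^N c_i\,\sigma(\langle w_i,x\rangle+b_i)+c_0 ,
\]
and each summand $c_i\sigma(\langle w_i,x\rangle+b_i)=g_i(\langle w_i,x\rangle)$ is a \emph{ridge function} with profile $g_i(t)=c_i\sigma(t+b_i)$ and direction $w_i\in\R^d$. Hence $\act{\Phi}$ lies in
\[
  \mathcal M_N:=\Big\{\textstyle\sum_{i=1}^{N+1} g_i(\langle a_i,\cdot\rangle):\ a_i\in\R^d,\ g_i:\R\to\R\Big\},
\]
the set of sums of $N+1$ \emph{arbitrary} ridge functions (the constant $c_0$ being one extra trivial term). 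Since $\act{\Phi_{f,\eps}}\in\mathcal M_N$ for every $f$, the hypothesis forces $\sup_{f\in F_{n,d}^2}\inf_{g\in\mathcal M_N}\norm{f-g}_2\le\eps$, so it suffices to prove $\sup_{f\in F_{n,d}^2}\inf_{g\in\mathcal M_N}\norm{f-g}_2\gtrsim N^{-n/(d-1)}$, which gives $\eps\gtrsim N^{-n/(d-1)}$, i.e. $N\gtrsim\eps^{-(d-1)/n}$. Note this reduction uses nothing about $\sigma$ beyond the definition of a realization, which is why continuity (or any other property) of $\sigma$ is irrelevant to the bound.

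The heart of the matter is a dimension count on homogeneous polynomials. Let $H_k$ be the space of degree-$k$ homogeneous polynomials in $d$ variables, $\dim H_k=\binom{k+d-1}{d-1}\sim k^{d-1}$. The degree-$k$ homogeneous part of any ridge function $g(\langle a,\cdot\rangle)$ is a scalar multiple of $(\langle a,\cdot\rangle)^k$; thus, \emph{for each fixed} tuple of directions $a_1,\dots,a_{N+1}$, the degree-$k$ parts of the corresponding elements of $\mathcal M_N$ all lie in the subspace $\Span\{(\langle a_i,\cdot\rangle)^k\}\subseteq H_k$, of dimension at most $N+1$. I would choose $k\sim N^{1/(d-1)}$, with the implied constant large enough (depending on $d$) that $\dim H_k$ comfortably exceeds the number $(N+1)(d-1)$ of free direction parameters. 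The strategy is then to produce a single $p_k\in H_k$ that is, in the $\mathcal L_2$ sense after projecting onto $H_k$, uniformly far from \emph{every} subspace $\Span\{(\langle a_i,\cdot\rangle)^k\}$, and to normalise it by a Bernstein-type inverse inequality $\norm{p_k}_{W^{n,2}}\lesssim k^{n}\norm{p_k}_2$ so that $p_k\in F_{n,d}^2$ while $\norm{p_k}_2\sim k^{-n}$. Granting this, the error in approximating $p_k$ from $\mathcal M_N$ is $\gtrsim\norm{p_k}_2\sim k^{-n}\sim N^{-n/(d-1)}$, which is the desired bound.

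The main obstacle is the word \emph{uniformly}: the approximating directions may depend on the target $f$, so I cannot fix $a_1,\dots,a_{N+1}$ in advance but must beat every tuple simultaneously. For each fixed tuple the well-approximated polynomials form a neighbourhood of an $(N+1)$-dimensional subspace, and I must stay outside the union of these neighbourhoods over the whole continuum of tuples. I would run an averaging/concentration argument: for $p_k$ drawn uniformly from the unit sphere of $H_k$, the squared distance to any fixed $(N+1)$-dimensional subspace has expectation $1-(N+1)/\dim H_k$ and concentrates sharply (exponent $\sim\dim H_k$); choosing $\dim H_k$ a large multiple of $(N+1)(d-1)$ lets a $\delta$-net over the direction sphere, combined with a union bound, yield with positive probability a single $p_k$ bounded away from all ridge subspaces at once. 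The resulting constant depends on $d$ and $n$, which is harmless since the conclusion is only asserted up to such constants.

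Two technical points carry the real weight. First, to make ``$\dist_{L^2}(p_k,\mathcal M_N)\ge$ (distance of degree-$k$ parts)'' rigorous I need an orthogonal projection $\Pi$ onto $H_k$ under which each ridge function maps to a multiple of $(\langle a,\cdot\rangle)^k$; the homogeneous decomposition is \emph{not} orthogonal for Lebesgue measure on $[0,1]^d$, so I would set this up with a rotation-invariant (e.g. Gaussian) inner product on $\R^d$ and transfer the estimate to the cube by a localization/rescaling argument. Second, the Bernstein normalisation must give the exponent $k^{n}$ (one power of $k$ per derivative), not the Markov exponent $k^{2n}$ one gets from bare monomials near the boundary; this forces $p_k$ to be an oscillatory, smoothly truncated representative rather than something like $x_1^k$. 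Controlling the net/concentration constants and these two normalisations is the delicate part; the dimension count and the reduction to $\mathcal M_N$ are comparatively routine.
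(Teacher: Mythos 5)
A preliminary remark: this survey contains no proof of Theorem~\ref{thm:lowShallow} --- it is stated with citations --- so your proposal can only be measured against the original arguments of \cite{Maiorov1999LowerBounds} and \cite{maiorov1999}. Your opening reduction is exactly the one in \cite{Maiorov1999LowerBounds}: for $L=2$ the realization is $\sum_{i=1}^N c_i\sigma(\langle w_i,x\rangle+b_i)+c_0$, hence lies in the set $\mathcal{M}_N$ of sums of $N+1$ arbitrary ridge functions, the activation becomes irrelevant, and everything reduces to showing $\sup_{f\in F^2_{n,d}}\inf_{g\in\mathcal{M}_N}\|f-g\|_2\gtrsim N^{-n/(d-1)}$, which is Maiorov's theorem. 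Your skeleton for that bound --- project onto a rotation-invariant polynomial eigenspace of dimension $\asymp k^{d-1}$ in which each ridge summand becomes a multiple of one direction-dependent vector, normalize by a Bernstein-type inequality so the hard function has $\mathcal{L}_2$-norm $\asymp k^{-n}$, and take $k\asymp N^{1/(d-1)}$ --- is also the correct arithmetic and is in the spirit of the original spherical-harmonics proof. The substance, however, is the step you defer to a net-plus-concentration argument, and there the proposal has genuine gaps.

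First, the measure mismatch is structural, not a matter of ``localization/rescaling.'' The hypothesis controls $\|f-\act{\Phi_{f,\eps}}\|_2$ on $[0,1]^d$ only, whereas your Hermite-chaos projection bounds distances in $L_2(\R^d,\gamma)$; you have no control of the approximant outside the cube, so a lower bound in the Gaussian norm says nothing about the quantity you must bound, and restricting $\gamma$ to the cube destroys both the orthogonality of the chaoses and the rotation invariance on which ``the projection of a ridge function is rank one'' rests. This is precisely why Maiorov works on the ball with the zonal-harmonic (isotypic) decomposition, where a single degree has angular dimension only $\asymp k^{d-2}$, forcing him to couple a whole block of degrees $s\in[k,2k]$ sharing the same $N$ directions --- that coupling is where the real work lies, and it has no counterpart in your sketch. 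Second, even granting the Gaussian setting, your union bound cannot be closed with $\dim H_k$ a constant multiple of $N$. Since $\langle h_k(\langle a,\cdot\rangle),h_k(\langle a',\cdot\rangle)\rangle_{L_2(\gamma)}=k!\,\langle a,a'\rangle^k$ for unit directions, the map from directions to normalized degree-$k$ vectors has Lipschitz constant $\asymp\sqrt{k}$, so a net of direction tuples fine enough to compare distances has cardinality at least $(c\sqrt{k})^{N(d-1)}$, whose logarithm is $\asymp N\log k\asymp N\log N$; the concentration exponent for a fixed $N$-dimensional subspace is only $c\cdot\dim H_k\asymp N$. Hence for every constant in $\dim H_k\asymp N$ the union bound fails once $N$ is large: you are pushed to $\dim H_k\gtrsim N\log N$ (yielding only $N\log N\gtrsim\eps^{-(d-1)/n}$), or, trading the distance threshold against the entropy, to $N\gtrsim\eps^{-(d-1)/(n+\delta)}$ --- in either case strictly weaker than the claim, and the loss is not absorbable into $\gtrsim$. (Two further unaddressed points: the distance to $\Span\{h_k(\langle a_i,\cdot\rangle)\}$ is not Lipschitz in the tuple near degenerate, nearly parallel configurations, so even the net comparison needs an extra idea; and the term-by-term chaos projection requires each profile $g_i$ to lie in $L_2(\gamma_1)$, which is not automatic for arbitrary ridge sums.) Repairing these gaps is essentially reproving \cite{maiorov1999}, whose argument replaces your probabilistic selection by an algebraic/duality one adapted to the compact domain.
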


The next theorem can be used to derive lower bounds if the parameter selection is required to depend continuously on the function to be approximated. We will state the theorem in full generality and draw the connection to neural networks afterwards.
\begin{theorem}[\cite{DeVore1989}] \label{thm:lower_devore}
Let $\eps>0$ and $1\leq p\leq \infty$. For $M\in\N,$ let 
$\phi: \R^M\to \mathcal{L}_p([0,1]^d)$ be an arbitrary function. Suppose there is a continuous function $\mathcal{P}:F_{n,d}^p\to\R^M$ such that $\left\| f-\phi(\mathcal{P} (f)) \right\|_{p}\leq \epsilon$ for all $f\in F_{n,d}^p$. Then $M\gtrsim  \epsilon^{-d/n} $.
\end{theorem}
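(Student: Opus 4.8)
The plan is to prove this lower bound by a topological obstruction argument based on the Borsuk--Ulam theorem, which is the standard route for bounds on continuous (nonlinear) widths. The idea is to exhibit inside $F_{n,d}^p$ a high-dimensional, antipodally symmetric family of test functions that are mutually well separated in $\mathcal{L}_p$, and then to argue that a continuous encoder $\mathcal{P}$ into $\R^M$ with $M$ too small must collapse an antipodal pair, which is incompatible with uniform accuracy $\eps$. The only hypothesis on the scheme that gets used is the continuity of $\mathcal{P}$; nothing is required of $\phi$.

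First I would fix a resolution $k\in\N$ and partition $[0,1]^d$ into $N_c\coloneqq k^d$ essentially disjoint subcubes of side length $1/k$ with centers $x_i$. Fixing one bump $\psi\in C_0^\infty$ supported in the unit cube, I set $g_i(x)\coloneqq k^{-n}\psi(k(x-x_i))$, so the $g_i$ have pairwise disjoint supports. A direct scaling computation gives $\norm{D^\alpha g_i}_p=k^{\pabs{\alpha}-n-d/p}\norm{D^\alpha\psi}_p$, so the top-order derivative dominates and $\norm{g_i}_{W^{n,p}}^p\sim k^{-d}$ while $\norm{g_i}_p^p\sim k^{-np-d}$; moreover all $g_i$ are translates, so these norms are identical across $i$. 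For $a=(a_1,\dots,a_{N_c})$ define $f_a\coloneqq\rho\sum_i a_i g_i$. Disjointness of supports makes the norms additive, $\norm{f_a}_{W^{n,p}}^p=\rho^p\sum_i\pabs{a_i}^p\norm{g_1}_{W^{n,p}}^p$ and likewise in $\mathcal{L}_p$.

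The crucial normalization is to choose the amplitude $\rho>0$ so that the $\ell^p$-unit sphere $S\coloneqq\{a:\sum_i\pabs{a_i}^p=1\}$ lands in the Sobolev unit ball. On $S$ we have $\norm{f_a}_{W^{n,p}}^p=\rho^p\norm{g_1}_{W^{n,p}}^p$, which I set equal to $1$; this fixes $\rho$ and forces $f_a\in F_{n,d}^p$ for every $a\in S$. Feeding the same $\rho$ into the $\mathcal{L}_p$ norm gives $\norm{f_a}_p^p=\rho^p\norm{g_1}_p^p=\norm{g_1}_p^p/\norm{g_1}_{W^{n,p}}^p\sim k^{-np}$, so every $f_a$ with $a\in S$ satisfies $\norm{f_a}_p\geq c\,k^{-n}$ with $c=c(\psi,n,d,p)>0$ independent of $k$ and $a$. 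The map $a\mapsto f_a$ is linear, hence continuous and odd ($f_{-a}=-f_a$), and $S$ is homeomorphic to $S^{N_c-1}$. Now comes the topological heart: the composite $a\mapsto\mathcal{P}(f_a)$ is a continuous map $S^{N_c-1}\to\R^M$, so if $M\le N_c-1$ the Borsuk--Ulam theorem (any continuous map $S^{N_c-1}\to\R^M$ with $M\le N_c-1$ sends some antipodal pair to the same point) yields $a\in S$ with $\mathcal{P}(f_a)=\mathcal{P}(f_{-a})$. Writing $g\coloneqq\phi(\mathcal{P}(f_a))=\phi(\mathcal{P}(f_{-a}))$, the accuracy hypothesis gives $\norm{f_a-g}_p\le\eps$ and $\norm{-f_a-g}_p\le\eps$, whence $2\norm{f_a}_p\le2\eps$ and thus $c\,k^{-n}\le\eps$. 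Contrapositively, as soon as $k$ is chosen with $c\,k^{-n}>\eps$ we must have $M\ge N_c=k^d$. Taking $k\sim\eps^{-1/n}$ (the largest integer with $c\,k^{-n}>\eps$) gives $M\gtrsim k^d\sim\eps^{-d/n}$, as claimed; the case $p=\infty$ is identical after replacing $S$ by the antipodally symmetric boundary of the cube $[-1,1]^{N_c}$.

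I expect the main obstacle to be the topological step together with its prerequisites: one must verify that the restriction to the antipodally symmetric sphere is genuinely continuous and odd so that Borsuk--Ulam applies, and, more delicately, get the normalization exactly right so that the entire sphere sits in $F_{n,d}^p$ while the separation scale $\norm{f_a}_p\sim k^{-n}$ stays uniform in $a$ and is correctly matched against $\eps$. The bump estimates are routine scaling, and it is precisely the continuity of $\mathcal{P}$ that powers the antipodal collision and hence the contradiction.
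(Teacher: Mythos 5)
Your proof is correct, and it is essentially the classical argument of DeVore--Howard--Micchelli, which the paper cites for this theorem without reproducing a proof: a lower bound on the continuous nonlinear width via the Borsuk--Ulam theorem, with the Bernstein-width estimate supplied by scaled bump functions on a grid of $k^d$ disjoint subcubes. The scaling computations, the normalization placing the coefficient sphere inside $F_{n,d}^p$, the antipodal-collision step, and the final choice $k\sim\eps^{-1/n}$ all check out (including the $p=\infty$ variant), so there is no gap to report.
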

In~\cite{YAROTSKY2017103} it was observed that when taking $M$ as the number of weights and $\phi$ as a function mapping from the weight space to functions realized by neural networks one directly obtains a lower complexity bound. We note that the increased regularity (expressed in terms of $n$) of the function to be approximated implies a potentially better approximation rate, something which is also apparent in many results to follow. 

\subsection{Upper Complexity Bounds}\label{subsec:shallow_upper_bounds}
Now we turn our attention to results examining the sharpness of the deduced lower bounds by deriving upper bounds. 
In principle, it has been proven in~\cite{Maiorov1999LowerBounds} that there indeed exist sigmoidal, strictly increasing, activation functions $\sigma\in C^\infty(\R)$ for that the bound $N\lesssim \epsilon^{-(d-1)/n}$ of Theorem~\ref{thm:lowShallow} is attained. However, the construction of such an activation function is based on the separability of the space $C([-1,1])$ and hence is not useful in practice. A more relevant upper bound has been given in \cite{Mhaskar:1996}.

\begin{theorem}[\cite{Mhaskar:1996}]\label{thm:mhaskar1996}
Let $n\in \N,~ p\in [1,\infty].$ Moreover, let $\sigma:\R\to\R$ be a function such that $\sigma|_I \in C^\infty(I)$ for some open interval $I\subset \R$ and $\sigma^{(k)}(x_0)\neq 0$ for some $x_0\in I$ and all $k\in \N_0$. Then, for every $f\in F_{n,d}^p$ and every $\epsilon>0,$ there exists a shallow neural network $\Phi_{f,\epsilon}\in \mathcal{N}_{(d,N,1)}$ such that 
\[\left\|f-\act{\Phi_{f,\epsilon}} \right\|_{p} \leq \epsilon,\] and 
$N \lesssim\epsilon^{-d/n}.$
\end{theorem}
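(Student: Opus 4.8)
The plan is to factor the approximation through polynomials: first approximate $f$ by a polynomial of controlled degree using classical Sobolev approximation theory, and then realize that polynomial (approximately) by a shallow $\sigma$-network with few neurons, exploiting the derivative structure of $\sigma$ at $x_0$ exactly as in the proof of Theorem~\ref{thm:PinkusUniversal}. For Step~1, given $f\in F_{n,d}^p$ I would invoke a multivariate Jackson-type estimate: there is a polynomial $P$ of total degree at most $r$ on $[0,1]^d$ with $\norm{f-P}_p \lesssim r^{-n}\norm{f}_{W^{n,p}([0,1]^d)}\lesssim r^{-n}$. Choosing $r\sim \epsilon^{-1/n}$ makes this error $\lesssim\epsilon$, and the dimension of the space of such polynomials is $\binom{r+d}{d}\sim r^d$, which is already the target count $\epsilon^{-d/n}$.

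For Step~2 the hypothesis $\sigma^{(k)}(x_0)\neq 0$ for all $k\in\N_0$ is essential. Fixing a direction $w\in\R^d$, I would consider the $r+1$ neurons $x\mapsto \sigma(y_i\, w\cdot x + x_0)$, $i=0,\dots,r$, with scalings $y_i$. Expanding $\sigma$ in its Taylor series at $x_0$, a linear combination satisfies $\sum_i c_i\,\sigma(y_i\, w\cdot x + x_0)=\sum_{k\geq 0}\tfrac{\sigma^{(k)}(x_0)}{k!}(w\cdot x)^k\sum_i c_i y_i^k$; solving the Vandermonde system $\sum_i c_i y_i^k=\delta_{km}\,k!/\sigma^{(k)}(x_0)$ for $k\leq r$ extracts the single ridge power $(w\cdot x)^m$ up to a tail of size $O(\max_i|y_i|^{r+1})$, which is negligible as the scalings shrink. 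The crucial efficiency is that the \emph{same} $r+1$ neurons, with $m$-dependent coefficients, yield every power $(w\cdot x)^m$, $m\leq r$, simultaneously. Since $k$-th powers of linear forms span the homogeneous polynomials of degree $k$, and $\sim r^{d-1}$ generic directions suffice to cover all degrees up to $r$, I would select such a family of directions, producing a network with $N\lesssim r\cdot r^{d-1}=r^d\sim\epsilon^{-d/n}$ neurons whose realization approximates $P$ to accuracy $\lesssim\epsilon$. Step~3 then combines the two $O(\epsilon)$ errors by the triangle inequality to get $\norm{f-\act{\Phi_{f,\epsilon}}}_p\lesssim\epsilon$ with $N\lesssim\epsilon^{-d/n}$, and rescaling $\epsilon$ by a fixed constant absorbs the implied constant.

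The main obstacle is the sharp neuron count in Step~2. A naive argument that approximates each of the $\sim r^d$ basis monomials by its own divided difference of order up to $r$ spends $O(r)$ neurons per monomial and yields the suboptimal bound $r^{d+1}$. Obtaining the claimed $r^d$ requires the observation that one block of $r+1$ scaled neurons per direction already produces all powers up to degree $r$ in that direction (the Vandermonde extraction above), together with the linear-algebra fact that only $\sim r^{d-1}$ directions are needed because powers of linear forms span homogeneous polynomials. Making both the tail control and the genericity of the chosen directions precise is where the real work lies; the Sobolev polynomial estimate of Step~1 and the triangle-inequality bookkeeping of Step~3 are essentially routine.
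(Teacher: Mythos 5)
Your route is the same one the paper sketches for this theorem (and it is the route of \cite{Mhaskar:1996} itself): factor the approximation through polynomials, then emulate the polynomial by a shallow network using the non-vanishing derivatives of $\sigma$ at $x_0$. Your counting argument is also the right one, and it is where the sharpness lies: one Vandermonde block of $r+1$ scaled neurons per direction serves \emph{all} ridge powers $(w\cdot x)^m$, $m\leq r$, simultaneously, and $\sim r^{d-1}$ generic directions suffice because powers of linear forms span the homogeneous polynomials; this gives $N\lesssim r^d\sim\epsilon^{-d/n}$ rather than the naive $r^{d+1}$. Moreover, the weights this construction produces depend continuously on $f$ (through the polynomial coefficients), which is exactly why the paper can invoke Theorem~\ref{thm:lower_devore} to call this rate optimal.

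There is, however, one step that fails as written: you expand $\sigma$ in its Taylor series at $x_0$ and use the identity $\sum_i c_i\,\sigma(y_i\,w\cdot x+x_0)=\sum_{k\geq 0}\frac{\sigma^{(k)}(x_0)}{k!}(w\cdot x)^k\sum_i c_i y_i^k$ together with a tail bound $O(\max_i|y_i|^{r+1})$. The hypothesis only gives $\sigma|_I\in C^\infty(I)$ with $\sigma^{(k)}(x_0)\neq 0$; analyticity is not assumed, and by Borel's theorem there exist admissible $\sigma$ (e.g.\ with $\sigma^{(k)}(x_0)=(k!)^2$) whose Taylor series at $x_0$ diverges for every nonzero increment, so both the series identity and the tail estimate are unfounded in general. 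The standard repair is the one implicit in the paper's sketch of Theorem~\ref{thm:PinkusUniversal}, which you cite: avoid the infinite series, and either use Taylor's formula with Peano remainder up to finite order $r$, or realize $(w\cdot x)^m\,\sigma^{(m)}(x_0)=\frac{\partial^m}{\partial W^m}\sigma\big(W(w\cdot x)+x_0\big)\big|_{W=0}$ and approximate it by $m$-th order finite differences in the weight variable on the common grid $W\in\{0,h,\dots,rh\}$ -- note that this grid uses the same $r+1$ neurons per direction for every $m\leq r$, so your neuron count survives intact. Relatedly, your tail bookkeeping ignores that the Vandermonde coefficients blow up, $|c_i|\sim(\max_i|y_i|)^{-m}$; with the Peano remainder the honest error is $o\big((\max_i|y_i|)^{r-m}\big)$, which still vanishes as the scalings shrink because $m\leq r$. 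With these two repairs your argument is correct and coincides with the proof the paper attributes to \cite{Mhaskar:1996}.
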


We note that this rate is optimal if one assumes \emph{continuous} dependency of the parameters on the approximating function (see~Theorem~\ref{thm:lower_devore}). This is fulfilled in the proof of the upper bounds of Theorem \ref{thm:mhaskar1996}, since it is based on the approximation of Taylor polynomials by realizations of shallow neural networks.
It has been mentioned in \cite[Section 4]{Mhaskar:1996} that this rate can be improved if the function $f$ is analytic. 

The aforementioned lower and upper bounds suffer from the curse of dimensionality. This can for example be avoided if the function class under consideration is assumed to have strong regularity. As an example we state a result given in~\cite{Barron1993,Barron1994,Makozov} where finite Fourier moment conditions are used.

\begin{theorem}[\cite{Barron1993,Barron1994,Makozov}]
Let $\sigma$ be a bounded, measurable, and sigmoidal function. Then, for every \[f\in  \left\{g:\R^d\to \R,~ \int_{\R^d} |\xi|\cdot |\mathcal{F}g(\xi)|~ d\xi < \infty\right\},\] where $\mathcal{F}g$ denotes the Fourier Transform of $g,$ and for every $\epsilon>0,$ there exists a shallow neural network $\Phi_{f,\epsilon}\in \mathcal{N}_{(d,N,1)}$ with 
\[\left\|f-\act{\Phi_{f,\epsilon}} \right\|_{2} \leq \epsilon,\] and $N \lesssim\epsilon^{-2d/(d+1)}.$
\end{theorem}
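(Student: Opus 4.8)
The plan is to prove this (Barron's theorem, in the sharpened form due to Makovoz) by combining an integral Fourier representation of $f$ with a probabilistic convex-combination argument, where the improved exponent $\frac{2d}{d+1}$ comes from exploiting the low-dimensional geometry of the admissible ridge functions. Throughout, $\norm{\cdot}_2$ is understood as the $L^2$-norm over a fixed bounded domain $K\subset\R^d$ (equivalently, with respect to a probability measure supported on $K$), since the sigmoidal ridge functions below only lie in $L^2$ after restriction to a bounded set.

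First I would establish the integral representation. Writing $f(x)=\int_{\R^d}\mathcal{F}f(\xi)\,e^{\imi \pgen{\xi,x}}\,d\xi$ and taking real parts, the finiteness of $C_f \coloneqq \int_{\R^d}|\xi|\cdot|\mathcal{F}f(\xi)|\,d\xi$ lets me write, up to an affine term,
\[
  f(x)-f(0)=\int_{\R^d}\big(\cos(\pgen{\xi,x}+\theta(\xi))-\cos\theta(\xi)\big)\,|\mathcal{F}f(\xi)|\,d\xi,
\]
i.e.\ $f$ is an average of one-dimensional \emph{ridge profiles} $x\mapsto g_\xi(\pgen{\xi,x})$ against a finite measure whose total mass is controlled by $C_f$ (the factor $|\xi|$ appears upon normalising the direction to $\xi/|\xi|$ and accounting for the Lipschitz constant of each profile). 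Each ridge profile, being a Lipschitz function of a single variable on a bounded interval, can in turn be approximated arbitrarily well by sigmoidal units $x\mapsto\sigma(\mathbf{W}x+\mathbf{b})$, using that a bounded sigmoidal $\sigma$ approaches a step profile after rescaling. The upshot is that $f|_K$ lies in the $L^2(K)$-closure of $C_f\cdot\co(G)$, where $G$ is the dictionary of signed, rescaled sigmoidal ridge functions.

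Next I would invoke a Maurey–Makovoz estimate. The classical Maurey bound states that any $f$ in the closure of $\co(G)$ with $\sup_{g\in G}\norm{g}_2\le b$ admits an $n$-term convex combination $f_n=\sum_{i=1}^n c_i g_i$ with $\norm{f-f_n}_2\lesssim C_f\, b/\sqrt n$, already yielding the dimension-free rate $N\lesssim\epsilon^{-2}$. To sharpen this to the stated exponent I would use Makovoz's refinement, in which the uniform bound $b$ is replaced by the covering radius
\[
  \epsilon_n(G)\coloneqq\inf\Big\{\delta>0:\ G\text{ is covered by }n\ L^2(K)\text{-balls of radius }\delta\Big\},
\]
giving the improved bound $\norm{f-f_n}_2\lesssim C_f\,\epsilon_n(G)/\sqrt n$.

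The decisive step, and the main obstacle, is to control $\epsilon_n(G)$. Here one exploits that each element of $G$ is (approximately) a half-space indicator $x\mapsto\mathbf{1}_{\{\pgen{a,x}\ge \beta\}}$ with direction $a\in S^{d-1}$ and offset $\beta$ in a bounded interval, so that $G$ is parametrised by an at most $d$-dimensional set. Since the parameter-to-$L^2(K)$ map is Hölder continuous with exponent $1/2$ (displacing a step boundary by $\eta$ perturbs the function by $\bigO(\sqrt\eta)$ in $L^2$), covering the parameters at scale $\delta^2$ covers $G$ at $L^2$-scale $\delta$, whence $N(G,\delta)\lesssim\delta^{-2d}$ and $\epsilon_n(G)\lesssim n^{-1/(2d)}$. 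Substituting into the Makovoz bound gives $\norm{f-f_n}_2\lesssim C_f\, n^{-1/(2d)-1/2}=C_f\,n^{-(d+1)/(2d)}$, and solving $n^{-(d+1)/(2d)}\sim\epsilon$ for $n$ yields the claimed width $N\lesssim\epsilon^{-2d/(d+1)}$. The technical heart is precisely this entropy estimate together with making the greedy/probabilistic selection rigorous in the presence of the step-function discontinuity, and tracking the auxiliary error incurred by replacing the smooth cosine ridges with sharp sigmoidal units without degrading the exponent.
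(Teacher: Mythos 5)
The paper itself contains no proof of this statement---it is quoted as a survey theorem with citations to Barron and Makovoz---so your proposal can only be judged against those sources, and it reconstructs their argument faithfully and correctly. Your two-stage route (Barron's Fourier representation of $f-f(0)$ as an element of $C_f\cdot\overline{\mathrm{conv}}$ of signed step/sigmoidal ridge functions, Maurey sampling for the dimension-free rate $N\lesssim\epsilon^{-2}$, then Makovoz's entropy-refined sampling with the covering bound $\epsilon_n(G)\lesssim n^{-1/(2d)}$ obtained from the H\"older-$\nicefrac{1}{2}$ dependence of half-space indicators on their $d$-dimensional parameters) is exactly how the cited works arrive at the exponent $2d/(d+1)$, and your bookkeeping---restricting to a bounded domain so the ridge dictionary lies in $L^2$, absorbing the constant term in the output bias, and replacing each of the $n$ step units by a sufficiently sharp rescaled sigmoid so the exponent is not degraded---closes the remaining gaps in the right way.
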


\noindent Although the dimension appears in the underlying rate, a curse of dimensionality is absent.

Lastly, we present a result where a complexity bound is derived for approximations of a \emph{finite} set of test points. It was shown in \cite{sontag1992feedforward} that if $\sigma:\R\to \R$ is sigmoidal and differentiable in one point $x\in \R$ with non-zero derivative, and if $(x_1,y_1),\dots,$ $(x_{2N+1},y_{2N+1})\in \R\times \R$ for some $N\in \N$ is a set of test points, then, for every $\epsilon>0$, there exists a neural network $\Phi_\epsilon\in \mathcal{N}_{(1,N+1,1)}$ such that 
\[ \sup_{i=1,\dots,2N+1}\left|\act{\Phi_\epsilon}(x_i)-y_i \right| \leq \epsilon.\]

This concludes the part where we examine shallow neural networks. Because of the multitude of existing results we could only discuss a representative selection. For a more comprehensive overview focusing solely on shallow neural networks we refer to \cite{pinkus1999approximation}. 

\section{Universality of Deep Neural Networks}\label{sec:Universal}

So far, the focus was entirely on shallow neural networks. In practice, however, the use of \emph{deep} neural networks, i.e.\ networks with $L>2$ layers, has established itself. 

First attempts to study the expressivity of deep neural networks are based on the \emph{Kolmogorov Superposition Theorem} (see, for instance, \cite{kolmogorov1957representation, KolmogorovThm}). A variant of this theorem (see \cite{sprecher1965structure}) states that every continuous function $f:[0,1]^d\to \R$ can be \emph{exactly} represented as
\begin{align} \label{eq:Kolmogorov}
    f(x_1,\dots, x_d)=\sum_{i=1}^{2d+1}g\left(\sum_{j=1}^dk_j\phi_i(x_j)\right).
\end{align}
Here, $k_j>0$ for $j=1,\dots,d$, such that $\sum_{j=1}^d k_j\le 1$, $\phi_i:[0,1]\rightarrow [0,1]$, $i=1,\dots, 2d+1$, strictly increasing and $g\in C([0,1])$ are functions depending on $f$.
In~\cite{hecht1987kolmogorov}, Equation~\eqref{eq:Kolmogorov} has been interpreted in the context of (a general version of) a neural network. It yields that every continuous function can be exactly represented by a 3-layer neural network with width $2d^2+d$  and different activation functions in each neuron depending on the function to be approximated.
However, quoting~\cite{hecht1987kolmogorov}, the ``direct usefulness of this result is doubtful'', since it is not known how to construct the functions $g,\phi_1,\dots,\phi_{2d+1},$ which play the roles of the activation functions. Furthermore, in \cite{Poggio1989Kolmogorov} it has been pointed out that the dependence of the activation functions on $f$ makes this approach hardly usable in practice. We refer to~\cite{Poggio1989Kolmogorov} for a more detailed discussion about the suitability of the Kolmogorov Superposition Theorem in this context.
 
Subsequent contributions focused on more practical neural network architectures, where the activation functions are fixed a priori and only the parameters of the affine maps are adjusted.
Under these assumptions, however, the capability of representing every function in $C([0,1]^d)$ in an exact way is lost. Consequently, the expressivity of neural networks with a fixed activation function has been studied in terms of their approximative power for specific function classes $\Ccal$. We note that the universality results for shallow neural networks from the last section in general also hold for \emph{deep} neural networks with a fixed number of layers (see \cite{FUNAHASHI1989,Hornik1989universalApprox}). In~\cite[Corollary 2.7]{Hornik1989universalApprox} universality for shallow networks in $C(K)$ is transferred to the multi-layer case via~\cite[Lemma A.6]{Hornik1989universalApprox}. It states that if $F,G\subset C(\R)$ are such that $F|_K,G|_K$ are dense sets for every compact subset $K\subset \R$, then also $\{f\circ g:f\in F,g\in G\}$ is dense in $C(K).$

In contrast to the lower bounds in~Theorem \ref{thm:lowShallow}, for the case of three layer neural networks it is possible to show the existence of a pathological (i.e.\ in practice unusable) activation function $\sigma$ such that the set $\RNcal_{(d,2d+1, 4d+3,1),\sigma}$ is dense in $C([-1,1]^d)$ (see~\cite{Maiorov1999LowerBounds}). As in the case of Equation~\eqref{eq:Kolmogorov}, the remarkable independence of the complexity on the approximation error and its mere linear dependence on $d$ (implying that the curse of dimensionality can be circumvented) is due to the choice of the activation function. However, for practically used activation functions such as the ReLU, parametric ReLU, exponential linear unit, softsign and tanh, universality in $C(K)$ does not hold (\cite[Remark 2.3]{TopNet}). 

The dual problem of considering neural networks with \emph{fixed depth} and \emph{unbounded width}, is to explore expressivity of neural networks with \emph{fixed width} and \emph{unrestricted depth}. 
In~\cite{hanin2017approximating,HaninFiniteWidth} it is shown that the set of ReLU neural networks with width $\geq d+n$ and unrestricted depth is an universal approximator for the function class $\Ccal=C([0,1]^d,\R^n)$. In the case $n=1,$ the lower bound on the width is sharp. For $\Ccal= \mathcal{L}_1(\R^d),$ the paper \cite{WidthNIPS} establishes universality of deep ReLU neural networks with width $\geq d+4.$ The necessary width for ReLU neural networks to yield universal approximators is bounded from below by~$d.$ 

\section{Approximation of Classes of Smooth Functions} \label{sec:ApproxSmooth}

In this section, we examine approximation rates of \emph{deep} neural networks for functions characterized by smoothness properties. This section can be seen as a counterpart of Subsection~\ref{subsec:shallow_lower_bounds} and~\ref{subsec:shallow_upper_bounds} with three major differences: We now focus on deep neural networks (instead of shallow ones); most of these results were shown after the rise of deep learning in 2012; currently used activation functions (like e.g. the ReLU\footnote{The first activation function that was used was the \emph{threshold function} $\sigma = \mathbf{1}_{[0,\infty)}$ (see \cite{MP43}). This was biologically motivated and constitutes a mathematical interpretation of the fact that "a neuron fires if the incoming signal is strong enough". Since this function is not differentiable everywhere and its derivative is zero almost everywhere, smoothened versions (sigmoidal functions) have been employed to allow for the usage of the backpropagation algorithm. These functions, however, are subject to the \emph{vanishing gradient problem}, which is not as common for the ReLU (see \cite{VanGradient}). Another advantage of the ReLU is that it is easy to compute and promotes sparsity in data representation (see~\cite{ReLUSparsity}). 
}) are analyzed. 

A groundlaying result has been given in~\cite{YAROTSKY2017103}. There it has been shown that for each $\epsilon>0,$ and for each function $f$ in $F^\infty_{n,d}$, there exists a ReLU neural network with $L\lesssim\log_2(1/\eps)$ layers, as well as $M,N \lesssim\eps^{-d/n}\log_2(1/\eps)$ weights and neurons capable of approximating $f$ with $\mathcal{L}_\infty$-approximation error $\epsilon$. A generalization of this result for functions from $F_{n,d}^p$ with error measured in the $W^{r,p}$-norm\footnote{Here, $W^{r,p}$ for $r\in (0,1)$ denotes the Sobolev-Slobodeckij spaces as considered in \cite[Definition 3.3]{guhring2019error}} for $0\leq r\leq 1$ was obtained in~\cite{guhring2019error}. It shows that the error can also be measured in norms that include the distance of the derivative and that there is a trade-off between the regularity $r$ used in the approximating norm and the complexity of the network. The following theorem summarizes the findings of \cite{YAROTSKY2017103}\footnote{The results are presented in a slightly modified version. The neural networks considered in~\cite{YAROTSKY2017103} are allowed to have skip connections possibly linking a layer to all its successors. The rates obtained are equal, only the square power of the logarithm needs to be removed.} and \cite{guhring2019error}.
\begin{theorem}[\cite{YAROTSKY2017103, guhring2019error}]\label{thm:yarotsky}
For every $f\in F_{n,d}^p$ there exists a neural network $\Phi_{f,\eps}$ whose realization is capable of approximating $f$ with error $\eps$ in $W^{r,p}$-norm $(0\leq r\leq 1)$, with
\begin{itemize}
    \item[(i)] $M(\Phi_{f,\eps}) \lesssim\epsilon^{-d/(n-r)}\cdot\log^2_2\left(\eps^{-n/(n-r)}\right),$ and
    \item[(ii)] $L(\Phi_{f,\eps}) \lesssim \log_2\left(\eps^{-n/(n-r)}\right)$.
\end{itemize}
\end{theorem}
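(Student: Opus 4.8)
The plan is to adapt the constructive, Taylor-expansion-based strategy behind Theorem~\ref{thm:mhaskar1996} to deep ReLU networks, whose decisive new ingredient is an efficient \emph{approximate multiplication} gate. \textbf{Step 1 (approximate multiplication).} Following \cite{YAROTSKY2017103}, the map $x\mapsto x^2$ on $[0,1]$ can be approximated to accuracy $2^{-2s}$ by a ReLU network $\appsq$ with $O(s)$ layers and $O(s)$ nonzero weights, assembled from iterated triangle (sawtooth) functions. The polarization identity $xy=\tfrac12\big((x+y)^2-x^2-y^2\big)$ then yields an approximate multiplication network $\apmult$ that, on any fixed bounded box, approximates $(x,y)\mapsto xy$ to accuracy $\delta$ using $O(\log_2(1/\delta))$ layers and weights. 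For the present statement I additionally need this convergence to hold not merely in $\norm{\cdot}_\infty$ but in $W^{1,p}$, and hence by interpolation in $W^{r,p}$ for $0\le r\le 1$; this forces me to track the (a.e.) derivatives of the piecewise-linear sawtooth approximants.

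\textbf{Step 2 (partition of unity and local polynomials).} For a resolution parameter $N\in\N$ I would fix the grid points $\mathbf{m}/N$ for $\mathbf{m}\in\MNd$ together with the associated partition of unity $\phi_{\mathbf{m}}(x)=\prod_{\ell=1}^d\psi\big(3N(x_\ell-m_\ell/N)\big)$, where $\psi$ is a fixed univariate tent function, so that each $\phi_{\mathbf{m}}$ is a product of $d$ one-dimensional piecewise-linear functions supported near $\mathbf{m}/N$. Writing $P_{\mathbf{m}}$ for the degree-$(n-1)$ Taylor polynomial of $f$ at $\mathbf{m}/N$, the classical localized Taylor (Bramble--Hilbert) estimate gives
\[
  \norm{f-\sum_{\mathbf{m}\in\MNd}\phi_{\mathbf{m}}P_{\mathbf{m}}}_{W^{r,p}}\lesssim N^{-(n-r)}
\]
uniformly over $f\in F_{n,d}^p$; the loss of exactly $r$ derivatives in the norm is what produces the exponent $n-r$ in the final rate.

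\textbf{Step 3 (network realization and bookkeeping).} Each summand $\phi_{\mathbf{m}}P_{\mathbf{m}}$ is a product of a number of affine functions and coordinate monomials bounded in terms of $n$ and $d$ only, so it is realized by composing $O_{n,d}(1)$ of the gates $\apmult$ from Step~1, and the outer sum over $\mathbf{m}$ is a single linear layer. Choosing the per-gate accuracy $\delta$ polynomially small in $\eps$ keeps the realization error below $\eps$ while costing only $O(\log_2(1/\eps))$ in depth. To reach total error $\eps$ I set $N\sim\eps^{-1/(n-r)}$, whence the number of active branches is $(N+1)^d\sim\eps^{-d/(n-r)}$ and the common depth is $L\sim\log_2(\eps^{-n/(n-r)})$, giving~(ii). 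Realizing these $\sim\eps^{-d/(n-r)}$ branches inside a \emph{single feedforward} network without skip connections forces each branch to carry its intermediate value across all $L$ layers through the identity $x=\sigma(x)-\sigma(-x)$; emulating in this way the skip connections used in \cite{YAROTSKY2017103} multiplies the weight count by the extra factor $L\sim\log_2(1/\eps)$ and is precisely the source of the squared logarithm in~(i).

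I expect the main obstacle to be Step~1 carried out in the Sobolev topology rather than in $\norm{\cdot}_\infty$: the sawtooth approximants of $x^2$ converge uniformly, but their derivatives oscillate and do not converge uniformly, so controlling the error of $\apmult$ — and then of the whole composite network $\sum_{\mathbf{m}}\phi_{\mathbf{m}}P_{\mathbf{m}}$ — in $W^{1,p}$ is the genuinely new difficulty beyond \cite{YAROTSKY2017103}. The resolution I would pursue is to bound the derivative error in $\mathcal{L}_p$, exploiting that the error of each approximant is supported on a set whose measure shrinks geometrically in $s$, so that its bounded amplitude is integrable against a small support, and then to reach fractional $0<r<1$ by interpolating between the $W^{0,p}=\mathcal{L}_p$ and $W^{1,p}$ estimates.
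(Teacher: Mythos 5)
Your proof skeleton coincides with the one the paper sketches for this theorem (Yarotsky's construction as extended by \cite{guhring2019error}): sawtooth approximation of the square, polarization to obtain the approximate multiplication $\apmult$, the tent-function partition of unity $\phi_{\mathbf m}$ combined with local Taylor polynomials and a Bramble--Hilbert bound of order $N^{-(n-r)}$, the choice $N\sim\eps^{-1/(n-r)}$, and the emulation of skip connections as the source of the squared logarithm --- all of this is what the paper does, and your bookkeeping in Steps 2 and 3 is correct. The gap is in the step you yourself single out as the genuinely new difficulty, and both assertions you make there are false. First, the derivatives of the sawtooth approximants \emph{do} converge uniformly: $f_s$ is the piecewise linear interpolant of $x\mapsto x^2$ at the $2^s+1$ equidistant nodes $k2^{-s}$, so on $\left[k2^{-s},(k+1)2^{-s}\right]$ its (a.e.) derivative is the constant $(2k+1)2^{-s}$, whereas $2x$ there runs over $\left[2k\,2^{-s},(2k+2)2^{-s}\right]$; hence
\[
  \esssup_{x\in[0,1]}\pabs{f_s'(x)-2x}\le 2^{-s},
\]
a uniform bound at essentially the square root of the $\mathcal{L}_\infty$ rate $2^{-2s-2}$. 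Second, the error is \emph{not} supported on a set of geometrically shrinking measure: on each cell one has $f_s(x)-x^2=(x-k2^{-s})\bigl((k+1)2^{-s}-x\bigr)$, with derivative $k2^{-s}+(k+1)2^{-s}-2x$, so both the value error and the derivative error are spread over all of $[0,1]$ for every $s$; it is the amplitude, not the support, that decays. Consequently your proposed resolution --- integrating a bounded amplitude over a small support to control the derivative error in $\mathcal{L}_p$ --- rests on a false premise and fails as written.

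The repair is simpler than what you propose and is how \cite{guhring2019error} actually proceeds: use the uniform bound $2^{-s}$ on the derivative error directly. It passes through the polarization identity to give a $W^{1,\infty}$ error bound for $\apmult$ on a fixed box (still with only $O(\log_2(1/\delta))$ layers for accuracy $\delta$), and then through the product and chain rules to the networks realizing $\phi_{\mathbf m}P_{\mathbf m}$, where all factors and their approximants have uniformly bounded Lipschitz constants; since a derivative falling on $\phi_{\mathbf m}$ produces a factor $N$, the per-gate accuracy must indeed be taken polynomially small in $\eps$, which costs only $O(\log_2(1/\eps))$ depth, as you anticipated. The fractional range $0<r<1$ is then obtained by interpolation between the $\mathcal{L}_p$ and $W^{1,p}$ estimates, exactly as in your plan. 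With your last paragraph replaced by this direct derivative computation, the proposal becomes a correct proof along the same lines as the paper's.
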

\begin{remark}\label{rem:Square}
The depth scales logarithmically in the approximation accuracy. This is due to the ReLU, which renders the approximation of the map $x\mapsto x^2$ difficult. For activation functions $\sigma$ with $\sigma(x_0),$ $\sigma^{'}(x_0),$ $\sigma^{''}(x_0)\neq 0,$ for some $x_0\in \R$ it is possible to approximate $x\mapsto x^2$ by a neural network with a fixed number of weights and $L=2$ (see in~\cite{PowerOfDepth}).
\end{remark}

A common tool for deducing upper complexity bounds is based on the approximation of suitable representation systems. Smooth functions, for instance, can be locally described by (Taylor) polynomials, i.e. 
\[f\approx \sum_{\|\alpha\|_1\leq k}^n c_k (\cdot- {x}_0)^\alpha \quad \text{ locally.}\]
Approximating monomials by neural networks then yields an approximation of $f$.\footnote{For shallow neural networks this ansatz has e.g. been used in \cite{Mhaskar:1996}.} The proof strategy employed by Yarotsky is also based on this idea. It was picked up by several follow-up works (\cite{guhring2019error, PetV2018OptApproxReLU}) and we describe it here in more detail.

\emph{Sketch of Proof:} The core of the proof is an approximation of the square function $x\mapsto x^2$ by a piecewise linear interpolation that can be expressed by ReLU neural networks (see Figure \ref{fig:YarotskyUpper} for a visualization). First, we define $g\colon [0,1] \to [0,1]$, by $g(x) \coloneqq \min \{2x, 2-2x\}$. Notice that the hat function $g$ is representable by a ReLU neural network. 
Multiple compositions of $g$ with itself result in saw-tooth functions (see Figure~\ref{fig:YarotskyUpper}). We set, for $m \in \N$, $g_1 \coloneqq g$ and $g_{m+1} \coloneqq g \circ g_m$. 
It was demonstrated in \cite{YAROTSKY2017103} that
    \begin{align*}
        x^2 = \lim_{n \to \infty} f_n(x)  \coloneqq \lim_{n \to \infty} x - \sum_{m = 1}^n \frac{g_m(x)}{2^{2m}}, \quad \text{ for all } x \in [0,1].
    \end{align*} 
    
    \begin{figure}
    \centering
    \begin{subfigure}{.4\textwidth}
    \if\b0
% This file was created by tikzplotlib v0.8.2.
\begin{tikzpicture}[scale=0.7, transform shape]

\definecolor{color0}{rgb}{0.12156862745098,0.466666666666667,0.705882352941177}
\definecolor{color1}{rgb}{1,0.498039215686275,0.0549019607843137}
\definecolor{color2}{rgb}{0.172549019607843,0.627450980392157,0.172549019607843}

\begin{axis}[
legend cell align={left},
legend style={at={(0.5,0.09)}, anchor=south, draw=white!80.0!black},
tick align=outside,
tick pos=both,
x grid style={white!69.01960784313725!black},
xmin=0, xmax=1,
xtick style={color=black},
y grid style={white!69.01960784313725!black},
ymin=0, ymax=1,
ytick style={color=black}
]
\addplot [semithick, color0]
table {%
0 0
0.5 1
1 0
};
\addlegendentry{g}
\addplot [semithick, color1]
table {%
0 0
0.25 1
0.5 0
0.75 1
1 0
};
\addlegendentry{g2}
\addplot [semithick, color2]
table {%
0 0
0.125 1
0.25 0
0.375 1
0.5 0
0.625 1
0.75 0
0.875 1
1 0
};
\addlegendentry{g3}
\end{axis}

\end{tikzpicture}
\fi

\if\b1
% This file was created by tikzplotlib v0.8.2.
\begin{tikzpicture}[scale=0.7, transform shape]

\definecolor{color0}{rgb}{0.12156862745098,0.466666666666667,0.705882352941177}
\definecolor{color1}{rgb}{1,0.498039215686275,0.0549019607843137}
\definecolor{color2}{rgb}{0.172549019607843,0.627450980392157,0.172549019607843}

\begin{axis}[
legend cell align={left},
legend style={at={(0.5,0.09)}, anchor=south, draw=white!80.0!black},
tick align=outside,
tick pos=both,
x grid style={white!69.01960784313725!black},
xmin=0, xmax=1,
xtick style={color=black},
xtick={0,0.2,0.4,0.6,0.8,1},
xticklabels={ ,,,,,},
y grid style={white!69.01960784313725!black},
ymin=0, ymax=1,
ytick style={color=black},
ytick={0,0.2,0.4,0.6,0.8,1},
yticklabels={ ,,,,,}
]
\addplot [semithick, color0]
table {%
0 0
0.5 1
1 0
};
\addlegendentry{g}
\addplot [semithick, color1]
table {%
0 0
0.25 1
0.5 0
0.75 1
1 0
};
\addlegendentry{g2}
\addplot [semithick, color2]
table {%
0 0
0.125 1
0.25 0
0.375 1
0.5 0
0.625 1
0.75 0
0.875 1
1 0
};
\addlegendentry{g3}
\end{axis}

\end{tikzpicture}
\fi

\if\b2
% This file was created by tikzplotlib v0.8.2.
\begin{tikzpicture}[scale=0.7, transform shape]

\definecolor{color0}{rgb}{0.12156862745098,0.466666666666667,0.705882352941177}
\definecolor{color1}{rgb}{1,0.498039215686275,0.0549019607843137}
\definecolor{color2}{rgb}{0.172549019607843,0.627450980392157,0.172549019607843}

\begin{axis}[
legend cell align={left},
legend style={at={(0.5,0.09)}, anchor=south, draw=white!80.0!black,row sep=3,font=\huge},
tick pos=both,
xmin=0, xmax=1,
ymin=0, ymax=1,
xtick style={color=white},
ytick style={color=white},
yticklabels={ ,,,,,},
xticklabels={ ,,,,,}
]
\addplot [semithick, color0]
table {%
0 0
0.5 1
1 0
};
\addlegendentry{$g$}
\addplot [semithick, color1]
table {%
0 0
0.25 1
0.5 0
0.75 1
1 0
};
\addlegendentry{$g_2$}
\addplot [semithick, color2]
table {%
0 0
0.125 1
0.25 0
0.375 1
0.5 0
0.625 1
0.75 0
0.875 1
1 0
};
\addlegendentry{$g_3$}
\end{axis}

\end{tikzpicture}
\fi
    \end{subfigure}
    $\qquad\qquad$ %\hspace{.9cm}
    \begin{subfigure}{.4\textwidth}
    \if\b0
% This file was created by tikzplotlib v0.8.2.
\begin{tikzpicture}[scale=0.7, transform shape]

\definecolor{color0}{rgb}{0.12156862745098,0.466666666666667,0.705882352941177}
\definecolor{color1}{rgb}{1,0.498039215686275,0.0549019607843137}
\definecolor{color2}{rgb}{0.172549019607843,0.627450980392157,0.172549019607843}
\definecolor{color3}{rgb}{0.83921568627451,0.152941176470588,0.156862745098039}

\begin{axis}[
legend cell align={left},
legend style={at={(0.03,0.97)}, anchor=north west, draw=white!80.0!black},
tick align=outside,
tick pos=both,
x grid style={white!69.01960784313725!black},
xmin=0, xmax=1,
xtick style={color=black},
y grid style={white!69.01960784313725!black},
ymin=0, ymax=1,
ytick style={color=black}
]
\addplot [semithick, color0, dashed]
table {%
0 0
0.1 0.01
0.2 0.04
0.3 0.09
0.4 0.16
0.5 0.25
0.6 0.36
0.7 0.49
0.8 0.64
0.9 0.81
1 1
};
\addlegendentry{$x\mapsto x^2$}
\addplot [semithick, color1]
table {%
0 0
0.1 0.1
0.2 0.2
0.3 0.3
0.4 0.4
0.5 0.5
0.6 0.6
0.7 0.7
0.8 0.8
0.9 0.9
1 1
};
\addlegendentry{f0}
\addplot [semithick, color2]
table {%
0 0
0.1 0.05
0.2 0.1
0.3 0.15
0.4 0.2
0.5 0.25
0.6 0.4
0.7 0.55
0.8 0.7
0.9 0.85
1 1
};
\addlegendentry{f1}
\addplot [semithick, color3]
table {%
0 0
0.1 0.025
0.2 0.05
0.3 0.1
0.4 0.175
0.5 0.25
0.6 0.375
0.7 0.5
0.8 0.65
0.9 0.825
1 1
};
\addlegendentry{f2}
\end{axis}

\end{tikzpicture}
\fi
\if\b1
% This file was created by tikzplotlib v0.8.2.
\begin{tikzpicture}[scale=0.7, transform shape]

\definecolor{color0}{rgb}{0.12156862745098,0.466666666666667,0.705882352941177}
\definecolor{color1}{rgb}{1,0.498039215686275,0.0549019607843137}
\definecolor{color2}{rgb}{0.172549019607843,0.627450980392157,0.172549019607843}
\definecolor{color3}{rgb}{0.83921568627451,0.152941176470588,0.156862745098039}

\begin{axis}[
legend cell align={left},
legend style={at={(0.03,0.97)}, anchor=north west, draw=white!80.0!black},
tick align=outside,
tick pos=both,
x grid style={white!69.01960784313725!black},
xmin=0, xmax=1,
xtick style={color=black},
xtick={0,0.2,0.4,0.6,0.8,1},
xticklabels={ ,,,,,},
y grid style={white!69.01960784313725!black},
ymin=0, ymax=1,
ytick style={color=black},
ytick={0,0.2,0.4,0.6,0.8,1},
yticklabels={ ,,,,,}
]
\addplot [semithick, color0, dashed]
table {%
0 0
0.1 0.01
0.2 0.04
0.3 0.09
0.4 0.16
0.5 0.25
0.6 0.36
0.7 0.49
0.8 0.64
0.9 0.81
1 1
};
\addlegendentry{$x\mapsto x^2$}
\addplot [semithick, color1]
table {%
0 0
0.1 0.1
0.2 0.2
0.3 0.3
0.4 0.4
0.5 0.5
0.6 0.6
0.7 0.7
0.8 0.8
0.9 0.9
1 1
};
\addlegendentry{f0}
\addplot [semithick, color2]
table {%
0 0
0.1 0.05
0.2 0.1
0.3 0.15
0.4 0.2
0.5 0.25
0.6 0.4
0.7 0.55
0.8 0.7
0.9 0.85
1 1
};
\addlegendentry{f1}
\addplot [semithick, color3]
table {%
0 0
0.1 0.025
0.2 0.05
0.3 0.1
0.4 0.175
0.5 0.25
0.6 0.375
0.7 0.5
0.8 0.65
0.9 0.825
1 1
};
\addlegendentry{f2}
\end{axis}

\end{tikzpicture}
\fi

\if\b2
% This file was created by tikzplotlib v0.8.2.
\begin{tikzpicture}[scale=0.7, transform shape]

\definecolor{color0}{rgb}{0.12156862745098,0.466666666666667,0.705882352941177}
\definecolor{color1}{rgb}{1,0.498039215686275,0.0549019607843137}
\definecolor{color2}{rgb}{0.172549019607843,0.627450980392157,0.172549019607843}
\definecolor{color3}{rgb}{0.83921568627451,0.152941176470588,0.156862745098039}

\begin{axis}[
legend cell align={left},
legend style={at={(0.03,0.97)}, anchor=north west, draw=white!80.0!black,row sep=1,font=\huge},
tick pos=both,
xmin=0, xmax=1,
ymin=0, ymax=1,
xtick style={color=white},
xtick={0,0.2,0.4,0.6,0.8,1},
xticklabels={ ,,,,,},
y grid style={white!69.01960784313725!black},
ytick style={color=white},
ytick={0,0.2,0.4,0.6,0.8,1},
yticklabels={ ,,,,,}
]
\addplot [semithick, color0, dashed]
table {%
0 0
0.1 0.01
0.2 0.04
0.3 0.09
0.4 0.16
0.5 0.25
0.6 0.36
0.7 0.49
0.8 0.64
0.9 0.81
1 1
};
\addlegendentry{$x\mapsto x^2$}
\addplot [semithick, color1]
table {%
0 0
0.1 0.1
0.2 0.2
0.3 0.3
0.4 0.4
0.5 0.5
0.6 0.6
0.7 0.7
0.8 0.8
0.9 0.9
1 1
};
\addlegendentry{$f_0$}
\addplot [semithick, color2]
table {%
0 0
0.1 0.05
0.2 0.1
0.3 0.15
0.4 0.2
0.5 0.25
0.6 0.4
0.7 0.55
0.8 0.7
0.9 0.85
1 1
};
\addlegendentry{$f_1$}
\addplot [semithick, color3]
table {%
0 0
0.1 0.025
0.2 0.05
0.3 0.1
0.4 0.175
0.5 0.25
0.6 0.375
0.7 0.5
0.8 0.65
0.9 0.825
1 1
};
\addlegendentry{$f_2$}
\end{axis}

\end{tikzpicture}
\fi
    \end{subfigure}
    \caption{Visualization of the approximation of the square function $x\mapsto x^2$ by ReLU realizations of neural networks as in \cite{YAROTSKY2017103}.}
    \label{fig:YarotskyUpper}
\end{figure}
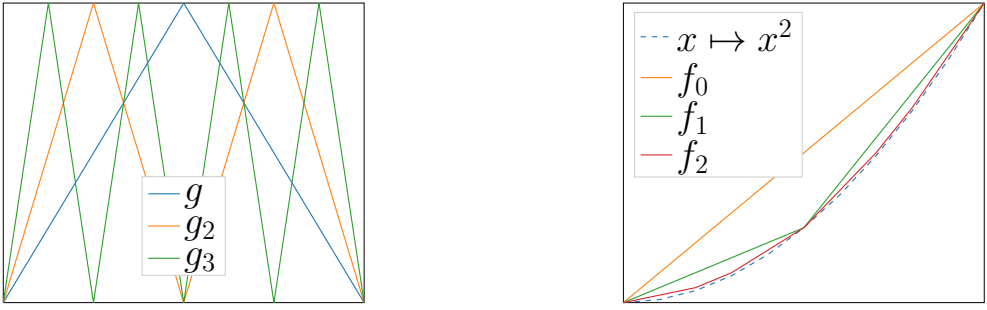
\noindent Hence, there exist neural networks $\Phi_{x^2,\epsilon}$ the ReLU realizations of which approximate $x\mapsto x^2$ uniformly on $[0,1]$ up to an error of $\epsilon$. It can be shown that $M(\Phi_{x^2,\epsilon})$, $L(\Phi_{x^2,\epsilon}), N(\Phi_{x^2,\epsilon})\lesssim \log_2(1/\epsilon)$.
From this a neural network $\Phi_{\mathrm{mult}}$ is constructed via the polarization identity 
\begin{equation*}
	xy=\frac{1}{2}((x+y)^2-x^2-y^2)\qquad\text{for }x,y\in\R,	
\end{equation*}
the ReLU realizations of which locally approximates the multiplication map $(x,y)\mapsto xy$. It is now straight-forward to approximate arbitrary polynomials by realizations of neural networks with $L\lesssim \log_2(1/\epsilon)$ layers and $M\lesssim \mathrm{polylog}(1/\epsilon)$ weights.\footnote{In~\cite{TelgarskyRational}, approximation rates for polynomials have been extended to rational functions. It has been shown, that one can locally uniformly approximate rational functions $f:[0,1]^d\to \R$ up to an error $\epsilon>0$ by ReLU realizations of neural networks $\Phi_{f,\epsilon}$ of size $M\left(\Phi_{f,\epsilon}\right)\lesssim\mathrm{poly}(d)\cdot \mathrm{polylog}(1/\epsilon).$ }
In order to globally approximate $f,$ a partition of unity is constructed with ReLU neural networks and combined with the approximate polynomials. Since the domain needs to be partitioned into roughly $\eps^{-d/n}$ patches, the curse of dimensionality occurs in these bounds.

\hfill $\square$

From Theorem~\ref{thm:lower_devore} we can deduce that under the hypothesis of a continuous dependency of weights and biases on the function $f$ to be approximated, the upper bound shown in Theorem~\ref{thm:yarotsky} for the case $r=0$ is tight (up to a log factor).
If the assumption of continuous weight dependency is dropped, Yarotsky derived a bound for the case $r=0, p=\infty$ and in~\cite{guhring2019error} the case $r=1,p=\infty$ is covered. Both cases are combined in the next theorem. For its exposition we need to introduce additional notation. Analogously to \cite{YAROTSKY2017103}, we denote by $\Arch$ a neural network with unspecified non-zero weights and call it neural network architecture. We say that \emph{the architecture $\mathcal{A}$ is capable of approximating a function $f$ with error $\eps$ and activation function $\sigma$, if this can be achieved by the $\sigma$-realization of some weight assignment.}

\begin{theorem}[\cite{YAROTSKY2017103, guhring2019error}]\label{thm:YAROTSKYLower}
Let $\epsilon>0$ and $r\in\{0,1\}$. If $\Arch_\eps$ is a neural network architecture that is capable of approximating every function from $F^{\infty}_{n,d}$ with error $\eps$ in $W^{r,\infty}$ norm and with activation function ReLU, then 
\[
M(\Arch_\eps)\gtrsim \eps^{\nicefrac{-d}{2(n-r)}}.
\]
\end{theorem}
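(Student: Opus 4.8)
The plan is to turn the approximation hypothesis into a shattering statement for a class of $\{\pm1\}$-valued functions derived from $\Arch_\eps$, and then confront it with the quadratic upper bound on the VC dimension of ReLU networks. The factor $2$ in the exponent is precisely the fingerprint of this argument: a continuous-parameter counting (as in Theorem~\ref{thm:lower_devore}) loses only a single power, whereas the VC route, which costs a square, halves the exponent.

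First I would construct a localized family of test functions. Fix one bump $\phi\in C_0^\infty(\intervalo{0}{1}^d)$ with $\|\phi\|_{W^{n,\infty}}=1$ and with a fixed partial derivative $D^r\phi$ bounded away from zero at an interior point. Partitioning $[0,1]^d$ into $N^d$ subcubes of side $1/N$, set $\phi_i(x)\coloneqq \lambda\,\phi(N(x-x_i))$ for the $i$-th subcube. The chain rule gives $\|D^\alpha\phi_i\|_\infty\sim \lambda N^{|\alpha|}$, so the choice $\lambda\sim N^{-n}$ keeps $\|\phi_i\|_{W^{n,\infty}}\le 1$; since the supports are disjoint, every signed combination $f_y\coloneqq \sum_{i} y_i\phi_i$ with $y\in\{\pm1\}^{N^d}$ still lies in $F^{\infty}_{n,d}$. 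Meanwhile the $W^{r,\infty}$ signal carried near each center is of order $\lambda N^{r}\sim N^{-(n-r)}$, so matching this to the accuracy by setting $N\sim\eps^{-1/(n-r)}$ produces $N^d\sim \eps^{-d/(n-r)}$ functions in $F^{\infty}_{n,d}$ whose local $r$-th derivative has magnitude comparable to $\eps$ at each center.

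Next I would read off a shattering. For every sign pattern $y$ the hypothesis furnishes a weight assignment whose ReLU realization $g_y$ satisfies $\|f_y-g_y\|_{W^{r,\infty}}\le\eps$. After fixing the constant in $\lambda$ so that the signal strictly dominates the permitted error, the sign of the relevant $r$-th derivative of $g_y$ at the $i$-th center must agree with $y_i$. Hence, as the weights range over $\Arch_\eps$, the associated $\{\pm1\}$-valued maps (for $r=0$ the map $x\mapsto\sgn(g_y(x))$, for $r=1$ the sign of a fixed partial derivative of $g_y$) realize all $2^{N^d}$ labelings of the centers, so the VC dimension of this class is $\gtrsim N^d\sim\eps^{-d/(n-r)}$. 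On the other hand, a ReLU architecture with $M$ weights has VC dimension $\bigO(M^2)$. Combining the two bounds gives $M(\Arch_\eps)^2\gtrsim \eps^{-d/(n-r)}$, that is $M(\Arch_\eps)\gtrsim \eps^{-d/(2(n-r))}$, as claimed.

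The main obstacle is the case $r=1$. For $r=0$ the shattering concerns signs of network outputs, so the standard VC-dimension bound for ReLU realizations applies directly. For $r=1$ one instead shatters through the sign of an input-partial-derivative of $g_y$; to keep the quadratic VC bound one needs that such a derivative is itself computable by a ReLU network whose weight count is controlled by $M(\Arch_\eps)$ (the derivative of a piecewise-linear realization is piecewise constant), and that a single fixed partial derivative can be made to carry the sign information uniformly across all subcubes by an appropriate choice of $\phi$. Establishing this derivative-to-network reduction, together with the uniform sign-detection, is the technical core of the extension in~\cite{guhring2019error}; the remaining steps are bookkeeping of the constants hidden in the $\sim$'s above.
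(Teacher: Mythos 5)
Your proposal is correct and follows essentially the same route as the paper's own proof sketch: a VC-dimension sandwich in which the lower bound comes from shattering a grid of $\sim\eps^{-d/(n-r)}$ points by signed, localized bumps of height $\sim N^{-n}$, and the upper bound $\bigO\big(M(\Arch_\eps)^2\big)$ comes from the Anthony--Bartlett bound on thresholded ReLU realizations. The derivative-sign and derivative-to-network issues you flag for $r=1$ are precisely the technicalities the paper itself omits (``the case $r=1$ is proven similarly'') and defers to \cite{guhring2019error}.
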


\begin{remark}
The gap between the upper bound $M_\eps\lesssim\epsilon^{-d/n}$ of Theorem~\ref{thm:yarotsky} and the lower bound $M_\eps \gtrsim\epsilon^{-d/{2n}}$ for $r=0$ is discussed in~\cite{YarotskyPhase2019,yarotsky18a}. It is an instance of the benefit of (in this instance very) deep over shallow neural networks. It is shown that for every $\zeta\in \left[d/(2n), d/n \right)$ there exist neural networks with $M_\eps\lesssim\epsilon^{-\zeta}$ non-zero weights and $L_\eps\lesssim\epsilon^{-d/(r(\zeta-1))}$ layers that uniformly $\eps$-approximate functions in $F^{\infty}_{n,d}$.
\end{remark}

\emph{Sketch of Proof:} We only proof the statement for $r=0$. The case $r=1$ is proven similarly. The proof provides a general way of showing lower complexity bounds based on the \emph{Vapnik-Chervonenkis dimension (VCdim)} \cite{VCdim}. The $\vcdim$ measures the expressiveness of a set of binary valued functions $H$ defined on some set $A,$ and is defined by
\[
	\vcdim(H)\coloneqq\sup\left\{m\in\N\,: \begin{array}{l}
		\text{there exist }x_1,\ldots,x_m\in A\text{ such that}\\[0.3em]
		\text{for every }y\in\{0,1\}^m\text{ there is a function}\\[0.3em]
		h\in H \text{ with }h(x_i)=y_i\text{ for }i=1,\ldots,m
	\end{array}
		\right\}.
	\]
We define the set of thresholded realizations of neural networks 
\[
H\coloneqq\left\{\mathbf{1}_{(-\infty,a]}\circ \act{\Phi_\theta}:\theta\in\R^{M(\Phi)}\right\},
\] for some (carefully chosen) $a\in\R$ and derive the chain of inequalities
	\begin{equation}\label{eq:vcdim_lower_bounds}
		c \cdot \eps^{-d/n}\leq \vcdim(H)\leq C\cdot M(\Phi_\eps)^2.
		\end{equation}
		
The upper bound on $\vcdim(H)$ in~Equation~\eqref{eq:vcdim_lower_bounds} has been shown in \cite[Theorem~8.7]{anthony2009neural}.

To establish the lower bound, set $N\coloneqq\floor*{\eps^{-1/n}}$ and let \[
x_1,\ldots,x_{N^d}\in [0,1]^d\quad \text{such that}\quad \pabs{x_m-x_n}\geq 1/N\] for all $m,n=1,\ldots,N^d$ with $m\neq n$. For arbitrary $y=(y_1,\ldots,y_{N^d})\in\{0,1\}^{N^d},$ Yarotsky constructs a function $f_y\in F^{\infty}_{n,d}$ with $f_y(x_m)=y_m\cdot N^{-n}$ for $m=1,\ldots,N^d$. Let now $\Phi_{f_y}$ be a neural network such that $\act{\Phi_{f_y}}$ $\eps$-approximates $f_y$, then we have for thresholded neural network realization $\mathbf{1}_{(-\infty,a]}\circ\act{\Phi_{f_y}}$ that $\act{\Phi_{f_y}}(x_m)=y_m$ (see Figure \ref{fig:YarotskyLower}).
\begin{figure}[ht!] 
\centering
    \includegraphics[width=.7\textwidth]{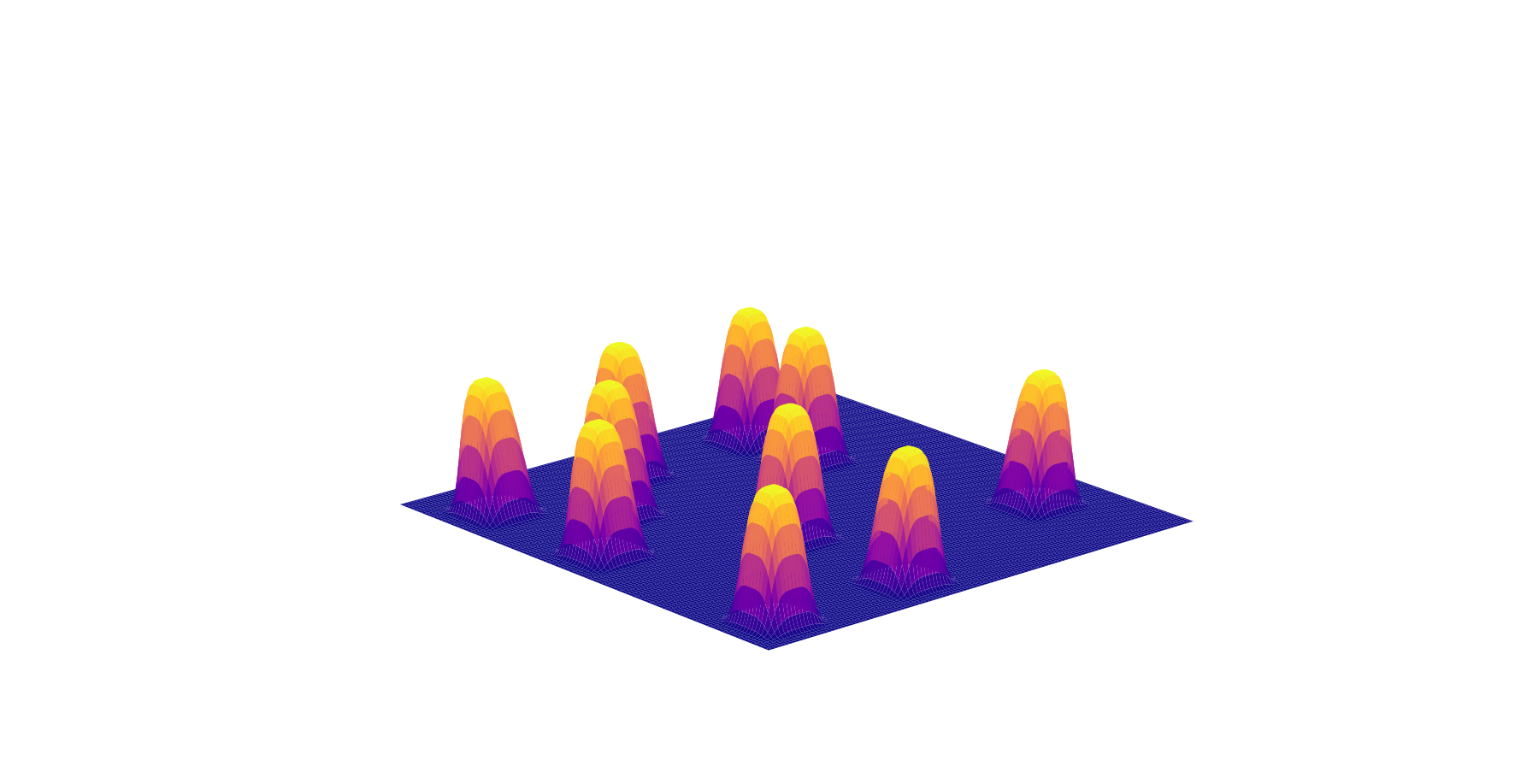}
     \caption{The function $f_y$ in $d=2$ dimensions.}\label{fig:YarotskyLower}
\end{figure}
Using the definition of $\vcdim$ it is easy to see that
\[
c\cdot\eps^{-d/n}\leq N^d\leq \vcdim(H),
\]
which is the lower bound in~Equation~\eqref{eq:vcdim_lower_bounds}. The theorem now easily follows from Equation~\eqref{eq:vcdim_lower_bounds}. 

\hfill $\square$

Approximations in $\mathcal{L}_p$ norm for $\beta$-H\"older-continuous functions have been considered in~\cite{schmidt2017nonparametric,PetV2018OptApproxReLU}. In contrast to~\cite{YAROTSKY2017103,guhring2019error} the depth of the involved networks remains fixed and does not depend on the approximation accuracy. Additionally in~\cite{PetV2018OptApproxReLU}, the weights are required to be encodable.\footnote{I.e., the weights are representable by no more than $\sim\log_2(1/\epsilon)$ bits.} We summarize their findings in the following theorem:

\begin{theorem}[\cite{PetV2018OptApproxReLU}]\label{thm:PETVOIGTSMOOTH}
Let $\beta=(n,\zeta)$ for $n\in \N_0,~\zeta\in (0,1]$ and $p\in(0,\infty).$ Then, for every $\epsilon>0$ and every $f\in C^\beta([-1/2,1/2]^d)$ with $\|f\|_{C^\beta}\leq 1,$ there exist ReLU neural networks $\Phi_{f,\epsilon}$ with encodable weights, $L(\Phi_{f,\epsilon})\lesssim\log_2((n+\zeta))\cdot (n+\zeta)/d$ layers and $M(\Phi_{f,\epsilon})\lesssim\epsilon^{-d/(n+\zeta)}$ non-zero weights such that 
    \[\left\|f-\act{\Phi_{f,\epsilon}} \right\|_{p}\leq \epsilon.\]
\end{theorem}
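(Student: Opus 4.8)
The plan is the classical localize-and-interpolate strategy, engineered so that the depth depends only on $(\beta,d)$ and every weight can be quantized. First I would cover $[-1/2,1/2]^d$ by a uniform grid of $N^d$ subcubes of side length $1/N$ with $N\sim\epsilon^{-1/(n+\zeta)}$, and attach a partition of unity $(\phi_\nu)_\nu$ of bounded overlap in which each $\phi_\nu$ is a tensor product of one-dimensional hat functions, hence (nearly) realizable by a small ReLU network. On the cube $Q_\nu$ centered at $x_\nu$ I would replace $f$ by its degree-$n$ Taylor polynomial $p_\nu(x)=\sum_{\|\alpha\|_1\le n}\tfrac{D^\alpha f(x_\nu)}{\alpha!}(x-x_\nu)^\alpha$. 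Since $\|f\|_{C^\beta}\le 1$, Taylor's theorem with the $\zeta$-H\"older bound on the top-order derivatives gives $\|f-p_\nu\|_{L^\infty(Q_\nu)}\lesssim N^{-(n+\zeta)}$, and the bounded-overlap partition upgrades this to $\|f-\sum_\nu \phi_\nu p_\nu\|_p\lesssim N^{-(n+\zeta)}\lesssim\epsilon$. This single estimate already fixes the rate $n+\zeta$ and the patch count $N^d\sim\epsilon^{-d/(n+\zeta)}$, which becomes the weight count.

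Next I would realize $\sum_\nu \phi_\nu p_\nu$ by a ReLU network, where the only genuinely nonlinear operation is multiplication: each summand $\phi_\nu p_\nu$ is a product of at most $n+d$ bounded factors (the $d$ hats defining $\phi_\nu$ and the monomial $(x-x_\nu)^\alpha$), so it suffices to approximate products of boundedly many factors in $[-1,1]$. For this I reuse the square-function gadget from the sketch of Theorem~\ref{thm:yarotsky}: via the polarization identity it yields an approximate multiplication, and arranging $k\le n+d$ factors in a balanced binary tree computes their product in depth $\lesssim\log_2(n+\zeta)$ times the depth of one gadget. Summing the $N^d$ localized products---at most $\mathcal{O}(1)$ of which are active at any point---gives a network with $M\lesssim N^d\lesssim\epsilon^{-d/(n+\zeta)}$ nonzero weights.

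The two features separating this statement from Theorem~\ref{thm:yarotsky} are the $\epsilon$-independent depth and the encodability of the weights, and this is where the real work sits. The cleanest way to see why the depth can be taken $\epsilon$-free is a region-counting heuristic: every ReLU realization is continuous and piecewise linear, so an $\epsilon$-approximation of a generically curved $C^{n+\zeta}$ function must use at least $\sim\epsilon^{-d/2}$ linear regions. A network of depth $L$ and width $W\sim (M/L)^{1/2}$ built from $M\sim\epsilon^{-d/(n+\zeta)}$ weights can generate on the order of $W^{dL}$ regions, and $W^{dL}\gtrsim\epsilon^{-d/2}$ holds precisely once $L\gtrsim(n+\zeta)/d$. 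Thus depth must, and by a careful construction can, amplify the fixed weight budget into enough pieces without any dependence on $\epsilon$; combined with the $\log_2(n+\zeta)$ multiplication depth this yields the stated $L\lesssim\log_2(n+\zeta)\,(n+\zeta)/d$. For encodability I would keep every gadget weight dyadic (the hat function and the square-gadget have slopes in $\{\pm2\}$), round the finitely many Taylor coefficients and grid offsets $x_\nu$ to $\mathcal{O}(\log_2(1/\epsilon))$ bits, and bound the induced perturbation of $\act{\Phi_{f,\epsilon}}$ by a Lipschitz-in-the-weights estimate over the bounded parameter range, so that quantization costs only a further $\mathcal{O}(\epsilon)$ in $\|\cdot\|_p$.

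The hard part will be converting the region-counting heuristic into an honest construction: one must produce the required $\sim\epsilon^{-d/2}$ linear pieces by sharing the depth-driven multiplication structure across all $N^d$ patches, rather than paying for a separate high-accuracy multiplier on each patch, since the latter inflates the weight count beyond $\epsilon^{-d/(n+\zeta)}$. Making this sharing precise---so that the per-patch errors, the scales $N^{-\|\alpha\|_1}$ of the monomials, the gadget precision, and finally the $\log_2(1/\epsilon)$-bit quantization all fit inside one $\epsilon$-budget---is exactly the bookkeeping carried out in~\cite{PetV2018OptApproxReLU}.
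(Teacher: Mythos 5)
Your skeleton (localized Taylor expansion on a grid of $N^d$ cells with $N\sim\epsilon^{-1/(n+\zeta)}$, ReLU multiplication gadgets, final quantization) is indeed the route of \cite{PetV2018OptApproxReLU}, which the survey states without proof; but the two features that distinguish this theorem from Theorem~\ref{thm:yarotsky} --- depth independent of $\epsilon$ \emph{simultaneously} with $M\lesssim\epsilon^{-d/(n+\zeta)}$ --- are exactly what your argument does not establish, and you yourself defer them back to the cited paper as ``the hard part''. Two constructions are missing. First, the square gadget from the sketch of Theorem~\ref{thm:yarotsky} has depth $\sim\log_2(1/\delta)$ for accuracy $\delta$, so reusing it as you propose makes the depth grow with $\epsilon$; what is needed is a depth--width trade-off lemma: for any fixed $L_0$, the piecewise linear interpolant of $x\mapsto x^2$ with error $4^{-m}$ can be realized at depth $\sim L_0$ by composing $L_0$ sawtooth blocks of width $\sim 2^{m/L_0}$, i.e.\ accuracy $\delta$ at \emph{constant} depth with $\mathcal{O}\left(L_0\,\delta^{-1/(2L_0)}\right)$ weights. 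Your region-counting heuristic only indicates that depth $\gtrsim (n+\zeta)/d$ is \emph{necessary}; it is not a construction and cannot substitute for this lemma. Second, even granting that lemma, your architecture attaches multipliers to each patch: after rescaling, the top-order terms require per-patch multiplication accuracy $\lesssim N^{-\zeta}$, so $N^d$ patches cost $\gtrsim N^d\cdot N^{\zeta/(2L_0)}\gg N^d\sim\epsilon^{-d/(n+\zeta)}$ weights at any fixed depth $L_0$. The resolution in \cite{PetV2018OptApproxReLU} is to share all multiplications globally rather than per patch: the local coordinate map $x\mapsto x-x_{\nu(x)}$ is a vector of sawtooth functions, realizable once at fixed depth within the weight budget; the coefficient maps $x\mapsto D^{\alpha}f(x_{\nu(x)})$ are (approximately) piecewise constant and cost $\mathcal{O}(N^d)$ weights at constant depth; and then only $\mathcal{O}_{n,d}(1)$ multiplication gadgets are needed in total. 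Without this factorization the stated weight bound is unreachable, so naming the sharing problem is not the same as solving it.

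A further symptom of the gap is that your proof never uses the hypothesis $p<\infty$. The shared construction above (and already your partition of unity: a \emph{tensor product} of hat functions is itself a product of $d$ factors and hence not ReLU-realizable without further multiplications) forces one to work with approximate indicators of the cells, which are necessarily wrong on thin transition layers of thickness $\delta'$; their total contribution is controlled only because a set of measure $\sim d\,\delta' N$ contributes $\lesssim (d\,\delta' N)^{1/p}$ to the $\mathcal{L}_p$ error, and $\delta'$ may be taken polynomially small in $\epsilon$. This argument collapses in $\mathcal{L}_\infty$, which is precisely why the theorem is stated for $p\in(0,\infty)$; an argument that never invokes this hypothesis cannot be complete. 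Your quantization step is fine in spirit, but it must also cover the $\epsilon$-dependent slopes of order $1/\delta'$ introduced by these indicators; this does work, since $\log_2(1/\delta')\lesssim_{p,d,n,\zeta}\log_2(1/\epsilon)$, but it is part of the bookkeeping that has to be carried out rather than assumed.
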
 

There also exist results based on the approximation of B-splines (\cite{Mhaskar1993}) or finite elements (\cite{he2018relu,OPS19_811}). It is shown there that neural networks perform as well as the underlying approximation procedure.

Finally, instead of examining the approximation rates of deep neural networks for specific function classes, one could also ask the following question: \emph{ Which properties does the set of \emph{all} functions that can be approximated by deep neural networks at a given rate fulfill?} This question has been discussed extensively in~\cite{ApproxSpaces2019}. Among other results, it has been shown that, under certain assumptions on the neural network architecture, these sets of functions can be embedded into classical function spaces such as Besov spaces of a certain degree of smoothness.

\section{Approximation of Piecewise Smooth Functions} \label{sec:DeepPiecewiseRates}

When modeling real-world phenomena one often only assumes \emph{piecewise} smoothness. A prominent example are \emph{cartoon-like functions} (see \cite{DCartoonLikeImages2001})
        \[
	\mathcal{E}^n\big([0,1]^d\big)\coloneqq\left\{ f_1 + \mathbf{1}_B f_2:\begin{array}{l}
		f_1,f_2 \in C^n\left([0,1]^d\right),\, B \subset (0,1)^d,\, \partial B  \in C^n \\
		 \text{and } \|g\|_{C^n}\leq 1  \text{ for } g = f_1,f_2,\partial B
	\end{array}
		\right\},
	\]
which are commonly used as a mathematical model for images. Figure~\ref{fig:cartoon} provides an illustration for the case $d=2$.
\begin{figure}[ht!] 
\centering
    \includegraphics[width=0.25\textwidth]{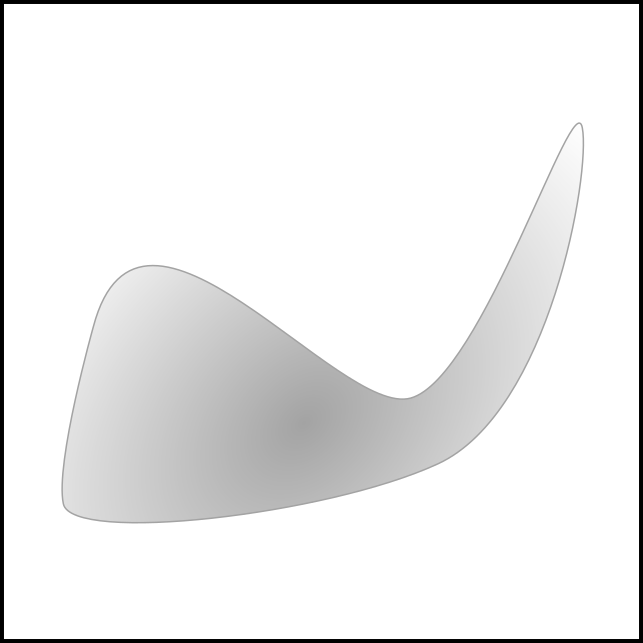}
    \caption{Illustration of a cartoon-like function on $[0,1]^2$.}\label{fig:cartoon}
\end{figure}
First expressivity results in this direction have been deduced in~\cite{boelcskeiNeural} for neural networks with weights of restricted complexity\footnote{Note, that computers can also only store weights of restricted complexity.}. To present this result, we first need to introduce some notions from information theory.

The \emph{minimax code-length} describes the necessary length of bitstrings of an encoded representation of functions from a function class $\mathcal{C}\subset \mathcal{L}_2(K)$ such that it can be decoded with an error smaller then $\eps>0$. The precise definition is given as follows:
\begin{definition}[see \cite{boelcskeiNeural} and the references therein]\label{def:EncDec}
Let $K \subset \R^d$ be measurable, and let $\mathcal{C}\subset \mathcal{L}_2(K)$ be compact. For each $\ell\in \N$, we denote by
\[
    \mathfrak{E}^\ell\coloneqq \left\{E: \mathcal{C} \to \{0,1\}^{\ell}\right\},
\]
the set of \emph{binary encoders mapping elements of $\mathcal{C}$ to bit-strings of length $\ell$}, and we let
\[
    \mathfrak{D}^\ell\coloneqq \left\{D:\{0,1\}^{\ell} \to  \mathcal{L}_2(K)\right\},
\]
be the set of \emph{binary decoders mapping bit-strings of length $\ell$ to elements of $\mathcal{L}_2(K)$}.

An encoder-decoder pair $(E^\ell, D^\ell) \in \mathfrak{E}^\ell \times \mathfrak{D}^\ell$
is said to {\em achieve distortion $\epsilon >0$ over the function class $\mathcal{C}$}, if
\begin{align*}
    \sup_{f\in \mathcal{C}} \left\|D^\ell (E^\ell (f)) - f \right\|_{2} \leq \epsilon.
\end{align*}
Finally, for $\epsilon >0$ the \emph{minimax code length} $L(\epsilon, \mathcal{C})$ is
\[
    L(\epsilon, \mathcal{C})
    \coloneqq \min\left\{\begin{array}{l}
              \ell\in \N
              \,:\,
              \exists \left(E^\ell, D^\ell\right) \in  \mathfrak{E}^\ell \times \mathfrak{D}^\ell: \\
                  \sup_{f\in \mathcal{C}} \left\|D^\ell (E^\ell (f)) - f \right\|_{2}
                  \leq \epsilon
                 \end{array}
           \right\},
\]
with the interpretation $L(\epsilon, \mathcal{C}) = \infty$ if
$\sup_{f\in \mathcal{C}} \|D^\ell (E^\ell (f)) - f \|_{2} > \epsilon$ for all 
$(E^\ell, D^\ell) \in \mathfrak{E^\ell} \times \mathfrak{D}^\ell$ and arbitrary $\ell \in \N$.
\end{definition}
We are particularly interested in the asymptotic behavior of $L(\epsilon, \mathcal{C})$, which can be quantified by the \textit{optimal exponent}.

\begin{definition}\label{def:optexp}
Let $K \subset \R^d$ and $\mathcal{C}\subset \mathcal{L}_2(K)$. Then, the \emph{optimal exponent} $\gamma^*(\mathcal{C})$ is defined by
\[
\gamma^*(\mathcal{C}): = \inf \left\{\gamma \in \R: L(\epsilon, \mathcal{C})  \lesssim\epsilon ^{-\gamma}, \text{ as } \epsilon \searrow 0 \right\}.
\]
\end{definition}
The optimal exponent $\gamma^*(\mathcal{C})$ describes how fast $L(\epsilon, \mathcal{C})$ tends to infinity as $\epsilon$ decreases. For function classes $\mathcal{C}_1$ and $\mathcal{C}_2$, the notion $\gamma^*(\mathcal{C}_1) < \gamma^*(\mathcal{C}_2)$ indicates that asymptotically, i.e., for $\epsilon\searrow 0$, the necessary length of the encoding bit string for $\mathcal{C}_2$ is larger than that for $\mathcal{C}_1$. In other words, a smaller exponent indicates a smaller description complexity.
\begin{example}\label{example:optimal_coefficient}
For many function classes the optimal exponent is well-known (see \cite{boelcskeiNeural} and the references therein). Let $ n\in \N$, $1\leq p,q \leq \infty$, then 
\begin{itemize}
    \item[(i)] $\gamma^*\left(\left\{f \in C^{n}([0,1]^d): \|f\|_{C^n} \leq 1 \right\}\right) = d/n$, 
  
    \item[(ii)] if $n\in \{1,2\},$ then $\gamma^*(\mathcal{E}^n([0,1]^d)) = 2(d-1)/n$.
\end{itemize}
\end{example}

The next theorem connects the description complexity of a function class with the necessary complexity of neural network approximations with encodable weights. It shows that, at best, one can hope for an asymptotic growth governed by the optimal exponent.
\begin{theorem}[\cite{boelcskeiNeural}]\label{thm:optimalitynoquant}
Let $K\subset \mathbb{R}^d$, $\sigma :\R \to \R$, $c>0$, and $\mathcal{C} \subset \mathcal{L}_2(K)$. Let $\epsilon\in(0,\nicefrac{1}{2})$ and $M_\eps\in\N$. If for every $f\in \mathcal{C}$ there exists a neural network $\Phi_{\eps,f}$ with weights encodable with $\ceil{c\log_2(\nicefrac{1}{\eps})}$ bits and $\norm{f-\act{\Phi_{\eps,f}}}_{\mathcal{L}_2}\leq \eps$ and $M(\Phi_{\eps,f})\leq M_\eps$, then 
\[
M_\eps\gtrsim \eps^{-\gamma},
\]
for all $\gamma<\gamma^\ast(\mathcal{C})$.
\end{theorem}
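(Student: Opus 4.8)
The plan is to argue by contradiction, exploiting that neural networks with quantized weights can serve as a decoder in the sense of Definition~\ref{def:EncDec}, so that the assumed approximation scheme yields an upper bound on the minimax code length $L(\eps,\mathcal{C})$. Concretely, suppose toward a contradiction that $M_\eps \lesssim \eps^{-\gamma}$ for some $\gamma < \gamma^*(\mathcal{C})$. The goal is then to convert the approximating networks into a genuine encoder-decoder pair of bit-length $\ell \lesssim M_\eps\log_2(1/\eps)$, which will force $L(\eps,\mathcal{C})\lesssim \eps^{-\gamma'}$ for some $\gamma'<\gamma^*(\mathcal{C})$ and contradict the definition of the optimal exponent.

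First I would fix, for each $f\in\mathcal{C}$, the approximating network $\Phi_{\eps,f}$ guaranteed by hypothesis, with at most $M_\eps$ nonzero weights, each storable in $\ceil{c\log_2(1/\eps)}$ bits, and with $\norm{f-\act{\Phi_{\eps,f}}}_{\mathcal{L}_2}\leq\eps$. I would then design a binary encoder $E^\ell$ that writes out a bit-string describing $\Phi_{\eps,f}$ completely: its topology (the list of layer dimensions together with the locations of the nonzero entries in each weight matrix and bias vector) followed by the quantized values of those nonzero entries. The matching decoder $D^\ell$ parses this string, reassembles the network, and outputs its $\sigma$-realization. By construction $D^\ell(E^\ell(f))=\act{\Phi_{\eps,f}}$, so $\sup_{f\in\mathcal{C}}\norm{D^\ell(E^\ell(f))-f}_{\mathcal{L}_2}\leq\eps$, i.e.\ the pair achieves distortion $\eps$ and therefore $L(\eps,\mathcal{C})\leq\ell$.

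The crux is bounding $\ell$. The nonzero values contribute at most $M_\eps\cdot\ceil{c\log_2(1/\eps)}\lesssim M_\eps\log_2(1/\eps)$ bits. The delicate part --- and the main obstacle --- is the topology: a priori a single weight could sit in an arbitrarily wide and deep network, so its position need not be cheap to address. This is resolved by noting that at most $M_\eps$ weights are nonzero, so only $\bigO(M_\eps)$ neurons are actually connected, and one may describe the topology with $\bigO(M_\eps\log M_\eps)$ bits. Under the contradiction hypothesis $M_\eps\lesssim\eps^{-\gamma}$ one has $\log M_\eps\lesssim\log_2(1/\eps)$, so the topology cost is again $\lesssim M_\eps\log_2(1/\eps)$ and hence $\ell\lesssim M_\eps\log_2(1/\eps)$ as desired. (A prefix-free framing, or padding all strings to a common length, makes $(E^\ell,D^\ell)$ a legitimate member of $\mathfrak{E}^\ell\times\mathfrak{D}^\ell$.)

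Finally I would close the loop. Combining the two inequalities gives
\[
L(\eps,\mathcal{C})\lesssim M_\eps\log_2(1/\eps)\lesssim \eps^{-\gamma}\log_2(1/\eps).
\]
Choosing any $\gamma'$ with $\gamma<\gamma'<\gamma^*(\mathcal{C})$, the factor $\eps^{-(\gamma'-\gamma)}$ dominates $\log_2(1/\eps)$ as $\eps\searrow 0$, so $\eps^{-\gamma}\log_2(1/\eps)\lesssim\eps^{-\gamma'}$ and therefore $L(\eps,\mathcal{C})\lesssim\eps^{-\gamma'}$. But $\gamma'<\gamma^*(\mathcal{C})$ contradicts the fact that $\gamma^*(\mathcal{C})$ is the infimum of exponents $\gamma$ for which $L(\eps,\mathcal{C})\lesssim\eps^{-\gamma}$ (Definition~\ref{def:optexp}). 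This contradiction shows that no $\gamma<\gamma^*(\mathcal{C})$ can satisfy $M_\eps\lesssim\eps^{-\gamma}$, which is exactly the claimed bound $M_\eps\gtrsim\eps^{-\gamma}$ for all $\gamma<\gamma^*(\mathcal{C})$.
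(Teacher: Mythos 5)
Your proposal is correct and coincides with the proof in the cited source: the present paper states Theorem~\ref{thm:optimalitynoquant} without proof, and the argument in \cite{boelcskeiNeural} is precisely your rate--distortion argument, namely serializing each quantized network (sparse topology plus nonzero weight values) into a bit-string so that the hypothesis yields an encoder--decoder pair in the sense of Definition~\ref{def:EncDec} of length $\ell\lesssim M_\eps\log_2 M_\eps+M_\eps\log_2(1/\eps)$, and then playing the resulting upper bound on $L(\eps,\mathcal{C})$ against Definition~\ref{def:optexp}; the pruning of unconnected neurons needed to make the topology addressable with $\mathcal{O}(M_\eps\log_2 M_\eps)$ bits is exactly the content of the corresponding encoding lemma there. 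One caveat concerns your closing sentence: what the contradiction literally establishes is that $M_\eps\lesssim\eps^{-\gamma}$ fails for every $\gamma<\gamma^\ast(\mathcal{C})$, i.e. $\limsup_{\eps\searrow 0}M_\eps\,\eps^{\gamma}=\infty$, which is not formally the same as $M_\eps\gtrsim\eps^{-\gamma}$ (that would assert $\liminf_{\eps\searrow 0}M_\eps\,\eps^{\gamma}>0$), since the negation of a big-$\mathcal{O}$ bound is weaker than a matching lower bound. This looseness is inherited from the informal $\gtrsim$ in the theorem statement itself --- the original result in \cite{boelcskeiNeural} is phrased in exactly the big-$\mathcal{O}$ form your argument proves --- so it is a matter of wording rather than a gap, but the claim that the two formulations are ``exactly'' the same should be softened.
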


We are now interested in the deduction of optimal upper bounds. We have seen already in many instances, that one of the main ideas behind establishing approximation rates for neural networks is to demonstrate how other function systems, often polynomials, can be emulated by them. 
In \cite{shaham2018provable}, a similar approach was followed by demonstrating that neural networks can reproduce wavelet-like functions (instead of polynomials) and thereby also sums of wavelets. This observation allows to transfer \textit{$M$-term approximation rates} with wavelets to $M$-weight approximation rates with neural networks. In~\cite{boelcskeiNeural} this route is taken for general affine systems. An \emph{affine system} is constructed by applying affine linear transformations to a \emph{generating function}. We will not give the precise definition of an affine system here (see e.g.\ \cite{boelcskeiNeural}) but intend to build some intuition by considering shearlets in $\R^2$ as an example.

Shearlet systems (\cite{ShearletsBook}) are representation systems used mainly in signal and image processing. Similar to the Fourier transform, which expands a function in its frequencies, a shearlet decomposition allows an expansion associated to different location, direction and resolution levels. 
To construct a shearlet system $\mathcal{SH}$, one needs a \emph{parabolic scaling} operation defined by the matrix 
\[
	A_j\coloneqq\bmat{c c}{
				2^j & 0\\[1em]
				0 & 2^{j/2}
			},
\]
a \emph{shearing} operation defined by
\[
	S_k\coloneqq\bmat{c c}{
				1 & k\\[1em]
				0 & 1
			},
\] together with the translation operation. These operations are applied to a generating function $\psi\in \mathcal{L}_2(\R^2)$ (satisfying some technical conditions) to obtain a \emph{shearlet system}
\[
\mathcal{SH}\coloneqq\big\{2^{\frac{3j}{4}}\psi(S_k A_j (\cdot)-n):j\in\Z, k\in\Z, n\in\Z^2\big\}.
\]
Shearlet systems are particularly well suited for the class of cartoon-like functions. To make this statement rigorous, we first need the following definition:

\begin{definition}
For a normed space $V$ and a system $(\phi_i)_{i\in I} \subset V$ we define the \textit{error of best $M$-term approximation} of a function $f\in V$ as
$$
\Sigma_M(f): = \inf_{\substack{I_M \subset I, |I_M| = M,\\ (c_i)_{i \in I_M}}} \bigg\| \sum_{i\in I_M} c_i \phi_i - f\bigg\|_V. 
$$
For $C\subset V,$ the system $(\phi_i)_{i\in I}$ yields an \textit{$M$-term approximation rate} of $M^{-r}$ for $r\in \R^+$ if
$$
\sup_{f\in C}\Sigma_M(f) = \lesssim M^{-r} \text{ for } M \to \infty.
$$
\end{definition}
It is possible to show that certain shearlet systems yield almost optimal $M$-term approximation rates\footnote{The optimal $M$-term approximation rate is the best rate that can be achieved under some restrictions on the representation system and the selection procedure of the coefficients. See~\cite{DCartoonLikeImages2001} for optimal $M$-term approximation rates for cartoon-like functions.} for the class of cartoon-like functions $\mathcal{E}^n([0,1]^2)$ (see, for instance, \cite{ShearletsCompact}). 

In~\cite[Theorem.~6.8]{boelcskeiNeural}, (optimal) $M$-term approximation rates of shearlets are transferred to $M$-weight approximations with neural networks. It is shown that with certain assumptions on the activation function $\sigma$ one can emulate a generating function $\psi$ with a fixed size neural network $\Phi_{\psi}$ such that $\psi\approx \act{\Phi_\psi}$. As a consequence, for every element $\phi_i$ of the system $\mathcal{SH}$ there exists a corresponding fixed size neural network $\Phi_i$ with $\phi_i\approx\act{\Phi_i}$. An $M$-term approximation $\sum_{i\in I_M} c_i(f) \phi_i$ of a function $f$ can then be approximated by a parallelization of networks $\Phi_i$ with $\lesssim M$ weights. This line of arguments was first used in~\cite{shaham2018provable} and also works for general affine systems. This is made precise in the next theorem.
\begin{theorem}[\cite{boelcskeiNeural}]\label{thm:affdicopt}
Let $K\subset \mathbb{R}^d$ be bounded and $\mathcal{D}=(\varphi_i)_{i\in \mathbb{N}}\subset \mathcal{L}_2(K)$ an affine system with generating function $\psi\in \mathcal{L}_2(K)$. Suppose that for an activation function $\sigma:\mathbb{R}\to \mathbb{R}$ there exists a constant $C\in\N$ such that for all $\epsilon>0$ and all $D>0$ there is a neural network
    $\Phi_{D,\epsilon}$ with at most $C$ non-zero weights satisfying
    \begin{equation*}
        \|\psi - \act{\Phi_{D,\epsilon}}\|_{\mathcal{L}_2([-D,D]^d)} \leq \epsilon.
    \end{equation*}
     Then, if $\epsilon>0$, $M\in \N$, $g\in \mathcal{L}_2(K)$ such that there exist $(d_i)_{i = 1}^M$: 
    $$
    \left\|g - \sum_{i = 1}^M d_i \varphi_i\right\|_{2} \leq \epsilon, 
    $$
    there exists a neural network $\Phi$ with $\lesssim M$ nonzero weights such that 
    $$
    \|g -\act{\Phi}\|_{2} \leq 2\epsilon.
    $$
\end{theorem}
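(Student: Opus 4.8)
The plan is to construct $\Phi$ by emulating each affine-system element $\varphi_i$ with a constant-size network, scaling these by the coefficients $d_i$, and parallelizing them into a single network whose output is their sum. Since $\sum_{i=1}^M d_i\varphi_i$ already lies within $\epsilon$ of $g$, it then suffices to approximate this finite linear combination to within a further $\epsilon$ in $\mathcal{L}_2(K)$ and invoke the triangle inequality.

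First I would exploit the affine structure. Writing each element as $\varphi_i(x)=c_i\,\psi(A_ix-b_i)$ for a scalar $c_i$, an invertible $d\times d$ matrix $A_i$ and a shift $b_i\in\R^d$, I would absorb the affine map $x\mapsto A_ix-b_i$ into the first layer of the network approximating $\psi$: replace the first weight--bias pair $(\mathbf{W}^{[1]},\mathbf{b}^{[1]})$ by $(\mathbf{W}^{[1]}A_i,\,\mathbf{b}^{[1]}-\mathbf{W}^{[1]}b_i)$ and multiply the output layer by $c_i$. As $A_i$ is a fixed matrix, this merely alters the values of existing weights and inflates their count by at most a dimension-dependent factor $d$, so the resulting network $\Phi_i$ still has $\lesssim C$ non-zero weights. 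A change of variables $y=A_ix-b_i$ then turns the hypothesis into an $\mathcal{L}_2$ bound for $\varphi_i$:
\begin{equation*}
\left\|\varphi_i-\act{\Phi_i}\right\|_{\mathcal{L}_2(K)} = |c_i|\,|\det A_i|^{-1/2}\,\left\|\psi-\act{\Phi_{D,\tilde\epsilon}}\right\|_{\mathcal{L}_2(A_iK-b_i)} \le |c_i|\,|\det A_i|^{-1/2}\,\tilde\epsilon,
\end{equation*}
valid as soon as $D$ is large enough that $A_iK-b_i\subset[-D,D]^d$, which is possible because $K$ is bounded. This is precisely where the ``for all $D$'' clause of the hypothesis enters.

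The decisive point is that the weight budget $C$ does not depend on the accuracy $\tilde\epsilon$. Consequently, for each $i$ I may prescribe an accuracy $\tilde\epsilon_i$ as small as desired without enlarging $\Phi_i$ beyond $\lesssim C$ weights; I choose $\tilde\epsilon_i$ so that $|d_i|\,|c_i|\,|\det A_i|^{-1/2}\,\tilde\epsilon_i\le\epsilon/M$. Scaling each $\Phi_i$ by $d_i$ (again absorbed into its output layer), parallelizing the $M$ branches and summing their outputs through a final linear layer yields a network $\Phi$ for which
\begin{equation*}
\left\|\sum_{i=1}^M d_i\varphi_i-\act{\Phi}\right\|_2 \le \sum_{i=1}^M |d_i|\,\left\|\varphi_i-\act{\Phi_i}\right\|_2 \le \epsilon,
\end{equation*}
and combining this with $\|g-\sum_{i=1}^M d_i\varphi_i\|_2\le\epsilon$ gives the asserted $2\epsilon$. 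The parallelization of $M$ networks, each with $\lesssim C$ weights, plus the additive output layer produces $\lesssim C\cdot M\lesssim M$ non-zero weights in total.

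The main obstacle I anticipate is network bookkeeping rather than analysis. To parallelize the $\Phi_i$ one needs them to share a common depth, calling for a padding/synchronization step; since all $\Phi_i$ descend from networks furnished by the hypothesis under the uniform bound $C$, their depths are uniformly bounded, so the standard block-diagonal stacking of weight matrices (after equalizing depths) applies and keeps the total count proportional to $M$. Verifying that none of these manipulations --- and in particular the absorption of $A_i$ into $\mathbf{W}^{[1]}$ --- secretly inflates the per-branch weight count beyond a constant is the step that requires care, though it is routine once the affine-absorption identity is in place.
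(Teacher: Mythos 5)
Your proposal is correct and follows essentially the same route as the paper: emulate the generating function $\psi$ by a fixed-size network (the hypothesis), absorb each affine transformation into the first layer to obtain constant-size networks for the $\varphi_i$, parallelize $M$ such branches with the coefficients $d_i$ folded into their output layers, and conclude by the triangle inequality. The details you supply beyond the paper's sketch --- the Jacobian factor from the change of variables, the per-branch accuracy choice $\epsilon/M$ (possible precisely because $C$ is independent of the accuracy and of $D$), and the depth-synchronization bookkeeping --- are exactly the points the paper leaves implicit.
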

Consequently, if shearlet systems yield a certain $M$-term approximation rate for a function class $\mathcal{C}$, then neural networks produce at least that error rate in terms of weights.
We can conclude from Theorem~\ref{thm:affdicopt} that neural networks yield an optimal $M$-weight approximation rate of $\lesssim M^{-n/2}$. On the other hand, we saw in Example~\ref{example:optimal_coefficient} that $\gamma^*(\mathcal{E}^n([0,1]^2)) = 2/n$ so that Theorem \ref{thm:optimalitynoquant} demonstrates that $\lesssim M^{-n/2}$ is also the optimal approximation rate.
Similar results have been deduced in \cite{GrohsPEB2019}.

An extension to functions $f\in \mathcal{L}_p([-1/2,1/2]^d),~p\in (0,\infty)$ that are $C^\beta$-smooth apart from $C^\beta$-singularity hypersurfaces is derived in~\cite{petersen2018equivalence}.
It is shown that the Heaviside function is approximable by a shallow ReLU neural networks with five weights (\cite[Lemma A.2.]{PetV2018OptApproxReLU}). A combination with Theorem~\ref{thm:PETVOIGTSMOOTH} yields then the next theorem.

\begin{theorem}[\cite{PetV2018OptApproxReLU}]\label{thm:PETVOIGTPIECEWISESMOOTH}
 Let $\beta=(n,\zeta),~n\in \N_0,~\zeta\in (0,1]$ and $p\in(0,\infty).$ Let $f= \mathbf{1}_K\cdot g$, where we assume that $g\in C^{\beta'}([-1/2,1/2]^d)$ for $\beta'= (d\beta)/(p(d-1))$ with $\|g\|_{C^{\beta'}}\leq 1$ and we assume $K\subset[-1/2,1/2]^d$ with $\partial K\in C^\beta$. Moreover, let $\sigma=\mathrm{ReLU}.$
 
 Then, for every $\epsilon>0$ there exist a neural network $\Phi_{f,\epsilon}$ with encodable weights, $L(\Phi_{f,\eps}) \lesssim\log_2((n+\zeta))\cdot (n+\zeta)/d$ layers as well as $M(\Phi_{f,\epsilon}) \lesssim \epsilon^{-p(d-1)/(n+\zeta)} $ non-zero weights such that 
    \[
    \left\|f-\act{\Phi_{f,\epsilon}} \right\|_{p}\leq \eps.
    \]
\end{theorem}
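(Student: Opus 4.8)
The plan is to split $f=\mathbf{1}_K\cdot g$ into its two factors, approximate each by a ReLU network separately, and then glue them with an approximate multiplication network. The smooth factor $g$ is handled directly by Theorem~\ref{thm:PETVOIGTSMOOTH}; the indicator $\mathbf{1}_K$ is the genuinely new object. Here I would exploit that $\partial K\in C^\beta$ to represent $K$ locally as a \emph{horizon set}: after covering $\partial K$ by finitely many charts (a number independent of $\eps$) and rotating coordinates, one can write, on each patch, $\mathbf{1}_K(x',x_d)=H(x_d-\gamma(x'))$, where $H$ is the Heaviside function and $\gamma\colon[-1/2,1/2]^{d-1}\to\R$ is the $C^\beta$ boundary function with $\|\gamma\|_{C^\beta}\lesssim 1$. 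A ReLU partition of unity (realizable with $\lesssim 1$ weights per patch) then recombines the local approximants into a global one.

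For the indicator factor I would approximate $\gamma$ by a network $\tilde\gamma$ using Theorem~\ref{thm:PETVOIGTSMOOTH} \emph{in dimension $d-1$} with boundary smoothness $\beta=(n,\zeta)$, and compose it with the five-weight ReLU approximation $H_\delta$ of the Heaviside from \cite{PetV2018OptApproxReLU}, setting $\tilde{\mathbf{1}}_K\coloneqq H_\delta(x_d-\tilde\gamma(x'))$. The error estimate is the heart of the argument: if $\|\gamma-\tilde\gamma\|_\infty\le\delta$ and $H_\delta$ is exact outside a window of width $\delta$, then $\tilde{\mathbf{1}}_K$ and $\mathbf{1}_K$ can differ only on the slab $\{\,|x_d-\gamma(x')|\lesssim\delta\,\}$, whose $d$-dimensional Lebesgue measure is $\lesssim\delta$. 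Since the integrand is bounded by one, this yields
\[
\big\|\mathbf{1}_K-\tilde{\mathbf{1}}_K\big\|_p\lesssim\delta^{1/p}.
\]
Choosing $\delta\sim\eps^{p}$ makes this $\lesssim\eps$, and the $(d-1)$-dimensional rate of Theorem~\ref{thm:PETVOIGTSMOOTH} then costs $M\lesssim\delta^{-(d-1)/(n+\zeta)}\sim\eps^{-p(d-1)/(n+\zeta)}$ weights, together with a number of layers supplied by Theorem~\ref{thm:PETVOIGTSMOOTH} that is independent of $\eps$.

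For the smooth factor I would invoke Theorem~\ref{thm:PETVOIGTSMOOTH} \emph{in dimension $d$} applied to $g\in C^{\beta'}$ with $\|g\|_{C^{\beta'}}\le 1$, producing $\tilde g$ with $\|g-\tilde g\|_p\lesssim\eps$. The precise exponent $\beta'=d\beta/(p(d-1))$ is dictated exactly by the requirement that $g$ not be the bottleneck: the $d$-dimensional rate gives $M\lesssim\eps^{-d/(n'+\zeta')}$ with $n'+\zeta'=d(n+\zeta)/(p(d-1))$, i.e.\ $M\lesssim\eps^{-p(d-1)/(n+\zeta)}$, matching the indicator count. I would then form $\tilde{\mathbf{1}}_K\cdot\tilde g$ by feeding both outputs into the approximate multiplication network built from the ReLU square function (as in the sketch of Theorem~\ref{thm:yarotsky}); since $\tilde{\mathbf{1}}_K\in[0,1]$ and $\tilde g$ is uniformly bounded (because $\|g\|_\infty\le\|g\|_{C^{\beta'}}\le1$), the multiplication operates on a fixed box and, via $\|\mathbf{1}_K g-\tilde{\mathbf{1}}_K\tilde g\|_p\le\|g\|_\infty\|\mathbf{1}_K-\tilde{\mathbf{1}}_K\|_p+\|\tilde{\mathbf{1}}_K\|_\infty\|g-\tilde g\|_p$, contributes only $\lesssim\eps$ additional error while adding $\bigO(1)$ layers and $\lesssim 1$ weights. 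Encodability is inherited from Theorem~\ref{thm:PETVOIGTSMOOTH} and from the explicit Heaviside and multiplication constructions.

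The main obstacle I expect is the interplay between the \emph{measure} estimate of the transition slab and the norm in which $\gamma$ must be controlled: the clean bound $\|\mathbf{1}_K-\tilde{\mathbf{1}}_K\|_p\lesssim\delta^{1/p}$ really wants an $\mathcal{L}_\infty$ (not merely $\mathcal{L}_p$) estimate of $\gamma-\tilde\gamma$, so one must either extract a uniform bound from the smooth-approximation machinery or replace the slab argument by a Chebyshev-type estimate on $\{\,|\gamma-\tilde\gamma|>\delta\,\}$ and track the resulting loss in the exponent. A second, more bookkeeping-type difficulty is the global localization: one must verify that the finitely many rotated charts, the ReLU partition of unity, and the composition $H_\delta\circ(x_d-\tilde\gamma)$ compose without inflating the weight count beyond $\eps^{-p(d-1)/(n+\zeta)}$ or raising the depth above the $\eps$-independent bound, and that curved pieces of $\partial K$ are captured by a constant (in $\eps$) number of graph patches.
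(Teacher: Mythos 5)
Your construction follows the same route as the paper's two\-/sentence justification and the proof in \cite{PetV2018OptApproxReLU} which it summarizes: write $\partial K$ locally as graphs of $C^\beta$ functions of $d-1$ variables, approximate the graph function $\gamma$ via Theorem~\ref{thm:PETVOIGTSMOOTH} in dimension $d-1$, compose with the five-weight ReLU approximation of the Heaviside function, approximate $g$ via Theorem~\ref{thm:PETVOIGTSMOOTH} in dimension $d$ with smoothness $\beta'$, and recombine; your exponent bookkeeping ($\delta\sim\eps^{p}$, hence $M\lesssim\eps^{-p(d-1)/(n+\zeta)}$ from both factors) is exactly right. Also, the first obstacle you flag is not a genuine one: the set on which $H(x_d-\gamma(x'))$ and $H(x_d-\tilde\gamma(x'))$ differ is precisely the region between the two graphs, whose Lebesgue measure equals $\|\gamma-\tilde\gamma\|_{\mathcal{L}_1}$. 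It therefore suffices to invoke Theorem~\ref{thm:PETVOIGTSMOOTH} for $\gamma$ with exponent $1$ and accuracy $\eps^{p}$; no $\mathcal{L}_\infty$ control of $\gamma-\tilde\gamma$ and no Chebyshev argument (with its attendant loss in the exponent) is needed.

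The step that genuinely fails as written is the gluing of the two factors. You claim that the square-function-based approximate multiplication from the sketch of Theorem~\ref{thm:yarotsky} reaches accuracy $\eps$ while adding ``$\bigO(1)$ layers and $\lesssim 1$ weights''. It cannot: that network has $L,M\lesssim\log_2(1/\eps)$, and by Remark~\ref{rem:Square} (a fixed-size ReLU network is piecewise linear with a bounded number of pieces) no ReLU network of size independent of $\eps$ approximates the product map to arbitrary accuracy. So your construction either inherits depth $\gtrsim\log_2(1/\eps)$ --- contradicting the $\eps$-independent depth bound $L\lesssim\log_2(n+\zeta)\cdot(n+\zeta)/d$ in the statement --- or leaves a multiplication error that does not vanish. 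The repair, which is essentially how \cite{PetV2018OptApproxReLU} proceeds, is to observe that no genuine multiplication is needed, because $\tilde{\mathbf{1}}_K=H_\delta(x_d-\tilde\gamma(x'))$ is \emph{exactly} $0$ or $1$ outside a set of measure $\lesssim\delta+\|\gamma-\tilde\gamma\|_{\mathcal{L}_1}\lesssim\eps^{p}$. After clipping $\tilde g$ to $[-R,R]$ for a fixed $R\geq 1\geq\|g\|_\infty$ (two extra ReLU layers; this only decreases the pointwise error against $g$), the gadget
\[
(\iota,v)\mapsto \max\{0,\,v-R(1-\iota)\}-\max\{0,\,-v-R(1-\iota)\}
\]
computes $\iota\cdot v$ exactly whenever $\iota\in\{0,1\}$ and $\pabs{v}\leq R$, has output between $0$ and $v$ for all $\iota\in[0,1]$, and costs $\bigO(1)$ weights and a single hidden layer. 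Consequently its $\mathcal{L}_p$ error against $\mathbf{1}_K g$ splits into the term $\|g-\tilde g\|_p\lesssim\eps$ on the set where the indicator is exact and a term supported on a set of measure $\lesssim\eps^{p}$ with pointwise bound $2R$, which is again $\lesssim\eps$. With this gadget in place of the generic multiplication network (applied patchwise under your partition of unity), your argument closes with the stated, $\eps$-independent depth and the claimed weight count.
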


This rate is also shown to be optimal. We note, that approximation rates for piecewise H{\"o}lder functions in $\mathcal{L}_2$ have been proven in~\cite{imaizumi2018deep} and more general spaces like Besov spaces have been considered in~\cite{suzuki2018adaptivity}.

Some remarks on the role of depht for the results presented above can be found in Section~\ref{sec:ComparisonDeepShallow}.

\section{Assuming More Structure} \label{sec:Structure}

We have seen in the last sections that even approximations by deep neural networks face the curse of dimensionality for classical function spaces. How to avoid this by assuming more structured function space is the topic of this section. 
We start in Subsection~\ref{subsec:Hierarch} by assuming a hierarchical structure for which deep neural networks overcome the curse of dimensionality but shallow ones do not. Afterwards, we review approximations of high-dimensional functions lying on a low-dimensional set in Subsection~\ref{subsec:DataManifold}. Of a similar flavor are the results of Subsection~\ref{subsec:PDEApplications} where we examine specifically structured solutions of (parametric) partial differential equations. 

\subsection{Hierachical Structure} \label{subsec:Hierarch}

In~\cite{liang2016deep}, tight approximation rates for smooth functions $f:[0,1]^d\to \R,~x\mapsto g_1\circ g_2\circ\dots \circ g_k\circ l(x)$ with a hierarchical structure, where $l$ is a multivariate polynomial and $g_1,\dots,g_k$ are sufficiently smooth univariate functions, have been derived. Achieving a uniform approximation error $\epsilon>0$ with $L \lesssim 1$ layers requires $N \gtrsim \mathrm{poly}(1/\epsilon)$ neurons, whereas neural networks with $L \sim 1/\epsilon$ layers only require $N\lesssim \mathrm{polylog}(1/\epsilon)$ neurons. The proof idea is again based on the approximation of Taylor polynomials.

Similar to \cite{liang2016deep}, the papers \cite{mhaskar2016learning,mhaskar2016deep,mhaskar2017when,poggio2017and} deduce superior deep neural network approximation rates for high-dimensional functions with a compositional structure. They argue that computations on e.g.\ images should reflect properties of image statistics such as locality and shift invariance, which naturally leads to a compositional structure. As an example, consider a function $f:[-1,1]^d\to \R$ with input dimension $d=8$ such that 
\[
f(x_1,\dots,x_8)= f_3(f_{21}(f_{11}(x_1,x_2),f_{12}(x_3,x_4)) , f_{22}(f_{13}(x_5,x_6),f_{14}(x_7,x_8))
),\]
where each of the functions $f_3,f_{21},f_{22},f_{11},f_{12},f_{13},f_{14}$ is bivariate and in $W^{n,\infty}([-1,1]^2)$ (see Figure \ref{fig:MhaskarTree} for a visualization). Efficient approximations can be constructed in two steps. First, each bivariate function is approximated by a shallow neural network with smooth, non-polynomial activation function and size $M\lesssim\epsilon^{-2/n}$. Then, the neural networks are concatenated in a suitable way by allowing depth $L \lesssim \log_2(d)$. The resulting neural network has $M\lesssim(d-1)\epsilon^{-2/n}$ weights (see \cite[Theorem 2]{poggio2017and}). Contrary, shallow neural networks with the same activation function require $M \gtrsim \epsilon^{-d/n}$ parameters (see \cite[Theorem 1]{poggio2017and}). Moreover, if the components $f_{ij}$ of $f$ are simply assumed to be Lipschitz continuous, then shallow ReLU neural networks in general require $M \gtrsim \epsilon^{-d}$ parameters. On the other hand, deep ReLU neural networks with $L\lesssim\log_2(d)$ layers only require $M\lesssim (d-1)\epsilon^{-2}$ non-zero weights (see \cite[Theorem 4]{poggio2017and}).
\begin{figure}
\centering
\begin{forest} 
for tree={edge={},l+=1cm}
[,label=$f_3$,regular
    [,label={[label distance=1]0:$f_{21}$},regular
        [,label={[label distance=1]0:$f_{11}$},regular
        [$x_1$]
        [$x_2$]
        ]
        [,label={[label distance=1]0:$f_{12}$},regular
        [$x_3$]
        [$x_4$]
        ]
    ]
    [,label={[label distance=1]0:$f_{22}$},regular
        [,label={[label distance=1]0:$f_{13}$},regular
        [$x_5$]
        [$x_6$]
        ]
        [,label={[label distance=1]0:$f_{14}$},regular
        [$x_7$]
        [$x_8$]
        ]
    ]
]
\end{forest}
\caption{A visualization of the hierarchically structured function $f$.} \label{fig:MhaskarTree}
\end{figure}
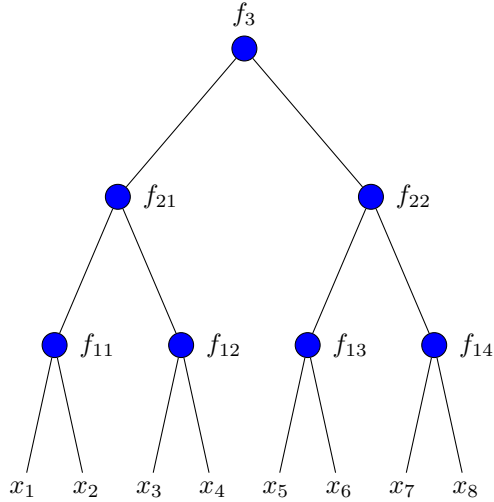

In~\cite{montanelli2019new}, neural network approximation for the \emph{Korobov spaces} 
\begin{align*}
    \mathcal{K}_{2,p}([0,1]^d) = \left\{f\in \mathcal{L}_p([0,1]^d):f|_{\partial [0,1]^d}= 0, D^{\alpha} f\in \mathcal{L}_p([0,1]^d),\|\alpha\|_\infty \leq 2 \right\},
\end{align*}
for $p\in [2,\infty]$ are considered. We note that trivially $\mathcal{K}_{2,p}([0,1]^d) \subset W^{2,p}([0,1]^d)$. These functions admit a hierarchical representation (similar to a wavelet decomposition) with respect to basis functions obtained from \emph{sparse grids} (see \cite{SparseGrids}). By emulating these basis functions, the authors are able to show that for every $f\in \mathcal{K}_{2,p}([0,1]^d)$ and every $\epsilon>0$ there exists a neural network $\Phi_{f,\epsilon}$ with $$L(\Phi_{f,\epsilon})\lesssim\log_2(1/\epsilon) \log_2(d)  $$ layers as well as $$N(\Phi_{f,\epsilon})\lesssim\epsilon^{-1/2} \cdot \log_2(1/\epsilon)^{\frac{3}{2}(d-1)+1} \log_2(d)$$ neurons such that
\[\left\|f-\act{\Phi_{f,\epsilon}} \right\|_\infty \leq \epsilon,\]  where $\sigma=\mathrm{ReLU}$.
The curse of dimensionality is significantly lessened since it only appears in the log factor.

\subsection{Assumptions on the Data Manifold} \label{subsec:DataManifold}
A typical assumption is that high-dimensional data actually resides on a much lower dimensional manifold. A standard example is provided in Figure~\ref{fig:SwissRoll}. One may think about the set of images with $128\times 128$ pixels in $\R^{128\times 128}$: certainly, most elements in this high-dimensional space are not perceived as an image by a human, making images a proper subset of $\R^{128\times 128}$. Moreover, images are governed by edges and faces and, thus, form a highly structured set. This motivates the idea that the set of images in $\R^{128\times 128}$ can be described by a lower dimensional manifold.
\begin{figure}
    \centering
    \includegraphics[width=.4\textwidth]{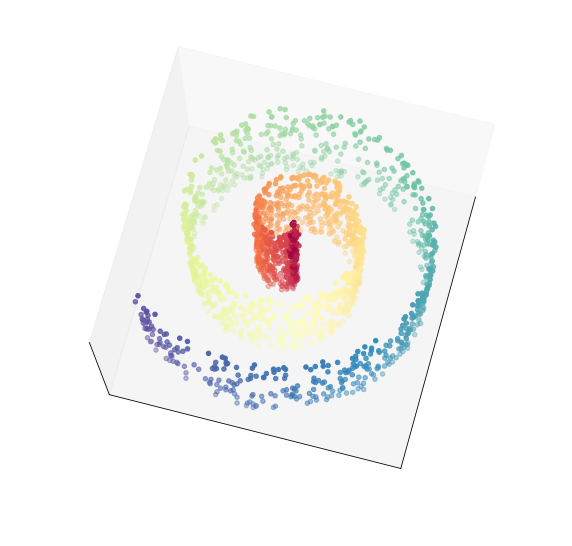}
    \caption{Visualization of the Swiss Roll Data Manifold }
    \label{fig:SwissRoll}
\end{figure}

In~\cite{shaham2018provable} approximation rates of ReLU neural networks for functions $f\in \mathcal{L}_2(K)$ residing on a $d$-dimensional manifold $K\subset \R^D$ with possibly $d\ll D$, are shown to only weakly depend on $D$. In detail, the following theorem was proven.
\begin{theorem}[\cite{shaham2018provable}]\label{thm:wavelet_manifold}
Let $K\subset\R^D$ be a smooth $d$-dimensional manifold, and $f \in \mathcal{L}_2(K)$. Then there exists a depth-4 neural network $\Phi_{f,N}$ with $M(\Phi_{f,N})\lesssim D C_K + d C_K N$ whose ReLU realization computes a wavelet approximation of $f$ with $N$ wavelet terms. The constant $C_K$ only depends on the curvature of $K$.
\end{theorem}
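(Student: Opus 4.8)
The plan is to reduce the approximation problem on the $d$-dimensional manifold to a finite collection of local approximation problems in $\R^d$, each solved by emulating a compactly supported, piecewise-linear wavelet frame with a small ReLU network. First I would exploit smoothness of $K$ to build a finite atlas: cover $K$ by patches $\{U_i\}_{i=1}^{m}$ on each of which the orthogonal projection $\pi_i\colon\R^D\to T_i$ onto the tangent $d$-plane $T_i$ is a bi-Lipschitz diffeomorphism onto its image. The number of patches $m$ and the sizes of their overlaps can be controlled purely in terms of the reach and curvature of $K$, which is exactly the constant $C_K$; this is the only place the geometry of the embedding enters, and in particular $m\lesssim C_K$. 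Subordinate to the cover I would take a partition of unity $\{\eta_i\}$, so that $f=\sum_i \eta_i f$ and, on each patch, $\eta_i f = f_i\circ\pi_i$ with $f_i\coloneqq(\eta_i f)\circ\pi_i^{-1}\in\mathcal{L}_2(\R^d)$. This turns the global problem into $\lesssim C_K$ local problems in the intrinsic dimension $d$.

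For each $f_i$ I would fix a compactly supported, ReLU-representable wavelet frame on $\R^d$ and write an $N$-term approximation $\sum_{j} c_{ij}\,\psi_{ij}$, where each $\psi_{ij}$ is a dilated and translated copy of a single generator $\psi$. Distributing the total budget of $N$ terms across the patches, the global approximant is $\sum_i\sum_j c_{ij}\,\psi_{ij}\circ\pi_i$, which is precisely the claimed $N$-term wavelet approximation of $f$ on $K$. The key structural observation is that the dilation and translation parameters of each $\psi_{ij}$, as well as the linear projection $\pi_i$, can be folded into the affine part of the first layer, so that emulating the whole family reduces to emulating the fixed generator $\psi$ once.

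The technical heart is to realize the $d$-dimensional generator exactly by a depth-$4$ ReLU network of size $\lesssim d$. Here I would use the standard sum-of-ramps construction: writing $t=\pi_i(x)$ for the chart coordinates, a single hidden layer of ReLUs produces a trapezoidal one-dimensional bump $b(t_k)\in[0,1]$ along each coordinate $k$, and then $\sigma\!\big(\sum_{k=1}^{d} b(t_k)-(d-1)\big)$ is positive precisely on the intersection of the plateaus, i.e.\ it is a localized $d$-dimensional bump. Counting layers gives four (the ramps, the one-dimensional bumps, the $d$-dimensional intersection, and the final linear combination), and parallelizing the patch-networks and summing their outputs in the last layer preserves this depth. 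Because $\pi_i$ is linear, the ambient dimension enters only through the first affine map and only linearly in $D$; aggregated over the $\lesssim C_K$ charts this yields a contribution $\lesssim D\,C_K$ that is independent of $N$. Each of the $N$ wavelet terms, living in the $d$-dimensional chart coordinates, then costs $\lesssim d$ additional weights, giving $\lesssim d\,C_K N$ and hence the stated bound $M(\Phi_{f,N})\lesssim D\,C_K+d\,C_K N$.

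I expect the main obstacle to be the geometric bookkeeping rather than the network construction. One must verify that the atlas can be chosen with a curvature-controlled number of charts and \emph{uniformly} bi-Lipschitz projections, so that composing a Euclidean wavelet with the linear map $\pi_i$ faithfully reproduces an intrinsic wavelet on $K$ and the local $\mathcal{L}_2$-errors aggregate without loss to the correct $N$-term rate. A secondary subtlety is ensuring that the piecewise-linear, ReLU-representable frame is genuinely a frame attaining the wavelet $N$-term rate for $f\in\mathcal{L}_2(K)$, and that multiplication by the partition of unity $\eta_i$ does not inflate the number of active terms beyond $\mathcal{O}(N)$.
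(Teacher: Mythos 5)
Your overall strategy is correct and is essentially the construction of the cited reference \cite{shaham2018provable} (the present paper states this theorem without proof, so the comparison is with that construction): a curvature-controlled atlas whose charts are tangent-plane projections, a partition of unity localizing $f$ to the charts, an $N$-term approximation in chart coordinates by a frame whose generator is a ReLU-realizable bump, and a parallelization of the per-chart networks. Your generator $\sigma\big(\sum_{k=1}^{d} b(t_k)-(d-1)\big)$, built from trapezoids along the chart coordinates, is exactly the generator used there.

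There is, however, a genuine gap: your weight count does not follow from the architecture you describe, and the discrepancy concerns precisely the point the theorem is about. You fold the projection $\pi_i$ \emph{together with} each term's dilation and translation into the first-layer affine maps, so your first layer consists of the ramps. But the ramps depend on the scale and translation of the individual wavelet term and therefore cannot be shared between terms; each of the $\mathcal{O}(d)$ ramp units of each of the $N$ terms then carries its own copy of a row of $\pi_i$, i.e. $\mathcal{O}(D)$ nonzero weights, giving $\mathcal{O}(dDN)$ in total. The ambient dimension then multiplies $N$, which is exactly the dependence the theorem is designed to avoid. To obtain the additive bound $\lesssim DC_K + dC_KN$ you must dedicate the first layer to computing the chart coordinates of each of the $\lesssim C_K$ charts \emph{once} (ReLU can be made to act as the identity there because the domain is compact, by adding a large constant that is removed in the next affine map), and apply the per-term dilations and translations in the second layer, where every ramp unit has a single scalar input and hence $\mathcal{O}(1)$ weights. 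This shared projection layer is also the true origin of the depth-$4$ structure: projections, ramps, intersection bump, output sum. The fourth layer in your counting --- a separate layer for the one-dimensional trapezoids --- is unnecessary, since the trapezoids are linear combinations of ramps and can be absorbed into the following affine map, whereas the layer your count implicitly relies on (the shared projections) is absent from your architecture. Once the projections are moved into their own first layer, your argument goes through and coincides with the cited proof.
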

Note that the underlying dimension $D$ scales with $C_K$, whereas the number of wavelet terms that influences the approximation accuracy only scales with $d$. Additional assumptions on how well $f$ can be approximated by wavelets, can with Theorem~\ref{thm:wavelet_manifold} directly be transferred to approximation rates by neural networks. The following corollaries provide two examples for different assumption on the wavelet representation. 
\begin{corollary}
If $f$ has wavelet coefficients in $\ell_1,$ then, for every $\eps>0,$ there exists a depth-4 network $\Phi_{f,\eps}$ with $M(\Phi_{f,\eps})\lesssim D C_K + d C_K^2 M_f\eps^{-1}$ such that \[
\norm{f-\act{\Phi_{f,\eps}}}_{\mathcal{L}_2(K)}\leq\eps,
\]
where $M_f$ is some constant depending on $f$ and $\sigma = \mathrm{ReLU}$. 
\end{corollary}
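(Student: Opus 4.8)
The plan is to read the corollary off Theorem~\ref{thm:wavelet_manifold} by optimizing the number $N$ of retained wavelet terms against the accuracy $\eps$. That theorem already delivers, for every $N\in\N$, a depth-$4$ network $\Phi_{f,N}$ with $M(\Phi_{f,N})\lesssim DC_K+dC_K N$ whose ReLU realization \emph{is} the $N$-term wavelet approximation $f_N$ of $f$. Because the realization reproduces $f_N$ exactly, the only error left is the truncation error, so $\norm{f-\act{\Phi_{f,N}}}_{\mathcal{L}_2(K)}=\norm{f-f_N}_{\mathcal{L}_2(K)}$. The entire task therefore collapses to choosing $N=N(\eps)$ as small as possible subject to $\norm{f-f_N}_{\mathcal{L}_2(K)}\le\eps$, and then substituting this $N$ into the weight count.

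First I would turn the hypothesis ``wavelet coefficients in $\ell_1$'' into a best-$N$-term rate. Writing the expansion $f=\sum_i c_i\varphi_i$ and setting $M_f:=\sum_i\pabs{c_i}$, I let $f_N$ keep the $N$ coefficients of largest modulus. Ordering $\pabs{c_{(1)}}\ge\pabs{c_{(2)}}\ge\cdots$, the $\ell_1$ bound gives $k\,\pabs{c_{(k)}}\le M_f$, so the discarded coefficients decay and a Stechkin-type estimate bounds the discarded mass by the $\ell_1$ tail. Transferring this back to the function-space norm through the upper frame bound of the manifold wavelet system---the step that reintroduces the curvature constant $C_K$---produces a rate of the schematic form $\norm{f-f_N}_{\mathcal{L}_2(K)}\lesssim C_K\,\eta(M_f,N)$ with $\eta$ decreasing in $N$. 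Solving $C_K\,\eta(M_f,N)\le\eps$ for the least admissible $N$ then pins down $N(\eps)\lesssim C_K M_f\,\eps^{-1}$.

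Substituting this $N$ into $M(\Phi_{f,N})\lesssim DC_K+dC_K N$ yields $M(\Phi_{f,\eps})\lesssim DC_K+dC_K\cdot C_K M_f\eps^{-1}=DC_K+dC_K^2 M_f\eps^{-1}$, which is exactly the asserted bound. The depth remains $4$ and the activation remains $\sigma=\mathrm{ReLU}$, since both are already fixed by Theorem~\ref{thm:wavelet_manifold}; no further network surgery is needed.

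The main obstacle is precisely the coefficient-to-rate conversion in the middle step: one has to make the passage from the $\ell_1$ control of the coefficients to the $\mathcal{L}_2(K)$ control of the truncated remainder fully rigorous, and this is where both the exact power of $\eps$ in $N(\eps)$ and the power of $C_K$ in the final estimate are determined. Since the manifold wavelets form a frame rather than an orthonormal basis, the equivalence between $\norm{f-f_N}_{\mathcal{L}_2(K)}$ and the $\ell_2$ norm of the dropped coefficients must be invoked with the frame bounds carefully absorbed into $C_K$. A secondary, easier point is to verify that Theorem~\ref{thm:wavelet_manifold} indeed emulates $f_N$ exactly; if the atoms were only approximately reproduced, I would split the budget $\eps$ into a truncation part and an emulation part, bound each separately, and recombine by the triangle inequality.
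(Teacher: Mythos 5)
Your reduction is the one the paper itself intends: Theorem~\ref{thm:wavelet_manifold} turns any $N$-term wavelet approximation into a depth-$4$ ReLU network with $M\lesssim DC_K+dC_KN$ weights, so the whole content of the corollary is the conversion of the $\ell_1$ coefficient bound into a choice of $N(\eps)$. Since the paper offers no further detail than this reduction, that conversion step \emph{is} the proof, and it is exactly the step your proposal does not carry out correctly.

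From your own estimate $k\,\pabs{c_{(k)}}\le M_f$ (coefficients sorted decreasingly), the Stechkin bound gives $\sum_{k>N}\pabs{c_{(k)}}^2\le M_f^2/N$, and passing to the function through the upper frame bound of the system (a constant depending on $C_K$) yields $\norm{f-f_N}_{\mathcal{L}_2(K)}\lesssim \sqrt{C_K}\,M_f\,N^{-1/2}$. So the rate function is $\eta(M_f,N)\sim M_fN^{-1/2}$, not $M_fN^{-1}$: solving for the least admissible $N$ gives $N(\eps)\sim C_KM_f^2\eps^{-2}$, hence $M(\Phi_{f,\eps})\lesssim DC_K+dC_K^2M_f^2\eps^{-2}$, which differs from the stated bound in the powers of both $\eps$ and $M_f$. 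Your sentence ``solving $C_K\,\eta(M_f,N)\le\eps$ \ldots\ pins down $N(\eps)\lesssim C_KM_f\eps^{-1}$'' therefore asserts precisely what has to be proved, and with the estimate you set up it is false; an $\ell_2$ tail decay of order $N^{-1}$ would require strictly more than $\ell_1$ control of the coefficients (e.g.\ a weak-$\ell_{2/3}$ condition). The stated exponents trace back to the source result in \cite{shaham2018provable}, where the accuracy is measured by the \emph{squared} $\mathcal{L}_2$ error: demanding $\norm{f-f_N}_{\mathcal{L}_2(K)}^2\le\eps$ gives $N(\eps)\sim C_K\norm{c}_{\ell_1}^2\,\eps^{-1}$, and the deliberately vague constant ``$M_f$'' of the corollary then absorbs $\norm{c}_{\ell_1}^2$. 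To complete your proof you must either adopt that squared-error convention explicitly (redefining $M_f$ accordingly) or settle for the weaker bound $DC_K+dC_K^2M_f^2\eps^{-2}$. The remaining parts of your proposal (exact emulation of the $N$-term sum by the network of Theorem~\ref{thm:wavelet_manifold}, and the triangle-inequality fallback if the atoms were only approximately reproduced) are fine.
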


\begin{corollary}
If $f\in C^2(K)$ has bounded Hessian matrix, then, for every $\eps>0,$ there exists a depth-4 neural network $\Phi_{f,\eps}$ with $M(\Phi_{f,\eps}) \lesssim D C_K + d C_K \eps^{-d/2}$ satisfying \[
\norm{f-\act{\Phi_{f,\eps}}}_{\mathcal{L}_\infty(K)}\leq\eps,
\]
where $\sigma=\mathrm{ReLU}.$
\end{corollary}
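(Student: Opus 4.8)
The plan is to read this off from Theorem~\ref{thm:wavelet_manifold} once the $N$-term wavelet approximation rate for $f$ is understood. For each $N$, Theorem~\ref{thm:wavelet_manifold} supplies a depth-$4$ ReLU network with $M \lesssim D C_K + d C_K N$ weights whose realization equals (up to the emulation error, which the construction of \cite{shaham2018provable} renders negligible) the $N$-term wavelet approximation of $f$. Consequently the $\mathcal{L}_\infty$ error of $\act{\Phi_{f,N}}$ is governed by the $\mathcal{L}_\infty$ error of the chosen $N$-term wavelet expansion, and the task reduces to two steps: first, bounding this wavelet error for $f\in C^2(K)$ with bounded Hessian; and second, inverting the resulting relation to select $N$ as a function of $\eps$ and substituting into the weight count.

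For the first step I would invoke the standard smoothness--wavelet coefficient correspondence, transported to the manifold via the local coordinate charts underlying the construction. On a $d$-dimensional manifold the number of wavelet terms up to resolution scale $J$ is $\sim 2^{Jd}$, so truncating at scale $J$ uses $N \sim 2^{Jd}$ terms. Since $f$ is $C^2$ with uniformly bounded Hessian, its second-order Taylor remainder in each chart is controlled, which forces the fine-scale coefficients to decay and yields a truncation error $\lesssim 2^{-2J}$ in $\mathcal{L}_\infty$. Eliminating $J$ gives the linear $N$-term rate
\[
\inf\,\Bigl\|f - \textstyle\sum_{i=1}^N d_i \varphi_i\Bigr\|_{\mathcal{L}_\infty(K)} \lesssim N^{-2/d}.
\]

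For the second step, setting $N \sim \eps^{-d/2}$ makes the right-hand side $\lesssim \eps$, so that the $N$-term wavelet approximation, and hence the network $\Phi_{f,\eps}\coloneqq\Phi_{f,N}$ furnished by Theorem~\ref{thm:wavelet_manifold}, attains $\norm{f-\act{\Phi_{f,\eps}}}_{\mathcal{L}_\infty(K)}\leq \eps$. Substituting this choice of $N$ into the weight count of Theorem~\ref{thm:wavelet_manifold} gives precisely $M(\Phi_{f,\eps}) \lesssim D C_K + d C_K N \lesssim D C_K + d C_K \eps^{-d/2}$, which is the claimed bound, with the depth remaining fixed at $4$.

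The main obstacle is the uniform $\mathcal{L}_\infty$ wavelet rate of the first step: one must make the second-order control coming from the bounded Hessian uniform across the overlapping charts covering $K$, so that the curvature of the manifold enters only through the constant $C_K$ and does not corrupt the exponent $2/d$. Passing from the local Taylor estimates to a single global $\mathcal{L}_\infty$ bound, and verifying that it is the $\mathcal{L}_\infty$ (rather than $\mathcal{L}_2$) norm that is inherited by the network realization, is where the technical care is required.
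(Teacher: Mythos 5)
Your proposal follows exactly the route the paper intends: the corollary is read off from Theorem~\ref{thm:wavelet_manifold} by combining the weight bound $M \lesssim D C_K + d C_K N$ with the standard $N$-term wavelet rate $N^{-2/d}$ in $\mathcal{L}_\infty$ for $C^2$ functions with bounded Hessian (the Taylor-remainder/coefficient-decay argument from \cite{shaham2018provable}), and then choosing $N \sim \eps^{-d/2}$. This matches the paper's argument, so no further comparison is needed.
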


Lastly, we mention a result shown in~\cite[Theorem 5.4]{PetV2018OptApproxReLU}. There, functions of the type $f= g\circ h$ where $h:[-1/2,1/2]^D\to [-1/2,1/2]^d$ is a smooth dimension reduction map and $g:[-1/2,1/2]^d\to \R$ is piecewise smooth are examined. They show that the approximation rate is primarily governed by the reduced dimension $d$. 

\subsection{Expressivity of Deep Neural Networks for Solutions of PDEs}\label{subsec:PDEApplications}

Recently, neural network-based algorithms have shown promising results for the numerical solution of partial differential equations (PDEs) (see for instance \cite{Lagaris,weinan2017deep,weinan2018deep,sirignano2018dgm,han2018solving,beck2018solving,Han,SplittingJentzen,Khoo, RBNonlinearProblems, lee2018model, yang2018physics, raissi2018deep,DeepXDE}).

There also exist several results showing that specifically structured solutions of PDEs admit an approximation by neural networks that does not suffer from the curse of dimensionality (see \cite{JentzenHeat,grohs2018proof,SchwabOption,berner2018analysis,Reisinger2019,JentzenHeat,welti,PricingJentzen}).

The key idea behind these contributions is a stochastic interpretation of a deterministic PDE\footnote{This stochastic point of view can also be utilized to estimate the generalization error in such a setting (\cite{berner2018analysis}).}. As an example, we consider the \emph{Black-Scholes Equation}, which models the price of a financial derivative. For $T>0,~a<b$ the Black-Scholes Equation (a special case of the \emph{linear Kolmogorov Equation}) is given by
\begin{align} \label{eq:BlackScholes}
    \begin{cases} \partial_t u(t,x)= \frac{1}{2} \mathrm{trace}\left(\kappa(x)\kappa(x)^*(\mathrm{Hessian}_x u)(t,x) \right) + \left\langle \tilde{\kappa}(x), (\nabla_x u)(t,x)\right\rangle, \\
    u(0,x) = \varphi(x), \end{cases}
\end{align}
where 
\begin{itemize}
    \item[(i)] $\varphi\in C(\R^d)$ is the \emph{initial value}\footnote{The initial value is typically either exactly representable or well-approximable by a small ReLU neural network.},
    \item[(ii)] $\kappa\in C(\R^d,\R^{d\times d}),~\tilde{\kappa}\in C(\R^d,\R^d)$ are assumed to be affine, and
    \item[(iii)] $u\in C([0,T]\times \R^d)$ is the solution.    
\end{itemize}
In \cite{berner2018analysis} the goal is to find a neural network (of moderate complexity) that $\mathcal{L}_2$-approximates the end value of the solution $[a,b]^d\ni x\mapsto u(T,\cdot).$ 
Using the Feynman-Kac formula \cite[Section 2]{grohs2018proof}, the deterministic PDE~\eqref{eq:BlackScholes} can be associated with a stochastic PDE of the form
\begin{align} \label{eq:StochPDE}
    dS_t^{x} = \kappa(S_t^x)dB_t + \tilde{\kappa}(S_t^x)dt,\quad S_0^x=x,
\end{align}
on some probability space $(\Omega,\mathcal{G},\mathbb{P}),$ where $x \in [a,b]^d$, $(B_t)_{t\in [0,T]}$ is a $d$-dimensional Brownian motion and $(S_t^x)_{t\in [0,T]}$ the stochastic process that solves \eqref{eq:StochPDE}. Define $Y\coloneqq \varphi(S^x_t)$. Then the terminal state $u(T,x)$ has an integral representation with respect to the solution of \eqref{eq:StochPDE} given by $u(T,x) = \mathbb{E}(Y)$. Based on this relation, standard estimates on Monte-Carlo sampling and the special structure of $\kappa,\tilde{\kappa}$ \cite{berner2018analysis} derive the following statement: For each $i\in \N,$ there exist affine maps $\mathbf{W}_i(\cdot)+\mathbf{b}_i:~\R^d\to \R$ such that 
\begin{align*}
    \frac{1}{(b-a)^d} \left\| u(T,\cdot)-\frac{1}{n} \sum_{i=1}^n \varphi(\mathbf{W}_i(\cdot )+\mathbf{b}_i) \right\|_2^2  \lesssim \frac{d^{1/2}}{n}.
\end{align*}
This implies that $u(T,\cdot)$ can be approximated in $\mathcal{L}_2$ by ReLU neural networks $\Phi_{u,\epsilon}$ with $M(\Phi_{u,\epsilon})\lesssim \mathrm{poly}(d,1/\epsilon)$. Thus, the curse of dimensionality is avoided.

In the setting of parametric PDEs one is interested in computing solutions $u_y$ of a family of PDEs parametrized by $y\in\R^p$. Parametric PDEs are typically modelled as operator equations in their variational form
\begin{align*}
    b_y (u_y,v) = f_y(v), \quad \text{ for all } v\in \mathcal{H},~ y\in \mathcal{P} \quad \text{ (= parameter set)},
\end{align*}
where, for every $y\in \mathcal{P}\subset \R^p,$ ($p\in \N\cup \{\infty\}$)
\begin{itemize}
    \item[(i)] the maps $b_y:\mathcal{H}\times \mathcal{H}\to \R$ are parameter-dependent bilinear forms (derived from the PDE) defined on some Hilbert space $\mathcal{H}\subset L^\infty(K)$ for $K\subset \R^n$,
    \item[(ii)] $f_y\in \mathcal{H}^*$ is the parameter-dependent right-hand side, and
    \item[(iii)] $u_y\in \mathcal{H}$ is the parameter-dependent solution.
\end{itemize} 
The parameter $y\in\R^p$ models uncertainties in real-world phenomena such as geometric properties of the domain, physical quantities such as elasticity coefficients or the distribution of sources.
Parametric PDEs occur for instance in the context of multi-query applications and in the framework of uncertainty quantification. 

In the context of deep learning, the goal is, for given $\epsilon>0,$ to substitute the solution map $y\mapsto u_y$ by a neural network $\Phi_{\epsilon}$ such that 
\begin{align} \label{eq:ApproxPPDE}
    \sup_{y\in\mathcal{P}}\left\|\act{\Phi_{\epsilon}(y,\cdot)}- u_y\right\|_{\mathcal{H}} \leq \epsilon.
\end{align}
A common observation is that the \emph{solution manifold} $ \{u_y:y\in \mathcal{P}\}$ is low-dimensional for many parametric problems. More precisely, there exist $(\varphi_i)_{i=1}^d \subset \mathcal{H},$ where $d$ is comparatively small compared to the ambient dimension, such that for every $y\in \Ycal$ there exists some coefficient vector $(c_i(y))_{i=1}^d$
with 
\[ \left\|u_y - \sum_{i=1}^d c_i(y) \varphi_i \right\|_{\mathcal{H}} \leq \epsilon.  \]

For analytic solution maps $y\mapsto u_y$ \cite{schwab2018deep} constructed neural networks with smooth or ReLU activation function that fulfill~\eqref{eq:ApproxPPDE}. Exploiting a sparse Taylor decomposition of the solution with respect to the parameters $y$,
they were able to avoid the curse of dimensionality in the complexity of the approximating networks.

In~\cite{NNParametric} the following oberservation was made: If the forward maps $y\mapsto b_y(u,v),~y\mapsto f_y(v)$ are well-approximable by neural networks for all $u,v\in \mathcal{H},$ then also the map $y\mapsto (c_i(y))_{i=1}^d$ is approximable by ReLU neural networks $\Phi_{c,\epsilon}$ with $M(\Phi_{c,\epsilon})\lesssim \mathrm{poly}(d)\cdot \mathrm{polylog}(1/\epsilon) $. Since in many cases $d \lesssim\log_2(1/\epsilon)^p$ and for some cases one can even completely avoid the dependence of $d$ on $p,$ the curse of dimensionality is either significantly lessened or completely overcome.  
The main idea for the computation of the coefficients by neural networks lies in the efficient approximation of the map $y\mapsto ((b_y(\varphi_j,\varphi_i))_{i=1}^d)^{-1}$. This is done via a Neumann series representation of the matrix inverse, which possesses a hierarchical structure.

\section{Deep versus Shallow Neural Networks} \label{sec:ComparisonDeepShallow}
\emph{Deep} neural networks have advanced the state of the art in many fields of application. 
We have already seen a number of results highlighting differences between the expressivity of shallow and deep neural networks throughout the paper. In this section, we present advances made in approximation theory explicitly aiming at revealing the role of depth.

First of all, we note that basically all upper bounds given in Section~\ref{sec:ApproxSmooth} and Section~\ref{sec:DeepPiecewiseRates} use $L>2$ layers. Furthermore, in~\cite[Section 4.4]{YAROTSKY2017103} and~\cite[Section 4.2]{PetV2018OptApproxReLU}, it is shown that the upper bound of non-zero weights in Theorem~\ref{thm:yarotsky}, Theorem~\ref{thm:PETVOIGTSMOOTH} and Theorem~\ref{thm:PETVOIGTPIECEWISESMOOTH} \emph{cannot} be achieved by shallow networks. For a fixed number of layers, \cite{PetV2018OptApproxReLU} showed that the number of layers in Theorem~\ref{thm:PETVOIGTSMOOTH} and Theorem~\ref{thm:PETVOIGTPIECEWISESMOOTH} is optimal up to the involved log-factors.

A common strategy in many works which compare the expressive power of very deep over moderately deep neural network is to construct functions that are efficiently approximable by deep neural networks, whereas far more complex shallow neural networks are needed to obtain the same approximation accuracy. The first results in this direction were dealing with the approximation of Boolean functions by \emph{Boolean circuits} (see, e.g., \cite{Sipser1983,hastad1986almost}). In~\cite{hastad1986almost}, the existence of function classes that can be computed with Boolean circuits of polynomial complexity and depth $k$ is proven. In contrast, if the depth is restricted to $k-1$, an exponential complexity is required. Later works such as \cite{haastad1991power, hajnal1993threshold, martens2014expressive} and the references therein focused on the representation and approximation of Boolean functions by deep neural networks where the activation is the threshold function $\sigma=\mathbf{1}_{[0,\infty)}.$\footnote{called \emph{deep threshold circuits}}

In~\cite{delalleau2011shallow}, networks built from sum and product neurons, called \emph{sum-product} networks, are investigated. A sum neuron computes a weighted sum\footnote{as in the standard setting} and a product neuron the product of its inputs. The identity function is used as activation function and layers of product neurons are alternated with layers of sum neurons. The authors then considered functions implemented by deep sum-product networks with input dimension $d$, depth $L\lesssim\log_2(d)$ and $N\lesssim d$ neurons that can only be represented by shallow sum-product networks with at least $N\gtrsim 2^{\sqrt d}$ neurons. Thus, the complexity of the shallow representations grows exponentially in $\sqrt d$. Similar statements are shown for slightly different settings.

\cite{PowerOfDepth} examines approximation rates of neural networks with smooth activation functions for polynomials. It has been shown that in order to approximate a polynomial by a shallow neural network, one needs exponentially more parameters than with a corresponding deep counterpart. 

In~\cite{Telgarsky2016BenefitsOfDepth} neural networks with $L$ layers are compared with networks with $L^3$ layers. Telgarsky shows that there exist functions $f:\R^d\to \R$ that can be represented by neural networks with $\sim L^3$ layers and with $\sim L^3$ neurons which, in $\mathcal{L}_1,$ cannot be arbitrarily well approximated by neural networks with $\lesssim L$ layers and $\lesssim  2^L$ neurons. This result is valid for a large variety of activation functions such as piecewise polynomials. The main argument of the paper is based on the representation and approximation of functions with oscillations. First, it is shown that functions with a small number of oscillations cannot approximate functions with a large number of oscillations arbitrarily well. Afterwards, it is shown that functions computed by neural networks with a small number of layers can only have a small number of oscillations, whereas functions computed by deeper neural networks can have many oscillations. A combination of both arguments leads to the result. The papers \cite{eldan2016power,pmlr-v70-safran17a} show similar results for a larger variety of activation functions\footnote{including the ReLU and sigmoidal functions} in which the $\mathcal{L}_2$-approximation of certain radial basis functions\footnote{i.e. functions of the form $f:\R^d\to \R$ where $f(x)=g(\|x\|_1)$ for some univariate function $g$} requires $M\lesssim 2^d$ weights, whereas three layered networks can exactly represent these functions with $M\lesssim \mathrm{poly}(d)$ weights. 

In~\cite{pascanu2013number,montufar2014number} it is shown that deep ReLU neural networks have the capability of dividing the input domain into an exponentially larger number of linear regions than shallow ones.\footnote{A \emph{linear region} of a function $f:\R^d\to \R^s$ is a maximally connected subset $A$ of $\R^d$ such that $f|_A$ is linear.} A ReLU neural network with input dimension $d,$ $L$ layers and width $n\geq d$ is capable of computing functions with number of linear regions less than $(n/d)^{(L-1)d} n^d $, whereas ReLU neural networks with input dimension $d$, two layers and width $Ln$ are only able to divide the input domain into $L^d n^d$ linear regions. Similarly, the paper \cite{ExpressivePowerDNN} gives corresponding upper bounds on the number of linear regions representable by deep ReLU neural networks. It is shown that ReLU neural networks with input dimension $d,$ $L$ layers and width $n,$ are only able to represent $\lesssim n^{Ld}$ linear regions.

Comparable results like those of the aforementioned papers are the statements of \cite{arora2018understanding}. In particular, \cite[Theorem 3.1]{arora2018understanding} establishes that for every $L\in \N,~K\geq 2$ there exists a function $f:\R\to\R$ which is representable by ReLU neural networks with $L+1$ layers, and $K\cdot L$ neurons. Moreover, if $f$ is also representable by the ReLU realization of a neural network $\Tilde{\Phi}$ with $\Tilde{L}+1\leq L+1$ layers, then the number of neurons of $\Tilde{\Phi}$ is bounded from below by $\gtrsim \Tilde{L}K^{L/\Tilde{L}}.$ 
The paper \cite{complexity2014bianchini} connects the approximative power of deep neural networks to a topological complexity measure based on Betti numbers: For shallow neural networks, the representation power grows polynomially in the numbers of parameters, whereas it grows exponentially for deep neural networks. Also, connections to complexity measures based on VC-dimensions are drawn.

\section{Special Neural Network Architectures and Activation Functions}\label{sec:SpecialArchitecture}

So far we entirely focused on general \emph{feedforward} neural network architectures. But since neural networks are used for many different types of data in various problem settings, there exists a plethora of architectures, each adapted to a specific task. In the following we cover expressivity results for three prominent architectures: \emph{Convolutional neural networks, residual neural networks and recurrent neural networks}. Since explaining the architectures and their applications in detail is beyond the scope of this paper we will only briefly review the basics and give references for the interested reader.\footnote{Note that in this section we sometimes deviate from the notation used so far in this paper. For instance, in the case of convolutional neural networks, we do not differentiate between the network as a collection of weights and biases and function realized by it anymore. Such a distinction would make the exposition of the statements in these cases unnecessarily technical. Moreover, we closely stick to the conventional notation for recurrent neural networks where the indices of the layers are expressed in terms of discrete time steps $t$.}

\subsection{Convolutional Neural Networks}
Convolutional neural networks have first been introduced in \cite{lecun1989generalization} and since then lead to tremendous successes, in particular, in computer vision tasks. 
As an example, consider the popular ILSVR\footnote{The abbreviation \emph{ILSRC} stands for \emph{ImageNet Large Scale Visual Recognition Challenge}.} challenge (see~\cite{russakovsky2015imagenet}), an image recognition task on the ImageNet database (see~\cite{deng2009imagenet}) containing variable-resolution images that are to be classified into categories. In~2012 a convolutional neural network called \emph{AlexNet} (see~\cite{AlexNet}) achieved a top-5 error of 16.4~\% realising a 10 \% error rate drop compared to the winner of the previous year. The winners of all succeeding annual ILSVR challenges have until now been convolutional neural networks  (see \cite{zeiler2014visualizing, simonyan2014very,szegedy2015going,he2016deep},\dots).

We now start with a brief explanation of the basic building blocks of a convolutional neural network and recommend \cite[Chapter\ 9]{Goodfellow} for an extensive and in-depth introduction.

Instead of vectors in $ \R^d,$ the input of a convolutional neural network potentially consists of tensors $\mathbf{x}^{[0]}\in \R^{d_1\times\dots\times d_n}$ where $n=2$ or $n=3$ for the case of images described above. The main characteristic of convolutional neural networks is the use of \emph{convolutional layers}. The input $\mathbf{x}^{[i-1]}$ of layer $i$ is subject to an affine transformation of the form 
\[\mathbf{z}^{[i]}= \mathbf{w}^{[i]}\ast\mathbf{x}^{[i-1]}+\mathbf{b}^{[i]},\]
where $\mathbf{w}^{[i]}$ denotes the \emph{convolution filter} and $\mathbf{b}^{[i]}$ the \emph{bias}. In fact, the application of a convolution can be rewritten as an affine transformation of the form $\mathbf{z}^{[i]}= \mathbf{W}^{[i]}\mathbf{x}^{[i-1]}+\mathbf{b}^{[i]},$ where $\mathbf{W}^{[i]}$ is a matrix with a specific structure and potentially many zero entries (depending on the size of the filter). Convolutional layers enforce \emph{locality} in the sense that neurons in a specific layer are only connected to neighboring neurons in the preceeding layer, as well as \emph{weight sharing}, i.e. different neurons in a given layer share weights with neurons in other parts of the layer. The application of convolutions results in \emph{translation equivariant} outputs in the sense that $\mathbf{w}^{[i]}\ast (T \mathbf{x}^{[i-1]}) = T(\mathbf{w}^{[i]}\ast \mathbf{x}^{[i-1]})$ where $T$ is a translation (or shift) operator. 

As in the case of feedforward neural networks, the convolution is followed by a nonlinear activation function $\sigma: \R\to \R$, which is applied coordinate-wise to $\mathbf{z}^{[i]}.$ Hence, a convolutional layer corresponds to a standard layer in a feedforward neural network with a particular structure of the involved affine map. 

Often the nonlinearity is followed by a \emph{pooling layer}, which can be seen as a dimension reduction step. Popular choices for pooling operations include \emph{max pooling,} where only the maxima of certain regions in $\sigma(\mathbf{z}^{[i]})$ are kept, or \emph{average pooling}, where the average over certain regions is taken. Contrary to convolutions, pooling often induces \emph{translation invariance} in the sense that a small translation of the input $\mathbf{x}^{[i-1]}$ does not change the output of the pooling stage.

Stacking several convolutional and pooling layers results in a deep convolutional neural network. After having fixed an architecture, the goal is to learn the convolutional filters $\mathbf{w}^{[i]}$ and the biases $\mathbf{b}^{[i]}.$

We now turn our attention to expressivity results of convolutional neural networks.
The paper \cite{CohenSharir} compares the expressivity of deep and shallow \emph{convolutional arithmetic circuits} for the representation of \emph{score functions,} motivated by classification tasks. A convolutional arithmetic circuit can be interpreted as a convolutional neural network with linear activation function, product pooling, and convolutional layers consisting of $1\times1$ convolutions. We consider a classification task of the following form: For an observation $X=(\mathbf{x}_1,\dots,\mathbf{x}_m)\in \R^{d \cdot  m},$ where $X$ could be an image represented by a collection of vectors, find a suitable classification $n \in Y = \{1,\dots,N\}$ where $Y$ is the set of possible labels. This can be modelled by per-label score functions $\{h_n\}_{n\in \{1,\dots,N\}},$ with $h_n:~\R^{d\cdot m}\to \R$. The label corresponding to $X$ is then denoted by $\argmax_{n\in Y}h_n(X)$. It is assumed that these score functions have a tensor decomposition of the form 
\begin{align*}
    h_n(\mathbf{x}_1,\dots,\mathbf{x}_m)= \sum_{i_1,\dots,i_m=1}^r \mathbf{c}^n_{i_1,\dots,i_m} \prod_{j=1}^m f_{{i_j}}(\mathbf{x}_i)
\end{align*}
for some representation functions $f_{1},\dots,f_{r}:\R^d\to \R$ and a coefficient tensor $\mathbf{c}^n.$
Hence, every $h_n$ is representable by a \emph{shallow} convolutional arithmetic circuit, however, with a \emph{potentially large number} of parameters. Using hierarchichal tensor decompositions, the main result states, roughly speaking, that it is with high probability not possible to significantly reduce the complexity of the shallow convolutional arithmetic circuit in order to approximate or represent the underlying score function. However, \emph{deep} convolutional arithmetic circuits \emph{are} able to exactly represent score functions with exponentially less parameters. 
This is yet another instance of the benefit of deep over shallow neural networks, this time for a larger function class rather than only for special instances of functions as the ones considered in Subsection \ref{sec:ComparisonDeepShallow}. The main proof ideas are relying on tools from matrix algebra, tensor analysis, and measure theory.

In \cite{YarotskyCNNs} the classification task is approached by directly approximating the classification function $f:\mathcal{L}_2(\R^2)\to\R$, which maps an image to a real number. Here, images are modelled as elements of the function space $\mathcal{L}_2(\R^2)$ (see also Subsection \ref{sec:DeepPiecewiseRates}, where images are modelled by cartoon-like functions). In order to deal with a non-discrete input (in this case elements of $\mathcal{L}_2$), the neural networks considered in this paper consist of a discretization step $\mathcal{L}_2(\R^2)\to V$ (where $V$ is isometrically isomorphic to $\R^{D_\eps}$ for some $D_\eps<\infty$) followed by convolutional layers and downsampling operations, which replace pooling. The following theorem \cite[Theorem~3.2]{YarotskyCNNs} was shown:

\begin{theorem}
Let $f:\mathcal{L}_2(\R^2)\to\R$. Then, the following conditions are equivalent:
\begin{itemize}
    \item[(i)] The function $f$ is continuous (in the norm topology).
    \item[(ii)] For every $\eps>0$ and every compact set $K\subset\mathcal{L}_2(\R^2)$ there exists a convolutional neural network (in the above sense, with downsampling) $\Phi_{K,\eps}$ that approximates $f$ uniformly on $K,$ i.e.,\
    \[
    \sup_{\xi\in K}\pabs{f(\xi)-\Phi_{K,\eps}(\xi)}\leq\eps.\]
\end{itemize}
\end{theorem}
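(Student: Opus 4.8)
The plan is to prove the two implications separately, with the forward direction (i) $\Rightarrow$ (ii) carrying essentially all of the work.

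For the easy direction (ii) $\Rightarrow$ (i), I would first observe that every network $\Phi_{K,\eps}$ of the stated form is a continuous map on $\mathcal{L}_2(\R^2)$. Indeed, the discretization step can be realized as the orthogonal projection onto the finite-dimensional space $V$ of piecewise-constant functions on a fixed grid over a box (equivalently, a block of local $\mathcal{L}_2$-inner products); this is a bounded linear operator and hence continuous, and the subsequent convolutions, coordinatewise continuous activations, and downsampling operations are all continuous on the finite-dimensional $V \cong \R^{D_\eps}$. Thus each $\Phi_{K,\eps}$ is continuous. Given a convergent sequence $\xi_n \to \xi$ in $\mathcal{L}_2(\R^2)$, the set $\{\xi_n : n \in \N\} \cup \{\xi\}$ is compact, so (ii) produces continuous networks approximating $f$ uniformly on it to any accuracy; writing $f$ as a uniform limit on this set of the continuous maps $\Phi_{K,\eps}$ yields $f(\xi_n) \to f(\xi)$, i.e. continuity of $f$.

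For the hard direction (i) $\Rightarrow$ (ii), fix a continuous $f$, a compact set $K \subset \mathcal{L}_2(\R^2)$, and $\eps > 0$. The first step is a finite-dimensional reduction. By the Fr\'echet--Kolmogorov characterization of compactness in $\mathcal{L}_2$, the elements of $K$ are uniformly tight and have equicontinuous translates, so for a sufficiently large box $[-R,R]^2$ and a sufficiently fine grid I can take $P\colon \mathcal{L}_2(\R^2) \to V \cong \R^{D_\eps}$ to be the averaging projection onto piecewise-constant functions, with canonical embedding $\iota\colon V \to \mathcal{L}_2(\R^2)$, such that $\norm{\xi - \iota(P\xi)}_{2} \leq \delta$ uniformly for $\xi \in K$. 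Since $f$ is uniformly continuous on the compact set $\overline{K \cup \iota(P(K))}$, choosing $\delta$ small enough gives $\pabs{f(\xi) - g(P\xi)} \leq \eps/3$ on $K$, where $g \coloneqq f \circ \iota\colon V \to \R$ is continuous and $P(K) \subset \R^{D_\eps}$ is compact. Next, invoking the universality theorem for fully connected networks with continuous non-polynomial activation (Theorem~\ref{thm:PinkusUniversal}), there is an ordinary feedforward network $\mathcal{N}$ with $\pabs{g(v) - \act{\mathcal{N}}(v)} \leq \eps/3$ for all $v \in P(K)$.

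It then remains to realize the composition $\act{\mathcal{N}} \circ P$ by a convolutional network with downsampling; taking the discretization step of $\Phi_{K,\eps}$ to be exactly $P$, the task reduces to emulating the feedforward network $\mathcal{N}$, acting on the flat vector of grid values, using only convolutions, coordinatewise activations, and downsampling. This is the main obstacle: convolutions are translation equivariant, so a single convolutional layer applies position-independent filters and cannot by itself realize the position-dependent linear maps of $\mathcal{N}$. The remedy, which is the technical heart of the argument, is to use downsampling to break equivariance — strided/downsampling layers with suitably chosen filters progressively aggregate values from different spatial locations into separate channels, and after enough such layers the spatial extent collapses to a single location whose channels carry arbitrary prescribed linear combinations of the original grid values. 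Once the spatial dimension is collapsed, $1\times1$ convolutions act exactly as fully connected layers, so composing them with the activation reproduces the layers of $\mathcal{N}$. Verifying that convolution-plus-downsampling can encode an arbitrary weighted ``flattening'' of the grid representation, and controlling the error propagated through the coordinatewise nonlinearity, is where the real work lies. Combining the three $\eps/3$ estimates then yields $\sup_{\xi \in K}\pabs{f(\xi) - \Phi_{K,\eps}(\xi)} \leq \eps$, completing the proof.
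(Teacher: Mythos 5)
A point of comparison first: the paper itself contains no proof of this theorem --- it is a survey statement quoted from \cite[Theorem~3.2]{YarotskyCNNs} --- so your proposal can only be measured against the original argument, which it in fact mirrors closely. Your direction (ii) $\Rightarrow$ (i) is correct: each network is continuous (the discretization is a bounded linear map, everything after it acts on a finite-dimensional space), and a uniform limit of continuous functions on the compact set $\{\xi_n : n\in\N\}\cup\{\xi\}$ is continuous there, giving sequential continuity of $f$. Your reduction in (i) $\Rightarrow$ (ii) --- compactness of $K$ gives a uniformly small discretization error, uniform continuity of $f$ converts this into an $\eps/3$ bound, and Theorem~\ref{thm:PinkusUniversal} supplies a feedforward approximant of $g=f\circ\iota$ on the compact set $P(K)$ --- is also the right decomposition. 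One quantifier wrinkle you should repair: the compact set on which you invoke uniform continuity of $f$ depends on the grid, but the grid is chosen using $\delta$, which comes from that very modulus of continuity. The fix is standard: take the modulus of $f$ on one fixed compact set containing $K$ together with all $\iota(P_n(K))$ (this union is totally bounded because $\sup_{\xi\in K}\|\xi-\iota P_n\xi\|_2\to 0$), choose $\delta$ from that modulus, and only then pick the grid.

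The one place where your proposal stops short of a proof is the step you yourself flag: that convolution plus downsampling can realize the position-dependent linear ``flattening'' and then emulate the layers of $\mathcal{N}$. The idea is not wrong --- it is exactly the mechanism of the original proof --- but as written it is an assertion, not an argument. To close it, you would note: a stride-$2$ downsampling layer with width-$2$ filters (e.g.\ $(1,0)$ and $(0,1)$ per channel in each coordinate direction) performs an exact space-to-depth reshuffle, halving the spatial extent while multiplying the number of channels; iterating this collapses the grid to a single spatial position whose channels list all grid values; the coordinatewise ReLUs applied in the intermediate layers are passed through exactly via $x=\sigma(x)-\sigma(-x)$ at the cost of doubling channels; and once the spatial extent is one, $1\times 1$ convolutions are literally fully connected layers, so the layers of $\mathcal{N}$ are reproduced exactly and no fourth error term arises. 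With that construction spelled out, your argument is complete and is essentially the argument of \cite{YarotskyCNNs}.
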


A second setting considered in \cite{YarotskyCNNs} deals with approximating translation equivariant image to image mappings. Think for example of a seqmentation task where an image (e.g. of a cell) is mapped to another image (e.g. the binary segmentation mask). Translating the input image should result in a translation of the predicted segmentation mask, which means in mathematical terms that the mapping is translation equivariant. To make also the convolutional neural networks translation equivariant no downsampling is applied. 
The next theorem \cite[Theorem~3.1]{YarotskyCNNs} addresses the second setting.
\begin{theorem}[simplified] \label{thm:YarotskyEquivariant}
Let $f:\mathcal{L}_2(\R^2)\to\mathcal{L}_2(\R^2)$. Then, the following conditions are equivalent:
\begin{itemize}
    \item[(i)] The function $f$ is continuous (in the norm topology) and translation equivariant, i.e.\ $f(\xi(\cdot-\tau)) = f(\xi)(\cdot -\tau)$ for all $\tau\in\R^2$.
    \item[(ii)] For every $\eps>0$ and every compact set $K\subset\mathcal{L}_2(\R^2),$ there exists a convolutional neural network (in the above sense) $\Phi_{K,\eps}$, which approximates $f$ uniformly on $K$, i.e., \
    \[
    \sup_{\xi\in K}\norm{f(\xi)-\Phi_{K,\eps}(\xi)}_{\mathcal{L}_2}\leq\eps.\]
\end{itemize}
\end{theorem}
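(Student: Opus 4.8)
The statement is an equivalence, so I would treat the two implications separately.

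\textbf{Direction (ii) $\Rightarrow$ (i).} This is the routine direction, and I would prove it by exploiting that every convolutional network in the no-downsampling class is itself continuous and exactly translation equivariant. For continuity of $f$, since $\mathcal{L}_2(\R^2)$ is metric it suffices to check sequential continuity: given $\xi_n \to \xi$, the set $K := \{\xi\} \cup \{\xi_n : n \in \N\}$ is compact, so by (ii) $f|_K$ is a uniform limit of the continuous maps $\Phi_{K,\eps}$ and is therefore continuous, giving $f(\xi_n) \to f(\xi)$. For equivariance, fix $\xi$ and $\tau \in \R^2$, write $T_\tau \xi := \xi(\cdot - \tau)$, and apply (ii) to the two-point compact set $K := \{\xi, T_\tau\xi\}$. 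Using $\Phi_{K,\eps}(T_\tau\xi) = T_\tau \Phi_{K,\eps}(\xi)$ together with the fact that $T_\tau$ is an $\mathcal{L}_2$-isometry, the triangle inequality yields $\norm{f(T_\tau\xi) - T_\tau f(\xi)}_{\mathcal{L}_2} \leq 2\eps$; letting $\eps \searrow 0$ gives exact equivariance.

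\textbf{Direction (i) $\Rightarrow$ (ii).} This is the substantial direction, and my plan is to reduce the infinite-dimensional, continuous problem to a finite-dimensional scalar approximation and then recover equivariance by weight sharing. First I would use equivariance to collapse the operator to a single scalar object: equivariance gives, informally, $f(\xi)(x) = \Lambda(T_{-x}\xi)$ for the functional $\Lambda(\xi) = f(\xi)(0)$, so that knowing $f$ amounts to knowing one functional on $\mathcal{L}_2(\R^2)$, the remaining values being filled in by translation. Second, I would invoke compactness of $K$ through the Fr\'echet--Kolmogorov criterion, which furnishes, uniformly over $\xi \in K$, both tightness (the mass outside a large box is uniformly small) and $\mathcal{L}_2$-equicontinuity of translates. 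This lets me replace each $\xi$ by a grid discretization / local averaging that is accurate uniformly on $K$, and lets me truncate the relevant window to a bounded receptive field.

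The heart of the argument is a \emph{locality} lemma: using continuity of $f$ together with the uniform tightness and equicontinuity just obtained, the operator $f$ can be approximated on $K$, to within any prescribed $\mathcal{L}_2$ error, by an equivariant operator whose output near a point $x$ depends only on $\xi$ restricted to a bounded neighbourhood of $x$. Such a finite-receptive-field equivariant map is exactly of the form ``apply one fixed continuous function $g$ of the windowed input at every location,'' which I can approximate by a classical shallow network for the scalar map $g$ (for instance via Theorem~\ref{thm:PinkusUniversal}) and then implement \emph{convolutionally}, i.e.\ with the same weights applied at every position. The first continuous-convolution layer gathers the window, subsequent layers act coordinatewise in position, and weight sharing guarantees the resulting $\Phi_{K,\eps}$ is translation equivariant. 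Splitting the target accuracy into a truncation error, a discretization error, a locality error, and a scalar-network error, each made smaller than $\eps/4$ uniformly over $K$, would then yield (ii).

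\textbf{Main obstacle.} I expect the delicate point to be the locality lemma and the passage from the infinite-dimensional operator to a genuinely finite parametrization. Two issues intertwine: the output $f(\xi)$ lives in $\mathcal{L}_2(\R^2)$ and has no pointwise values, so both $\Lambda$ and the ``value at $x$'' must be interpreted through local averages rather than literal evaluation; and one must show that continuity in the $\mathcal{L}_2$ norm alone, with no stronger modulus available, already forces approximate decay of the receptive field \emph{uniformly} over the compact set $K$. Securing this quantitative uniformity---so that a single finite architecture serves all $\xi \in K$---is where the Fr\'echet--Kolmogorov compactness is indispensable and where the real work lies; the scalar approximation and its convolutional realization are comparatively standard.
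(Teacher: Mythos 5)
A preliminary remark: this survey does not actually prove Theorem~\ref{thm:YarotskyEquivariant}. The statement is a simplified quotation of \cite[Theorem~3.1]{YarotskyCNNs}, and the proof lives entirely in that reference, so your proposal can only be compared with the strategy of the original. At that level your skeleton is the right one: necessity is obtained there too by passing continuity and (grid-)equivariance of the networks through uniform limits on compacta, and sufficiency proceeds by discretization (justified by compactness exactly in the Fr\'echet--Kolmogorov form of tightness plus translation equicontinuity that you invoke), localization, and realization of one finite-dimensional map at every site via weight sharing. Your observation that discretization rescues the functional $\Lambda$ (point evaluation, or a local average, becomes a continuous functional) is also how the original resolves the issue you flag as the ``main obstacle.''

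Two gaps are nevertheless genuine. First, your direction (ii) $\Rightarrow$ (i) assumes the networks are \emph{exactly} $\R^2$-equivariant. Within the survey's simplified framing this is what is asserted, but for the objects the theorem is actually about it is false: ``in the above sense'' includes a discretization step $\mathcal{L}_2(\R^2)\to V\cong\R^{D_\eps}$, so $\Phi_{K,\eps}$ commutes only with translations in the corresponding grid $\lambda_\eps\Z^2$. Your two-point argument then yields equivariance only for $\tau$ in a union of lattices; one must add that these lattices become dense as $\eps\searrow 0$ and upgrade to all $\tau\in\R^2$ using the already-established continuity of $f$ together with strong continuity of translation on $\mathcal{L}_2$. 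Second, and more seriously, your error decomposition for (i) $\Rightarrow$ (ii) has no term controlling the network output \emph{far from the essential support of the input}, and this is precisely where locality plus sup-norm approximation of $\Lambda$ is insufficient. Your construction emits the value $g(\text{window around }x)$ at \emph{every} site $x$ of an unbounded domain, and the $\mathcal{L}_2$ error sums the squares of these values over infinitely many sites. At far sites the windowed input is small but nonzero; the true output is small there only because $f(K)$ is compact, hence tight; and a uniform bound $\pabs{g-\Lambda}\leq\delta$ on windows gives the divergent estimate $\sum_x\delta^2=\infty$. Indeed, if $g(0)\neq 0$ (which a generic sup-norm approximant will satisfy, since one only knows $\Lambda(0)=0$ up to error $\delta$), the constructed map does not even take values in $\mathcal{L}_2$. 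Closing this requires ingredients absent from your sketch: the observation that equivariance forces $f(0)=0$ (the only translation-invariant element of $\mathcal{L}_2(\R^2)$ is $0$), hence $\Lambda(0)=0$; tightness of the image set $f(K)$; and a modification of $g$ by a cutoff vanishing identically on windows of norm below a threshold, with threshold, window size, output region and network accuracy chosen in a rather delicate order. This far-field bookkeeping, not the scalar approximation, is where the real work of the cited proof lies.
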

We note that \cite[Theorem~3.1]{YarotskyCNNs} considers equivariances with respect to more general transformations, including rotations of the function domain. 

A result of a similar flavor as Theorem \ref{thm:YarotskyEquivariant} for functions $f:\R^d\to\R^s$ that are equivariant with respect to finite groups has been shown in \cite{petersen2018equivalence}. In this paper, it is established that every fully-connected neural network can be expressed by a convolutional neural network without pooling and with periodic padding (in order to preserve equivariance) with a comparable number of parameters and vice versa. This result can then be used to transfer approximation rates by fully-connected neural networks for a function class $\mathcal{C}$ to convolutional neural networks approximating a subclass $\mathcal{C}^{\text{equi}}$ containing equivariant functions. As an example, we consider translation equivariant H\"older functions $f\in \mathcal{C}^{\text{equi}}=C^\beta([-1/2,1/2]^d)$ with $\beta=(n,\zeta)$. By \cite[Proposition 4.3]{petersen2018equivalence}, there exist convolutional neural networks $\Phi_{f,\epsilon}$ that $\epsilon$-approximate $f$ in $\mathcal{L}_p([-1/2,1/2]^d)$ and have $M\lesssim \epsilon^{-d/(n+\zeta)}$ parameters and $L\lesssim \log_2((n+\zeta))\cdot (n+\zeta)/d$ layers. Note, that this rate coincides with the one provided by Theorem \ref{thm:PETVOIGTSMOOTH}.

The paper \cite{Zhou19} mainly derives two types of results: First, \cite[Theorem A]{Zhou19} establishes universality of deep purely convolutional networks (i.e.\ no pooling is applied) mapping from $\R^d\to \R$ for functions in $C(K),~K\subset \R^d$ compact. To be more precise, it is shown that, for every $f\in C(K),$ and every $\epsilon>0$, there exist some $L\in \N$ and some convolutional neural network $\Phi_{f,\epsilon}$ equipped with $L$ convolutional layers and the ReLU activation function, such that 
\begin{align*}
    \left\|f-\Phi_{f,\eps} \right\|_{\infty} \leq \eps.
\end{align*}
Note that contrary to most classical universality results, the depth does not remain uniformly bounded over the whole function class $C(K).$

Second, \cite[Theorem B]{Zhou19} establishes approximation \emph{rates} for Sobolev functions by convolutional neural networks, which we depict in a simplified form. 
\begin{theorem}[\cite{Zhou19}]
Let $r>d/2 + 2$. Then, for every $\eps>0$ and every $f\in F^{2}_{r,d}$, there exists a convolutional neural network $\Phi_{f,\eps}$ (without pooling layers) with depth $L\lesssim \eps^{-\frac{1}{d}}$ such that
\[
\norm{f-{\Phi_{f,\eps}}}_{\infty}\leq \eps.
\]
\end{theorem}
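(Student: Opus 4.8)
The plan is to obtain a feedforward ReLU network that approximates $f$ in the sup-norm and then to \emph{simulate} it by a purely convolutional network, paying for the simulation in depth. Since $r>d/2+2$, the Sobolev embedding $W^{r,2}([0,1]^d)\hookrightarrow C^2([0,1]^d)$ guarantees that $f$ has a genuinely continuous (indeed twice differentiable) representative whose $C^2$-norm is controlled by $\|f\|_{W^{r,2}}\le 1$, so that, up to a fixed constant factor, $f$ lies in the unit ball $F^\infty_{2,d}$. Consequently $\mathcal{L}_\infty$-approximation is meaningful and I can invoke an off-the-shelf feedforward rate: applying Theorem~\ref{thm:yarotsky} with $p=\infty$, $n=2$ and $r=0$ (or equivalently a Taylor-polynomial construction in the spirit of its proof sketch) produces a feedforward ReLU network $\Psi$ with $\|f-\act{\Psi}\|_\infty\le\eps$, whose width, depth and number of weights are bounded explicitly in terms of $\eps$.

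The core step is the conversion of $\Psi$ into a network built only from \emph{short} convolutions. Here I would deliberately \emph{not} use the equivalence of fully-connected and convolutional networks from~\cite{petersen2018equivalence}, since that construction keeps the depth logarithmic at the cost of large filters, whereas the statement to be proved lives in the opposite regime of filters of small fixed length, in which depth must carry the receptive field. The key algebraic fact is the factorization of convolutions: a finite filter $W=(W_0,\dots,W_S)$ has $z$-transform $P_W(z)=\sum_{k=0}^S W_k z^k$, and since convolution of sequences corresponds to multiplication of these polynomials, factoring $P_W$ over $\R$ into linear and irreducible quadratic pieces yields filters $w^{(1)},\dots,w^{(J)}$, each of length at most $3$, with $W=w^{(1)}\ast\cdots\ast w^{(J)}$ and $J\lesssim S$. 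Thus a single long (dense-like) linear operation is realized as a composition of $J$ short convolutional layers, and the depth grows in proportion to the total filter length. I would apply this to the affine maps of $\Psi$: the first dense matrix is absorbed into a long filter whose successive convolution outputs contain all the required inner products, the ReLU nonlinearities are carried through using the identity $x=\sigma(x)-\sigma(-x)$ together with a few extra channels, and the remaining affine maps are treated analogously, with a final linear functional (and no pooling) reading off the scalar output.

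Finally I would carry out the depth bookkeeping. Each dense layer of $\Psi$ of size $m$ contributes $\lesssim m$ short convolutional layers, so the total depth of the convolutional network is governed by the aggregate size of the affine maps of $\Psi$, which is itself dictated by the resolution needed to reach accuracy $\eps$ for a $C^2$ (Sobolev $W^{r,2}$) target. The delicate part, and the main obstacle, is precisely this count: one must select the feedforward approximant so that its weight matrices factor into a total filter length scaling like $\eps^{-1/d}$ rather than a worse polynomial in $1/\eps$, and one must simultaneously control the growth of the spatial dimension as it passes through the successive length-increasing convolutions, with no pooling available to contract it. Managing these two effects together, using the hypothesis $r>d/2+2$ to supply the pointwise control the scheme requires, is what delivers the stated bound $L\lesssim\eps^{-1/d}$.
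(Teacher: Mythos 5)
You are right about where the heart of the matter lies, and right to reject the route via \cite{petersen2018equivalence}: the factorization of a long filter's symbol $\sum_k W_k z^k$ into real factors of degree at most two, so that one long convolution becomes a stack of short convolutional layers whose number is proportional to the total filter length, is exactly the engine of the proof in \cite{Zhou19}. (Note that the paper under review contains no proof of this statement at all; it is a simplified quotation of Theorem~B of \cite{Zhou19}, whose actual form bounds the error of a depth-$J$ CNN by $\lesssim \sqrt{\log J}\, J^{-1/2-1/d}$, and the proof must be compared against that reference.)

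The genuine gap is your first step, and it cannot be repaired within your framework. Starting from Theorem~\ref{thm:yarotsky} (with $n=2$, $r=0$, $p=\infty$, via the embedding $W^{r,2}\hookrightarrow C^2$) produces a feedforward network $\Psi$ with $M(\Psi)\sim \eps^{-d/2}\log^2(1/\eps)$ weights, and your own conversion rule charges depth proportional to the aggregate filter length needed to simulate the affine maps of $\Psi$, which is at least $M(\Psi)$ (the polynomial factorization \emph{conserves} total degree, it never compresses: a degree-$S$ symbol always splits into at least $S/2$ factors). Hence your CNN has depth $\gtrsim \eps^{-d/2}$, which overshoots the claimed $\eps^{-1/d}$ for every $d\geq 2$; the ``delicate count'' you defer is not a technicality but the entire theorem. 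Worse, the defect is not in the choice of Yarotsky's construction but in the reduction to $C^2$ itself: any pipeline that uses only $\|f\|_{C^2}\lesssim 1$ must work uniformly on the $C^2$ unit ball, so the lower bound of Theorem~\ref{thm:YAROTSKYLower} gives $M\gtrsim \eps^{-d/4}$, while a fixed-filter-length CNN of depth $L$ without pooling has only $O(L^2)$ parameters; for $d\geq 3$ this forces $L\gtrsim \eps^{-d/8}\gg\eps^{-1/d}$, so no argument of your type can close the gap. The proof in \cite{Zhou19} avoids this by never passing through a deep, smoothness-based approximant: the hypothesis $r>d/2+2$ is used on the Fourier side, where Cauchy--Schwarz gives $\int |\mathcal{F}f(\omega)|\,|\omega|^2\,d\omega\lesssim \|f\|_{W^{r,2}}<\infty$, and this Barron-type moment condition yields (by the result of Klusowski and Barron on ReLU ridge approximation) a \emph{shallow} network with $N$ units and sup-norm error $\lesssim N^{-1/2-1/d}\sqrt{\log N}$. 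A shallow network is precisely the object your factorization handles efficiently: concatenating its $N$ direction vectors in $\R^d$ gives a single filter of length $\sim Nd$, hence a purely convolutional network of depth $\sim Nd$ with one ReLU stage (biases absorbing the thresholds) and a linear readout. The dimension-dependent exponent in the depth comes from this Fourier-analytic shallow rate — the structure your Sobolev-embedding step throws away — and not from any rearrangement of a deep feedforward approximant.
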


Contrary to most expressivity results considered before, the paper \cite{LossCNNsFinite} examines the ability of deep convolutional neural networks to fit a \emph{finite} set of $n$ samples $(X,Y)\in \R^{d\times n} \times \R^{s\times n}$ for some $n\in \N$. It is shown, that for such a data set there exist overparametrized convolutional neural networks (in this case the number of neurons in one of the layers is larger than the size of the data set $n$) $\Phi_{(X,Y)}$ equipped with max pooling and rather generic activation functions such that 
$$
    \sum_{i=1}^n \left|Y_i - \Phi_{(X,Y)}(X_i) \right|^2 =0.
$$

More expressivity results for convolutional neural networks can be found in~\cite{oono2018approximation}.

\subsection{Residual Neural Networks}
Driven by the observation that the performance of a neural network very much benefits from its depth, architectures with an increasing number of layers were tested. However, in~\cite{srivastava2015highway,he2015convolutional,he2016deep} the observation was made that after adding a certain number of layers test and training accuracy decreases again. The authors of~\cite{he2016deep} argue that that the drop in performance caused by adding more and more layers is not related to overfitting. Otherwise the training accuracy should still increase while only the test accuracy degrades. Knowing that deeper models should perform at least as good as shallower ones, since the additional stacked layers could learn an identity mapping, the authors conclude that the performance drop is caused by the optimization algorithm not finding a suitable weight configuration.

Motivated by those findings, a new network architecture called \emph{residual neural network (ResNet)} is proposed in~\cite{he2016deep}. It incorporates identity shortcuts such that the network only needs to learn a residual. These networks are shown to yield excellent performance (winner of several challenges) by employing a large number of layers. One of the popular models called ResNet-152 employs 152 layers, but there are also ResNets used with even more layers. To be precise, ResNets are composed of blocks of stacked layers (see Figure~\ref{fig:resnet_architecture}) having a shortcut identity connection from the input of the block to the output (Figure~\ref{fig:resnet_block}). In detail, the function implemented by such a block can be parametrized by weight matrices $\mathbf{V},\mathbf{W}\in\R^{k\times k}$, bias vector $\mathbf{b}\in \R^k$) and is given by
\[
\mathbf{z}:\R^k\to\R^k\quad\text{with}\quad \mathbf{z} (x) = \mathbf{V}\sigma(\mathbf{W}x +\mathbf{b}) + x.
\] 
Assume that those two layers are supposed to learn a function $F:\R^k\to\R^k$, then the block is forced to learn the residual $H(x)\coloneqq F(x) - x$.
The function implemented by a ResNet now results from composing residual blocks $\mathbf{z}^{[i]},~i=1,\dots,L:$
\begin{equation}\label{eq:resnet}
\text{ResNet}:\R^k\to\R^k\quad\text{with}\quad \text{ResNet}(x)=\mathbf{z}^{[L]}\circ\sigma(\mathbf{z}^{[L-1]})\circ\ldots\ldots\circ \sigma(\mathbf{z}^{[1]} x).
\end{equation}
Even if the idea behind the residual learning framework is to make the optimization process easier, we will in this paper only focus on approximation properties of the proposed architecture. 

In~\cite{hardt2016identity}, the ability of ResNets to perfectly fit a finite set of training samples is shown. The authors consider the case where the input data consists of pairs $(x_i,y_i)$ with $x_i\in\R^d$ and  $y_i\in\{e_1,\ldots,e_s\}$ for $i=1,\ldots,n$. Here, $e_j$ denotes the $j$-th unit vector in $\R^s$ such that $s$ can be interpreted as the number of classes in a classification task. The ResNet architecture considered in this paper slightly differs from the one in Equation~\eqref{eq:resnet}, in that no activation function is used after the residual blocks and some of the matrices are of different shape to allow for inputs and outputs of different dimensions. Under mild additional assumptions on the training set the following theorem is shown:
\begin{theorem}[\cite{hardt2016identity}]
    Let $n,s\in\N$ and $(x_1,y_1),\ldots, $ $(x_n, y_n)$ be a set of training samples as above. Then there exists a ResNet $\Phi$ with $M\lesssim n\log n+s^2$ weights such that 
    \[
    {\Phi}(x_i)=y_i,\quad\text{for all}\quad i=1,\ldots,n.
    \]
\end{theorem}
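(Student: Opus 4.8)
The plan is to reduce the $d$-dimensional, $s$-class interpolation problem to a one-dimensional scalar problem and then exploit the identity shortcut of the residual blocks $\mathbf{z}(x)=\mathbf{V}\sigma(\mathbf{W}x+\mathbf{b})+x$ to route the data points onto label-encoding targets. First I would compose the inputs with a generic linear functional $a\in\R^d$: the mild assumptions on the training set (essentially that the $x_i$ are distinct and in general position) guarantee that for almost every $a$ the scalars $t_i\coloneqq\langle a,x_i\rangle$ are pairwise distinct, so after relabelling we may assume $t_1<t_2<\dots<t_n$ with associated class indices $\ell_1,\dots,\ell_n\in\{1,\dots,s\}$. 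Since the architecture allows a non-square first affine map, this projection is absorbed into the input layer at a cost of $O(d)$ weights, which, treating $d$ as fixed, is of lower order than the claimed bound.

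The core is a scalar interpolation statement: build a composition of residual blocks sending each $t_i$ to a scalar $\tau_i$ that encodes its label $\ell_i$. The crucial feature of the residual form is that, with the ReLU, the increment $\mathbf{V}\sigma(\mathbf{W}x+\mathbf{b})$ can be forced to vanish outside a chosen interval; hence each block perturbs only the points in a selected range while leaving all others exactly fixed through the identity branch. I would organize these localized shifts hierarchically, bisecting the currently active groups of points at each of $O(\log n)$ levels until every $t_i$ occupies its own cell and can be moved onto $\tau_i$. Each separation step is implementable with $O(1)$ weights, and distributing them over the $O(\log n)$ levels accounts for the $n\log n$ weights of this phase.

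Finally I would append a single linear read-out that maps the scalar encoding $\tau_i$ to the one-hot label $e_{\ell_i}\in\R^s$; realizing the map sending the integer $j$ to $e_j$ needs only $O(s)$ ReLU units together with an output of dimension $s$, contributing $O(s^2)$ weights and explaining the additive $s^2$ term. Crucially, funnelling the label information through the scalar $\tau_i$ keeps this read-out independent of $n$, avoiding a naive $O(ns)$ cost. Summing the three contributions yields $M\lesssim n\log n+s^2$.

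The main obstacle is the identity-shortcut constraint itself: every block is a perturbation $x+\mathbf{V}\sigma(\mathbf{W}x+\mathbf{b})$ of the identity, so no single block can implement the global map sending $n$ scattered scalars to $s$ clustered label positions, and one must verify that a composition of near-identity maps realizes this far-from-identity interpolant. The delicate points are (i) choosing the thresholds in $\mathbf{b}$ and the directions in $\mathbf{W}$ so that the supports of successive increments nest correctly and never disturb already-separated points, and (ii) confirming that the hierarchical separation closes with the claimed $O(n\log n)$ total — in particular whether the $\log n$ overhead is genuinely required or is an artifact of the scheme — which is precisely where the weight accounting, rather than the mere existence of the maps, must be carried out with care.
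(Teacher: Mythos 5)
This survey states the theorem as a citation of Hardt--Ma and contains no proof of it, but your proposal follows essentially the same route as the original construction in the cited source: a generic linear functional producing distinct scalars $t_i$, a sequence of residual blocks whose ReLU increments are supported on chosen intervals (so the identity shortcut leaves all other points exactly fixed) arranged in a divide-and-conquer fashion over $O(\log n)$ stages to move each point to a label code, and a final readout costing $O(s^2)$ weights that sends the code $j$ to the one-hot vector $e_j$. On the one point you flag as delicate, the accounting resolves in your favor: a single scalar residual block $x\mapsto x+\sum_{k}c_k\sigma(x-b_k)$ of width $O(n)$ already interpolates any $n$ pairs $(t_i,\tau_i)$ exactly (the increment $f(x)-x$ of the desired piecewise-linear interpolant vanishes to the left of $t_1$ and is hence expressible in this form), so the $\log n$ factor is slack of the hierarchical scheme rather than a necessity, and either variant stays within the claimed bound $M\lesssim n\log n+s^2$.
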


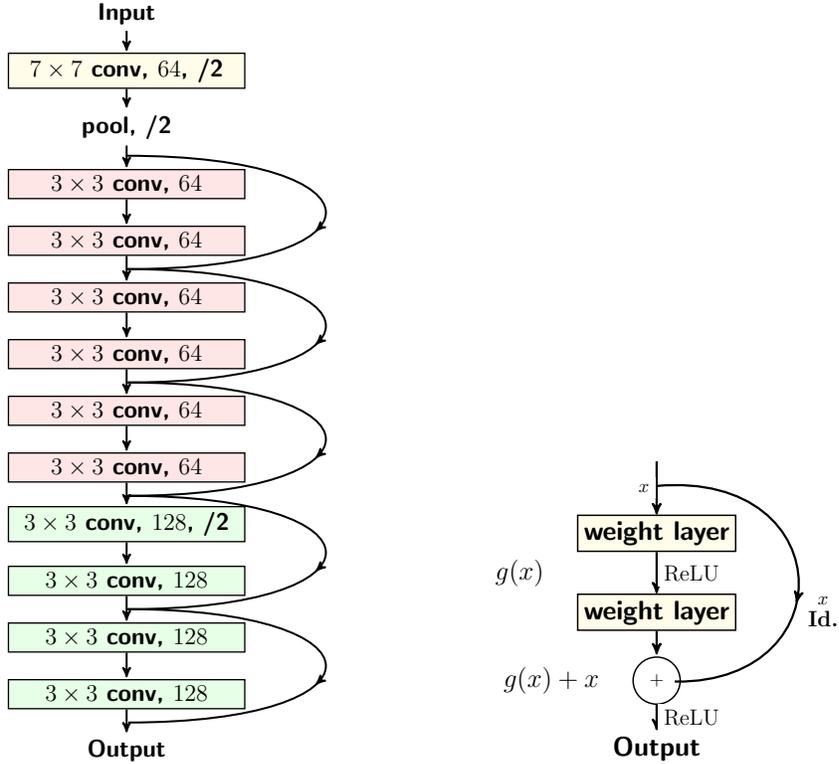
\begin{figure}[ht]
\centering
    \begin{subfigure}[b]{0.35\textwidth}
        \centering
        \resizebox{.95\linewidth}{!}{\tikzset{reg_conv_node/.style={rectangle,fill=red!10,draw,font=\sffamily\Large\bfseries,minimum width=50mm}}
\tikzset{large_conv_node/.append style={rectangle,fill=yellow!10,draw,font=\sffamily\Large\bfseries,minimum width=50mm}}
\tikzset{reg_large_conv_node/.append style={rectangle,fill=green!10,draw,font=\sffamily\Large\bfseries,minimum width=50mm}}
\tikzset{pool_node/.append style={draw=none,font=\sffamily\Large\bfseries,minimum width = 50mm}}
\begin{tikzpicture}[thick,>=stealth',shorten >=1pt,auto,node distance=12mm]
    \node[pool_node] (Input) {Input};
    \node[large_conv_node] (Entry) [below of=Input] {$7 \times 7$ conv, $64$, /2};
    \node[pool_node] (pool) [below of=Entry] {pool, /2};
    \node[reg_conv_node] (P1) [below of=pool] {$3 \times 3$ conv, $64$};
    \node[reg_conv_node] (P2) [below of=P1] {$3 \times 3$ conv, $64$};
    \node[reg_conv_node] (P3) [below of=P2] {$3 \times 3$ conv, $64$};
    \node[reg_conv_node] (P4) [below of=P3] {$3 \times 3$ conv, $64$};
    \node[reg_conv_node] (P5) [below of=P4] {$3 \times 3$ conv, $64$};
    \node[reg_conv_node] (P6) [below of=P5] {$3 \times 3$ conv, $64$};
    \node[reg_large_conv_node] (P7) [below of=P6] {$3 \times 3$ conv, $128$, /2};
    \node[reg_large_conv_node] (P8) [below of=P7] {$3 \times 3$ conv, $128$};
    \node[reg_large_conv_node] (P9) [below of=P8] {$3 \times 3$ conv, $128$};
    \node[reg_large_conv_node] (P10) [below of=P9] {$3 \times 3$ conv, $128$};
    \node[pool_node] (Output) [below of=P10] {Output};

\begin{scope}[very thick,decoration={
    markings,
    mark=at position 0.55 with {\arrow{>}}}
    ] 
\draw [postaction={decorate}] 
    ($ (pool) !.5! (P1) $) to[out=0,in=0,looseness=6] ($ (P2) !.5! (P3) $);
\draw [postaction={decorate}] 
    ($ (P2) !.5! (P3) $) to[out=0,in=0,looseness=6] ($ (P4) !.5! (P5) $);
\draw [postaction={decorate}] 
    ($ (P4) !.5! (P5) $) to[out=0,in=0,looseness=6] ($ (P6) !.5! (P7) $);
%\draw [postaction={decorate},dashed] 
\draw [postaction={decorate}] 
    ($ (P6) !.5! (P7) $) to[out=0,in=0,looseness=6] ($ (P8) !.5! (P9) $);
\draw [postaction={decorate}] 
    ($ (P8) !.5! (P9) $) to[out=0,in=0,looseness=6] ($ (P10) !.5! (Output) $)
;

  \path[every node/.style={font=\sffamily\small,
  		fill=white,inner sep=1pt},
  		every edge/.append style={very thick,->,decoration={markings, mark=at position 1 with {\arrow{->}}}}
  		]
  	% Right-hand-side arrows rendered from top to bottom to
  	% achieve proper rendering of labels over arrows.
  	(Input) edge (Entry)
  	(Entry) edge (pool)
  	(pool) edge (P1)
  	(P1) edge (P2)
  	(P2) edge (P3)
  	(P3) edge (P4)
  	(P4) edge (P5)
  	(P5) edge (P6)
  	(P6) edge (P7)
  	(P7) edge (P8)
  	(P8) edge (P9)
  	(P9) edge (P10)
  	(P10) edge (Output)
  	;
\end{scope}

\end{tikzpicture}}
        \caption{Stacked block architecture of a convolutional neural network of ResNet type}
        \label{fig:resnet_architecture}
    \end{subfigure}
   \hspace{.5cm}
    \begin{subfigure}[b]{0.3\textwidth}
        \vspace{-1cm}
        \centering
        \scalebox{0.7}{\tikzset{plus/.style={circle,draw,fill=white,font=\sffamily\Large\bfseries,minimum size = 1mm}}
\tikzset{weight/.append style={rectangle,fill=yellow!10,draw,font=\sffamily\Large\bfseries}}
\tikzset{input/.append style={draw=none,font=\sffamily\Large\bfseries}}
\tikzset{pool_node/.append style={draw=none,font=\sffamily\Large\bfseries,minimum width = 50mm}}

\begin{tikzpicture}[thick,>=stealth',shorten >=1pt,auto,node distance=15mm]
    \node[input] (Input) {};
    \node[weight] (Entry) [below of=Input] {weight layer};
    \node[weight] (pool) [below of=Entry] {weight layer};
    \node[plus] (P1) [below of=pool, node distance =13mm,label={[label distance = 5mm]180:\Large $g(x)+x$}] {+};
    
    \node[pool_node] (Output) [below of=P1, node distance =13mm] {Output};

\begin{scope}[very thick,decoration={
    markings,
    mark=at position 0.55 with {\arrow{>}}}
    ] 
\draw [postaction={decorate}] 
    ($ (Input) !.4! (Entry) $) to[out=5,in=-5,looseness=2.5] (P1.center);

  \path[every node/.style={font=\sffamily\small,
  		fill=white,inner sep=1pt},
  		every edge/.append style={very thick,->,decoration={markings, mark=at position 1 with {\arrow{->}}}}
  		]
  	% Right-hand-side arrows rendered from top to bottom to
  	% achieve proper rendering of labels over arrows.
  	(pool) edge (P1);
 \end{scope}
 \begin{scope}[very thick,decoration={
    markings,
    mark=at position 1 with {\arrow{>}}}
    ] 
  	\draw [postaction={decorate}]  (Input) -- (Entry) node [midway,left, fill=white] {$\large x$};
  	\draw [postaction={decorate}]  (Entry) -- (pool) node [midway, fill=white,label={[label distance = 20mm]180:\Large \text{$\Large {g(x)}$}},label={[label distance = 12mm]-5:\begin{tabular}{c}
  	     $\large x$  \\
  	      \large \textbf{Id.}
  	\end{tabular}}] {\large ReLU};
  	\draw [postaction={decorate}]  (P1) -- (Output) node [midway, fill=white] {\large ReLU};
  	\node[circle,fill=white] at (P1.center){+} ;
\end{scope}

\end{tikzpicture}}
        \caption{Building block of ResNet with identity shortcut connection.}
        \label{step2er}
        \label{fig:resnet_block}
    \end{subfigure}
    \caption{Visualizations of CNNs and ResNet}
\end{figure}

In~\cite{lin2018resnet} it is shown that ResNets are universal approximators for functions from $\mathcal{L}_1(\R^d)$. This is of particular interest, since the width of the ResNets considered in this paper is bounded by the input dimension $d$ and it is shown in~\cite{HaninFiniteWidth,WidthNIPS} that standard feedforward neural networks with width bounded by $d$ are not universal approximators (see also Subsection~\ref{sec:Universal}). The idea of the proof relies on a reapproximation of step functions (which are dense in $\mathcal{L}_1$).

Instead of forcing skip connections to encode an identity function, one can also consider more general skip connections linking earlier layers to deeper layers with connections that are learned in the training process. Such networks have, for instance, \ been considered in~\cite{YAROTSKY2017103,guhring2019error}. Slightly better approximation rates for those architectures in contrast to the standard feedforward case are shown for functions from $F^p_{n,d}$. In more detail, when allowing skip connections, the upper approximation bounds shown in Theorem~\ref{thm:yarotsky} can be improved by dropping the square of the log terms.

\subsection{Recurrent Neural Networks}
Until now we only considered feedforward network architectures that were able to deal with a fixed input and output dimension. However, in many applications it is desirable to use a neural network that can handle sequential data with varying length as input/output. This is for example often the case for natural language processing tasks where one might be interested in predicting the topic of a sentence or text (varying input dimension, fixed output) or in producing an image caption for a given image (fixed input dimension, varying output dimension). 

Recurrent neural networks are specialized for those tasks. In its vanilla form, a recurrent neural network computes a hidden state $h_t = f_{\theta}(h_{t-1}, x_t)$ from the current input $x_t$ and, using a \emph{recurrent} connection, the previous hidden state $h_{t-1}$. The hidden state $h_t$ can then in a second step be used to compute an output $y_t=g_{\theta}(h_t)$ and enables the network to memorize features from previous inputs until time $t$, such that the output $y_t$ depends on $x_0,\ldots,x_t$ (see Figure~\ref{fig:recurrent neural network}). It it important to note that the same functions $f_\theta$ and $g_\theta$, parametrized by the weight vector $\theta$, are used in each time step. For an in-depth treatment of recurrent neural networks we refer to~\cite[Chapter 10]{Goodfellow}. 

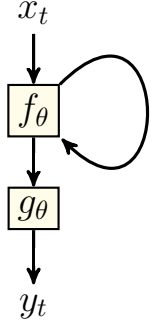
\begin{figure}[ht]
\centering
    %\tikzset{plus/.style={circle,draw,fill=white,font=\sffamily\Large\bfseries,minimum size = 1mm}}
\tikzset{weight/.append style={rectangle,fill=yellow!10,draw,font=\sffamily\Large\bfseries}}
\tikzset{input/.append style={draw=none,font=\sffamily\Large\bfseries}}
\tikzset{pool_node/.append style={draw=none,font=\sffamily\Large\bfseries,minimum width = 50mm}}

\begin{tikzpicture}[thick,>=stealth',shorten >=1pt,auto,node distance=13mm]
    \node[input] (Input){$x_t$};
    \node[weight] (Entry) [below of=Input] {$f_\theta$};
    \node[weight] (pool) [below of=Entry] {$g_\theta$};
    %\node[plus] (P1) [below of=pool, node distance =13mm,label={[label distance = 5mm]180: }] {+};
    \node[pool_node] (Output) [below of=pool, node distance =13mm] {$y_t$};

%\begin{scope}[very thick,decoration={
%    markings,
%    mark=at position 0.55 with {\arrow{>}}}
 %   ] 
%\draw [postaction={decorate}] 
%    ($ (Input) !.4! (Entry) $) %to[out=5,in=-5,looseness=2.5] (P1.center);

%  \path[every node/.style={font=\sffamily\small,
%  		fill=white,inner sep=1pt},
%  		every edge/.append style={very thick,->,decoration={markings, mark=at position 1 with {\arrow{->}}}}
%  		]
  	% Right-hand-side arrows rendered from top to bottom to
  	% achieve proper rendering of labels over arrows.
%  	(pool) edge (Output);
% \end{scope}
 \begin{scope}[very thick,decoration={
    markings,
    mark=at position 1 with {\arrow{>}}}
    ] 
  	\draw [postaction={decorate}]  (Input) -- (Entry) node [midway,left, fill=white] {};
  	\path[very thick] (Entry) edge [out=45, in=315, loop]
 (Entry);
  	\draw [postaction={decorate}]  (Entry) -- (pool) node [midway, fill=white,label={[label distance = 20mm]180:\Large \text{}},label={[label distance = 12mm]-5:\begin{tabular}{c}
  	      \\
  	     \large 
  	\end{tabular}}] {\large };
  	\draw [postaction={decorate}]  (pool) -- (Output) node [midway, fill=white] {};
  %	\node[circle,fill=white] at (P1.center){+} ;
\end{scope}
\end{tikzpicture}
        \caption{Visualization of the architecture of a recurrent neural network}
        \label{fig:recurrent neural network}
\end{figure}

Recurrent neural networks can be viewed as dynamical systems \cite{sontag1992neural}. Hence, it is natural to ask if the universal approximation capacity of feedforward neural networks can be transferred to recurrent neural networks with respect to dynamical systems. In~\cite{sontag1992neural}, discrete-time dynamical systems of the form 
\begin{align*}
    h_{t+1}&= f(h_t, x_t),\\
    y_t &= g(h_t),
\end{align*}
where $f:\R^n\times\R^m\to\R^n, g:\R^n\to\R^p,$ and $x_t\in\R^n, h_t\in\R^m, y_t\in\R^p,$ are considered. We call the tuple $\Sigma = (n,m,p,f,g)$ a \emph{time-discrete dynamical system} and say that $n$ is the \emph{dimension of the hidden state} of $\Sigma$, $m$ its \emph{input dimension} and $p$ its \emph{output dimension}. Moreover, we call $x_0,\ldots,x_T$ the \emph{input of the system until time $T\in\N$} and $h_0$ the \emph{initial state}. With these tools at hand we can now formally define a shallow sequence-to-sequence recurrent neural network.

\begin{definition}\label{def:ShallowSequence}
Let $\Sigma = (n,m,p,f,g)$ be a dynamical system and $\sigma:\R\to\R$. Then $\Sigma$ is a \emph{shallow sequence-to-sequence recurrent neural network with activation function $\sigma$}, if, for matrices $\mathbf{A}\in\R^{n \times n}, \mathbf{B}\in\R^{m\times n}, \mathbf{C}\in\R^{ n\times p}$, the functions $f$ and $g$ are of the form
\[
 f(h,x)=\sigma(\mathbf{A}h+\mathbf{B}x)\quad\text{and}\quad g(x)=\mathbf{C}x.
\]
\end{definition}

It is then shown in \cite{sontag1992neural}, that recurrent neural networks of the form described above are able to approximate time-discrete dynamical systems (with only weak assumptions on $f$ and $g$) arbitrarily well. To formalize this claim, we first describe how the closeness of two dynamical systems is measured by the authors. For this, we introduce the following definition from the paper \cite{sontag1992neural}:
\begin{definition}
Let $\Sigma=(n,m,p,f,g)$ and $\tilde\Sigma=(\tilde n,m,p,\tilde f,\tilde g)$ be two discrete-time dynamical systems as introduced in Definition \ref{def:ShallowSequence}. Furthermore, let $K_1\subset\R^n, K_2\subset\R^m$ be compact, $T\in\N,$ and $\eps>0$. Then \emph{$\tilde \Sigma$ approximates $\Sigma$ with accuracy $\eps$ on $K_1$ and $K_2$ in time $T$,} if the following holds:

There exist continuous functions $\alpha:\R^{\tilde n}\to\R^n$ and $\beta:\R^n\to\R^{\tilde n}$ such that, for all initial states $h_0\in K_1$ and $\beta(h_0)$ for $\Sigma$ and $\tilde \Sigma$, respectively, and all inputs $x_0,\ldots,x_T\in K_2$ for $\Sigma$ and $\tilde \Sigma,$ we have
\[
\norm{h_t - \alpha(\tilde h_t)}_2<\eps\quad\text{for}\quad t=0,\ldots,T
\]
and
\[
\norm{y_t - \tilde y_t}_2<\eps\quad\text{for}\quad t=1,\ldots, T.
\]
\end{definition}

Note that while input and output dimension of $\Sigma$ and $\tilde \Sigma$ coincide, the dimension of the hidden states might differ.
Based on the universal approximation theorem for shallow feedforward neural networks (see Subsection~\ref{subsec:ShallowUniversal}), a universality result for recurrent neural networks with respect to dynamical systems has been derived. 

\begin{theorem}[\cite{sontag1992neural}]\label{thm:sontag_recurrent neural network}
Let $\sigma:\R\to\R$ be an activation function for that the universal approximation theorem holds. Let $\Sigma=(n,m,p,f,g)$ be a discrete-time dynamical system with $f,g$ continuously differentiable. Furthermore, let $K_1\subset\R^n, K_2\subset\R^m$ be compact and $T\in\N$. Then, for each $\eps>0,$ there exist $\tilde n \in\N$ and matrices $\mathbf{A}\in\R^{\tilde n \times \tilde n}, \mathbf{B}\in\R^{m\times\tilde n}, \mathbf{C}\in\R^{\tilde n\times p}$ such that the shallow recurrent neural network $\tilde \Sigma =(\tilde n,m,p,\tilde f,\tilde g)$ with activation function $\sigma$ and 
\[
\tilde f(h,x)=\sigma(\mathbf{A}h+\mathbf{B}x)\quad\text{and}\quad\tilde g(x)=\mathbf{C}x,
\]
approximates $\Sigma$ with accuracy $\eps$ on $K_1$ and $K_2$ in time $T$.
\end{theorem}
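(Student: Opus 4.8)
The plan is to reduce the statement to the universal approximation theorem for shallow feedforward networks (Subsection~\ref{subsec:ShallowUniversal}) by embedding the (generically two-layer) shallow approximants of $f$ and $g$ into the restricted recurrent update $\tilde f(h,x)=\sigma(\mathbf{A}h+\mathbf{B}x)$ via a lifting of the hidden state. Before doing so, I would first observe that the set of states reachable within time $T$,
\[
\mathcal{R}\coloneqq\{h_t:~h_0\in K_1,~x_0,\dots,x_{T-1}\in K_2,~0\le t\le T\},
\]
is contained in a compact set $K_1'\subset\R^n$: since $f$ is continuous and $K_1,K_2$ are compact, iterating $f$ only finitely many times keeps every trajectory in a fixed bounded set. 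On $K_1'\times K_2$ the $C^1$-maps $f$ and $g$ are Lipschitz, with constants $L_f,L_g$, and these constants will drive the error analysis.

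Next I would build a \emph{single} shallow hidden layer serving both $f$ and $g$ at once. Applying the universal approximation theorem to the map $(h,x)\mapsto(f(h,x),g(f(h,x)))$ on $K_1'\times K_2$ yields a width $N$, matrices $\mathbf{U},\mathbf{V}$ and a bias $\mathbf{b}$ so that the features $\xi(h,x)\coloneqq\sigma(\mathbf{U}h+\mathbf{V}x+\mathbf{b})\in\R^N$ admit \emph{linear} read-outs $\mathbf{W},\mathbf{C}$ with $\|\mathbf{W}\xi(h,x)-f(h,x)\|\le\delta$ and $\|\mathbf{C}\xi(h,x)-g(f(h,x))\|\le\delta$ uniformly, for a tolerance $\delta$ fixed later. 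I set $\tilde n\coloneqq N$ and let the hidden state track these features, i.e.\ I aim for $\tilde h_t\approx\xi(h_{t-1},x_{t-1})$. The decoder is the linear map $\alpha\coloneqq\mathbf{W}$, so that $\alpha(\tilde h_t)\approx f(h_{t-1},x_{t-1})=h_t$, and the output read-out of Definition~\ref{def:ShallowSequence} is exactly $\mathbf{C}$, giving $\tilde y_t=\mathbf{C}\tilde h_t\approx g(h_t)=y_t$ for $t\ge1$. Crucially, the recurrent update reproduces the next feature vector: since $\mathbf{W}\tilde h_t\approx h_t$, choosing $\mathbf{A}\coloneqq\mathbf{U}\mathbf{W}$ and $\mathbf{B}\coloneqq\mathbf{V}$ gives
\[
\sigma(\mathbf{A}\tilde h_t+\mathbf{B}x_t)=\sigma(\mathbf{U}\mathbf{W}\tilde h_t+\mathbf{V}x_t)\approx\sigma(\mathbf{U}h_t+\mathbf{V}x_t+\mathbf{b})=\xi(h_t,x_t),
\]
up to the missing bias $\mathbf{b}$, which is supplied by augmenting the state with one coordinate carrying an (approximately) constant value from which $\mathbf{A}$ reads off $\mathbf{b}$. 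The encoder $\beta$ is chosen so that $\alpha(\beta(h_0))=h_0$, e.g.\ as a continuous right inverse of $\mathbf{W}$ on $K_1$, which fixes the $t=0$ state error.

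The remaining work is quantitative. Writing $e_t\coloneqq\|h_t-\alpha(\tilde h_t)\|$, I would show by induction that each step contributes the approximation error $\delta$ plus an amplification of $e_t$ governed by $\mathbf{U},\mathbf{W}$ and the local Lipschitz modulus of $\sigma$ on the relevant compact set, i.e.\ an estimate $e_{t+1}\le c_1\delta+c_2 e_t$ with constants depending only on $\Sigma,\sigma,K_1',K_2$. A discrete Grönwall summation over the $T$ steps gives $e_T\lesssim\delta\cdot(c_2^{T}-1)/(c_2-1)$, and choosing $\delta$ small (depending on $T,c_1,c_2$) forces $e_t<\eps$ for all $t\le T$; the output bound $\|y_t-\tilde y_t\|<\eps$ then follows from the $\mathbf{C}$-read-out estimate together with $e_t$ and $L_g$. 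One also checks that for $\delta$ small all arguments remain in $K_1'\times K_2$, so the uniform approximations stay valid along the whole trajectory.

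The main obstacle is conceptual rather than computational: the recurrent transition is locked into the single, bias-free, output-weight-free form $\sigma(\mathbf{A}h+\mathbf{B}x)$, which on its own is \emph{not} a universal approximator of maps $\R^n\times\R^m\to\R^n$. The heart of the proof is the lifting that pushes the shallow network's output weight into the decoder $\alpha$ and into the composition $\mathbf{A}=\mathbf{U}\mathbf{W}$ across consecutive time steps, so that the required two-layer structure $\mathbf{W}\sigma(\mathbf{U}\cdot+\mathbf{b})$ is recreated over one step of the recursion; sharing one hidden layer between $f$ and $g\circ f$ makes the purely linear read-out $\mathbf{C}$ suffice for the nonlinear output map. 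Maintaining the constant coordinate that emulates the bias for a general $\sigma$, and controlling the compounding of per-step errors over the horizon $T$, are the two technical hurdles.
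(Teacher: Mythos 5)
Your strategy---reduce to the shallow universal approximation theorem on a compact neighborhood $K_1'$ of the finite-time reachable set, let the recurrent state carry the hidden-layer features $\xi(h_{t-1},x_{t-1})=\sigma(\mathbf{U}h_{t-1}+\mathbf{V}x_{t-1}+\mathbf{b})$, push the feedforward read-out matrix into the next transition via $\mathbf{A}=\mathbf{U}\mathbf{W}$, share one hidden layer between $f$ and $g\circ f$ so that a purely linear output map $\mathbf{C}$ suffices (which also makes the timing $\tilde y_t=\mathbf{C}\tilde h_t\approx g(h_t)$ come out right for $t\geq 1$), and close with the induction $e_{t+1}\le \delta+L_f e_t$ over the horizon $T$---is exactly the route the paper indicates: the paper itself gives no proof, citing \cite{sontag1992neural}, but states that the result is ``based on the universal approximation theorem for shallow feedforward neural networks,'' and your lifting construction is the standard way this is carried out. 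One of the two ``technical hurdles'' you flag is in fact not a hurdle: the bias coordinate can be kept \emph{exactly} constant, not approximately. Since $\sigma$ is continuous and not identically zero, there is $s\neq 0$ with $\sigma(s)\neq 0$; give one state coordinate the self-weight $s/\sigma(s)$, zero weights to all other coordinates and to the input, and initial value $\sigma(s)$. That value is then a fixed point of this coordinate's update, and every other row of $\mathbf{A}$ reads its bias off it.

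The step that can genuinely fail as written is the encoder: you take $\beta$ to be ``a continuous right inverse of $\mathbf{W}$ on $K_1$.'' The universal approximation theorem gives you no control over $\mathrm{range}(\mathbf{W})$; if $\mathrm{rank}(\mathbf{W})<n$ and $K_1\not\subset\mathrm{range}(\mathbf{W})$ (nothing in the construction prevents this), then no map $\beta$ with $\mathbf{W}\beta(h_0)\approx h_0$ exists, and both the $t=0$ decoding requirement and your first transition $\sigma(\mathbf{U}\mathbf{W}\tilde h_0+\mathbf{V}x_0+\mathbf{b})\approx\xi(h_0,x_0)$ collapse. The repair stays entirely inside your framework: enlarge the hidden layer by $2n$ coordinates that are held constant in time (maintained exactly as the bias coordinate above, at values $\sigma(b_i)\neq 0$), and give the $i$-th pair of them the output-weight columns $+\eta e_i$ and $-\eta e_i$ in the augmented read-out matrix $\mathbf{W}'$. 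For $t\ge 1$ the two members of each pair carry equal values, so their contributions to $\mathbf{W}'\tilde h_t$ cancel and nothing in your error analysis changes; but $\mathbf{W}'$ is now surjective, so a continuous (indeed affine) $\beta$ with $\mathbf{W}'\beta(h_0)=h_0$ exists---for instance by encoding $h_0$ into the difference of the initial values of each pair, which the dynamics overwrite with the constant $\sigma(b_i)$ after one step. With this patch (and the exact constant coordinate) your Gr\"onwall induction closes and the proof is complete, matching the cited construction.
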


Similar results also based on the universality of shallow feedforward neural networks have been shown in~\cite{RnnsUniversalApproximators} and in~\cite{doya1993universality}. In~\cite{doya1993universality} and~\cite{polycarpou1991identification}, in addition the case of continuous dynamical systems is included.

Discrete-time fading-memory systems are dynamical systems that asymptotically ``forget'' inputs that were fed into the system earlier in time. In \cite{Matthews1993} the universality of recurrent neural networks for these types of systems was shown.

The above results cover the case of recurrent neural networks realizing sequence-to-sequence mappings. As mentioned in the introduction of this section, recurrent neural networks are also used in practice to map sequences of varying length to scalar outputs, thus (for $\R^m$-valued sequences) realizing functions mapping from $\bigcup_{i\geq 0} (\R^m)^i \to \R$. In~\cite{hammer2000approximation}, the capacity of such recurrent neural networks to approximate arbitrary measurable functions $f:\bigcup_{i\geq 0} (\R^m)^i \to \R$ with high probability is shown. Note that in contrast to approximation results for time-discrete dynamical systems, where the system to be approximated naturally displays a recursive structure (as does the recurrent neural network), no such assumption is used. It is furthermore shown that recurrent neural networks are not capable of universally approximating such functions with respect to the sup norm. 

For the rest of this section we denote by $(\R^m)^*\coloneqq\bigcup_{i\geq 0} (\R^m)^i$ the set of finite sequences with elements in $\R^m$. To present the aforementioned results in more detail, we start with a formal introduction of the type of recurrent neural network considered in the paper \cite[Def.~1]{hammer2000approximation}. 

\begin{definition}
Any function $f:\R^n\times\R^m\to\R^n$ and initial state $h_0\in\R^n$ recursively induce a mapping $\tilde f_{h_0}:(\R^m)^*\to\R^n$ as follows: 
\begin{equation*}
    \tilde f_{h_0}([x_1,\ldots,x_i])=\begin{cases}
    h_0,\quad\text{if }i=0\\
    f(\tilde f_{h_0}([x_1,\ldots, x_{i-1}]), x_i),\quad\text{else}.
    \end{cases}
\end{equation*}
A \emph{shallow sequence-to-vector recurrent neural network with initial state $h_0\in\R^n$ and activation function $\sigma:\R\to\R$} is a mapping from $(\R^m)^*$ to $\R^p$ of the form 
\[
x\mapsto \mathbf{C}\cdot \tilde f_{h_0}(x).
\]
Here, $\mathbf{C}\in\R^{n\times p}$ is a matrix and $f$ is the realization of a shallow feedforward neural network with no hidden layer and nonlinearity $\sigma$ (applied coordinate-wise) in the output layer, i.e. $f:\R^n\times\R^m\to\R^n$ is of the form 
\[
f(h,x)=\sigma(\mathbf{A}h+\mathbf{B}x)\quad\text{for}\quad h\in\R^n, x\in\R^m,
\]
where $\mathbf{A}\in\R^{n\times n}$ and $\mathbf{B}\in\R^{m\times n}$.
\end{definition}

In more simple terms, a sequence to vector recurrent neural network is a sequence-to-sequence recurrent neural network, where only the last output is used.

Next, we need the definition of approximation in probability from~\cite{hammer2000approximation}.
\begin{definition}
Let $\mathbb{P}$ be a probability measure on $(\R^m)^*$ and $f_1,f_2:(\R^m)^*\to\R^p$ be two measurable functions. Then $f_1$ \emph{approximates} $f_2$ \emph{with accuracy $\eps>0$ and probability} $\delta>0$ if 
\[
\mathbb{P}(\pabs{f_1-f_2}>\eps)<\delta.
\]
\end{definition}

The next theorem establishes the universal approximation capacity in probability of recurrent neural networks for real valued functions with real valued sequences of arbitrary length as input. 
\begin{theorem}[\cite{hammer2000approximation}]\label{thm:hammer_recurrent neural network}
Let $\sigma:\R\to\R$ be an activation function for that the universal approximation theorem holds and $\eps,\delta>0$. Then every measurable function $f:(\R^m)^*\to\R^n$ can be approximated with accuracy $\eps$ and probability $\delta$ by a shallow recurrent neural network.
\end{theorem}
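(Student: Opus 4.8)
The plan is to use the probability measure $\mathbb{P}$ to reduce the claim to the uniform approximation of $f$ on a \emph{good set} of sequences that have bounded length and bounded entries, and then to realize that approximation by splitting the hidden state into a \emph{memory} block, which folds the input sequence into a faithful bounded encoding, and an \emph{output} block, whose final application together with the linear readout $\mathbf{C}$ forms an ordinary one-hidden-layer network to which the universal approximation theorem applies. For the reduction, note that $(\R^m)^*=\bigcup_{i\ge 0}(\R^m)^i$ is a disjoint union, so since $\mathbb{P}$ is a probability measure there is $N\in\N$ with $\mathbb{P}\big(\bigcup_{i>N}(\R^m)^i\big)<\delta/2$. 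On each of the finitely many remaining pieces $(\R^m)^i\cong\R^{mi}$ with $i\le N$, inner regularity of the finite restricted measure yields a compact set; being bounded, it may be assumed to have all entries in a common cube $[-B,B]^m$. This produces a good set $G\subset(\R^m)^*$ of sequences of length $\le N$ with entries in $[-B,B]^m$ and $\mathbb{P}(G)>1-\delta/2$, and it suffices to produce a recurrent network whose output differs from $f$ by more than $\eps$ only on a subset of $G$ of $\mathbb{P}$-measure below $\delta/2$.

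Next I would construct the recurrent encoding. I allocate $mN+1$ \emph{memory} neurons, organized as $N$ slots of width $m$ together with a counter, and choose the weights $\mathbf{A},\mathbf{B}$ so that the memory update implements a shift register: shift the stored slots, write the current input $x_i$ into the first slot, and increment the counter. This encodes every sequence of length $\le N$ with bounded entries \emph{injectively} into a bounded set $E\subset\R^{mN+1}$, with a continuous decoding map. Crucially, the target of this update is the coordinate shift, which is linear, so it can be reproduced up to arbitrarily small error by the componentwise $\sigma$-map, because $\sigma$ (for which the universal approximation theorem holds) approximates the identity on a bounded interval after affine rescaling — and for the ReLU its positive branch is \emph{exactly} linear. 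I route these rows so that the memory block reads only the memory block and $x_i$, never the output block, so the encoding evolves independently of the output neurons.

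I would then handle the output block and readout. The output neurons at each step compute $\sigma$ of an affine function of the memory block of $h_{i-1}$ and of $x_i$ only, and the linear readout $\mathbf{C}$ acts solely on them. Since the network is of sequence-to-vector type, only the final step contributes, and there the composite map $(\text{memory}_{i-1},x_i)\mapsto \mathbf{C}\,\sigma(\mathbf{A}'\,\text{memory}_{i-1}+\mathbf{B}'x_i)$ is exactly a one-hidden-layer $\sigma$-network in the encoding variables, which jointly determine the full sequence. Writing $f=\Psi\circ(\text{decode})$ on $G$, the function $\Psi$ is measurable on the compact set $E\times[-B,B]^m$; by Lusin's theorem there is a continuous $\tilde\Psi$ agreeing with $\Psi$ off a set of arbitrarily small measure, and the universal approximation theorem makes the output block plus readout approximate $\tilde\Psi$ uniformly. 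Collecting the discarded tail and the exceptional Lusin set (both of small $\mathbb{P}$-measure) together with the uniform encoding and network errors on the remainder yields the stated approximation in probability.

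The main obstacle is the recurrent encoding step: the universal approximation theorem concerns a single feedforward map, whereas here one fixed $\sigma$-map is applied up to $N$ times in succession, so I must ensure that the per-step error of realizing the shift by the nonlinearity does not accumulate beyond a controllable bound. Because the exact target of the recurrence is a coordinate shift — an isometry in the sup-norm — the amplification of errors across the $\le N$ iterations is benign and the total encoding error can be kept below any prescribed threshold; securing this control, rather than the otherwise-standard measurability argument, is the heart of lifting a static approximation result to the recurrent setting.
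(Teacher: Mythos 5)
Your overall skeleton is the same as the paper's: both arguments use the tail bound $\mathbb{P}\big(\bigcup_{i>N}(\R^m)^i\big)\to 0$ to reduce to sequences of bounded length, and both then pass through a dynamical system that simply memorizes the (bounded-length, bounded-entry) input and applies a readout at the last step. The difference is that the paper then quotes Theorem~\ref{thm:sontag_recurrent neural network} to convert that dynamical system into a recurrent network, whereas you build the network by hand (shift register plus a final one-hidden-layer readout, with inner regularity and Lusin's theorem supplying compactness and a continuous readout --- a point the paper's sketch glosses over, and which you handle more carefully). It is exactly in the hand-construction, however, that a genuine gap appears.

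The gap is your justification of the shift register: you claim that, because the universal approximation theorem holds for $\sigma$, a \emph{single} componentwise application of $\sigma$ approximates the identity after affine rescaling, i.e.\ $a\sigma(wx+b)+c\approx x$ uniformly on a bounded interval. This does not follow from the hypothesis. The universal approximation theorem holds for every continuous non-polynomial $\sigma$ (Theorem~\ref{thm:PinkusUniversal}), but single-neuron identity approximation requires $\sigma$ to look affine at some location and scale --- essentially differentiability with nonzero derivative at some point. A Weierstrass-type nowhere differentiable $\sigma$ is continuous and non-polynomial, hence admissible for the theorem, yet on every interval its best affine approximation error stays above a fixed fraction of its oscillation, so no affine rescaling of one neuron converges to the identity; your fallback remark about the ReLU does not cover the stated hypothesis. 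The repair stays within your framework: store each memory coordinate $v$ redundantly in several neurons $\sigma(w_jv+b_j)$, $j=1,\dots,k$, and let the \emph{next} time step reconstruct $v$ approximately through the linear combination $\sum_j a_j\sigma(w_jv+b_j)$ formed inside its pre-activation $\mathbf{A}h_{t-1}+\mathbf{B}x_t$ --- this is the universal approximation theorem applied to the identity map, legitimate because $\mathbf{A}$ may take arbitrary linear combinations --- after which your error-propagation argument over the $\leq N$ steps goes through by uniform continuity on compact sets. Note that this flexibility is precisely what the paper buys by citing Theorem~\ref{thm:sontag_recurrent neural network}: there the network's hidden state need only be a \emph{continuous re-encoding} (the maps $\alpha$, $\beta$) of the dynamical system's state, never literally equal to it, which absorbs the impossibility of an exact or single-neuron identity.
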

Interestingly, even if Theorem~\ref{thm:sontag_recurrent neural network} only deals with approximations of dynamical systems in finite time, it can still be utilized for a proof of Theorem~\ref{thm:hammer_recurrent neural network}. \cite{hammer2000approximation} outlines a proof based on this idea\footnote{The actual proof given in~\cite{hammer2000approximation} is not based on Theorem~\ref{thm:sontag_recurrent neural network}, which enables the authors to draw some conclusions about the complexity necessary for the recurrent neural network in special situations.}, which we will depict here to underline the close connections of the theorems. The main ingredient is the observation that $\mathbb{P}(\bigcup_{i>i_0}(\R^m)^i)$ converges to zero for $i_0\to\infty$. As a consequence, one only needs to approximate a function $f:(\R^m)^*\to\R^n$ on sequences up to a certain length in order to achieve approximation in probability. The first step of the proof consists now of approximating $f$ by the output of a dynamical system for input sequences up to a certain length. In the second step one can make use of Theorem~\ref{thm:sontag_recurrent neural network} to approximate the dynamical system by a recurrent neural network.

In \cite{khrulkov2017expressive} the expressive power of certain recurrent neural networks, where a connection to a tensor train decomposition can be made, is analyzed.

Network architectures dealing with more structured input like trees or graphs that make also use of recurrent connections and can thus be seen as a generalization of recurrent neural networks have been considered in~\cite{hammer1999approximation}, \cite[Ch.~3]{hammer2000learning}, \cite{bianchini2005recursive} and \cite{scarselli2008computational}. 

%\todo[inline]{Things to check
%\newline Kommas; Zeiten (perfect vs past tense, present vs past, present vs future); Refs: Abbreviations etc.; Overfull boxes}

\section*{Acknowledgments}
The authors would like to thank Philipp Petersen for valuable comments for improving this manuscript. Moreover, they would like to thank Mark Cheng for creating most of the figures and Johannes von Lindheim for providing Figure~\ref{fig:SwissRoll}. 

I.G.\ acknowledges support from the Research Training Group "Differential Equation- and Data-driven Models in Life Sciences and Fluid Dynamics: An Interdisciplinary Research Training Group (DAEDALUS)" (GRK 2433) funded by the German Research Foundation (DFG). G.K.\ acknowledges partial support by the Bundesministerium f\"ur Bildung und Forschung (BMBF) through the Berliner Zentrum for Machine Learning (BZML), Project AP4, by the Deutsche Forschungsgemeinschaft (DFG) through grants CRC 1114 ``Scaling Cascades in Complex Systems'', Project B07, CRC/TR 109 ``Discretization in Geometry and Dynamics'', Projects C02 and C03, RTG DAEDALUS (RTG 2433), Projects P1 and P3, RTG BIOQIC (RTG 2260), Projects P4 and P9, and SPP 1798 ``Compressed Sensing in Information Processing'', Coordination Project and Project Massive MIMO-I/II, by the Berlin Mathematics Research Center MATH+, Projects EF1-1 and EF1-4, and by the Einstein Foundation Berlin.

%\input{BestApproximations}

%\bibliographystyle{abbrv}
%\bibliography{RefsArxiv.bib}

\end{document}